\documentclass[twoside,11pt]{article}

\usepackage[colorlinks = false]{hyperref}   

\usepackage{booktabs}       %
\usepackage{amsfonts}       %
\usepackage{nicefrac}       %
\usepackage{microtype}      %
\usepackage{xcolor}  
\usepackage{amsmath}
\usepackage{amssymb}
\usepackage{graphicx}
\usepackage{subcaption}
\usepackage{relsize}
\usepackage{mathrsfs}
\usepackage{multicol}
\usepackage{wrapfig}
\usepackage{selectp}
\usepackage{comment}
\hypersetup{ hidelinks }

\usepackage[nohyperref]{jmlr2e}

\usepackage{vwcol}

\newcommand{\R}{{\mathbb{R}}}

\newcommand{\esp}{{\mathbb{E}}}

\newcommand{\Prob}{\mathbb{P}}
\newcommand{\Esp}{\mathbb{E}}

\definecolor{ForestGreen}{cmyk}{0.91,0,0.88,0.12}
\colorlet{pierrem}{ForestGreen}

\DeclareMathOperator*{\ReLU}{ReLU}
\DeclareMathOperator*{\Var}{Var}

\ShortHeadings{Scaling ResNets in the Large-depth Regime}{Marion et al.}
\firstpageno{1}

\usepackage{lastpage}
\jmlrheading{26}{2025}{1-\pageref{LastPage}}{6/22; Revised
2/25}{2/25}{22-0664}{Pierre Marion, Adeline Fermanian, Gérard Biau, and Jean-Philippe Vert}
\ShortHeadings{Scaling ResNets in the Large-depth Regime}{Marion, Fermanian, Biau, and Vert}

\begin{document}

\title{Scaling ResNets in the Large-depth Regime}
\author{\name Pierre Marion\thanks{Now at EPFL} 
\email pierre.marion@mines.org \\
\addr Sorbonne Universit\'{e}, CNRS, Laboratoire de Probabilit\'{e}s, Statistique et Mod\'{e}lisation \\ 
F-75005 Paris, France
\AND 
\name Adeline Fermanian \thanks{Now at Califrais}
\email adeline.fermanian@califrais.fr \\
\addr MINES ParisTech, PSL Research University, CBIO, F-75006 Paris, France \\
Institut Curie, PSL Research University, F-75005 Paris, France \\
INSERM, U900, F-75005 Paris, France
\AND G\'{e}rard Biau 
\email gerard.biau@sorbonne-universite.fr \\
\addr Sorbonne Universit\'{e}, CNRS, Laboratoire de Probabilit\'{e}s, Statistique et Mod\'{e}lisation \\
Institut universitaire de France\\
F-75005 Paris, France
\AND Jean-Philippe Vert\thanks{Now at Owkin}
\email jean-philippe.vert@mines.org \\
\addr Google Research, Brain team, Paris, France
}

\editor{Joan Bruna}

\maketitle

\begin{abstract}%
Deep ResNets are recognized for achieving state-of-the-art results in complex machine learning tasks. However, the remarkable performance of these architectures relies on a training procedure that needs to be carefully crafted to avoid vanishing or exploding gradients, particularly as the depth $L$ increases. No consensus has been reached on how to mitigate this issue, although a widely discussed strategy consists in scaling the output of each layer by a factor $\alpha_L$. We show in a probabilistic setting that with standard i.i.d.~initializations, the only non-trivial dynamics is for $\alpha_L = \nicefrac{1}{\sqrt{L}}$---other choices lead either to explosion or to identity mapping. This scaling factor corresponds in the continuous-time limit to a neural stochastic differential equation, contrarily to a widespread interpretation that deep ResNets are discretizations of neural ordinary differential equations. By contrast, in the latter regime, stability is obtained with specific correlated initializations and $\alpha_L = \nicefrac{1}{L}$. Our analysis suggests a strong interplay between scaling and regularity of the weights as a function of the layer index. Finally, in a series of experiments, we exhibit a continuous range of regimes driven by these two parameters, which jointly impact performance before and after training.
\end{abstract}

\begin{keywords}
  ResNets, deep learning theory, neural ODE, neural network initialization, continuous-time models
\end{keywords}

\section{Introduction}
\label{sec:intro}

We begin by introducing the general context on deep residual networks, before stating our contributions and discussing related work.

\subsection{Deep Residual Neural Networks}
Residual neural networks (ResNets), introduced by \citet{he2016deep} in the field of computer vision, were the first deep neural network models successfully trained with several thousand layers. 
Since then, extensive empirical evidence has shown that increasing the depth leads to significant improvements in performance, while raising new challenges in terms of training \citep[e.g.,][]{wang2022deepnet}. From a high-level perspective, the key feature of ResNets is the presence of skip connections between successive layers. In mathematical terms, this means that the $(k+1)$-th hidden state $h_{k+1} \in \R^d$ follows sequentially from the previous hidden state via the recurrence relation 
\begin{equation}\label{eq:abstract_resnet}
h_{k+1} = h_k + f(h_k, \theta_{k+1}), \quad 0 \leqslant k \leqslant L-1,
\end{equation}
where $f(\cdot, \theta_{k+1}):\R^d \to \R^d$ is the layer function parameterized by $\theta_{k+1} \in \R^p$ and $L$ is the number of layers. The skip connection corresponds to the addition of $h_k$ on the right-hand side of~\eqref{eq:abstract_resnet}, which is absent in classical feedforward networks.
This refinement prevents instability issues during training when $L$ is large, provided training is performed in a careful way \citep{He2015DelvingDI}. The idea of adding skip connections has become common practice in the field of deep learning, and is today incorporated in many other models such as Transformers in natural language processing \citep{vaswaniTransformer}. 
For simplicity, in the rest of the paper, we continue to use the terminology ResNets to denote any architecture of the form \eqref{eq:abstract_resnet}, keeping in mind that this framework goes beyond the original model of \citet{he2016deep}.
 
The most common architectures have 50-150 layers, but ResNets can be trained with depths up to the order of thousand layers \citep{he2016identity}. Yet, the training procedure needs to be carefully crafted to avoid vanishing or exploding gradients, particularly as the depth increases. As pointed out by, e.g., \citet{shao2020IsNormalizationIndispensable},
these instabilities are related to a shift in the magnitude of the variance of a signal as it passes through the network. In the original approach of \citet{he2016deep}, the issue was mitigated by adding a normalization step, called batch normalization \citep{ioffe2015batchnorm}, which rescales the output of each layer via centering and unit variance normalization. However, this normalization stage introduces practical and theoretical difficulties, among which computational overhead and strong dependence on the batch size \citep[see][and the references therein]{brock2021characterizing}. A widespread alternative to stabilize training in deep models, explored for example by \citet{yang2017mean}, \citet{arpit2019initialize}, \citet{zhang2019convergence}, and \citet{de2020batchNormBiasesTowardIdentity}, is to incorporate a scaling factor $\alpha_L$ in front of the residual term in \eqref{eq:abstract_resnet}, yielding the model
\begin{equation}
h_{k+1} = h_k + \alpha_L f(h_k, \theta_{k+1}), \quad 0 \leqslant k \leqslant L-1.  \label{eq:scaled-discrete-resnet}
\end{equation}
There is strong evidence that this scaling factor $\alpha_L$ should depend on $L$, without however any consensus to date on the exact form of this dependence, nor on the mathematical grounding of the approach. Thus, despite progresses on the empirical side, the mathematical forces in action behind the stability of deep ResNets are still poorly understood, although they are key to unlock training at arbitrary depth. 

Our goal in the present paper is to take a step forward towards a better theoretical understanding of deep ResNets by providing a thorough probabilistic analysis of the sequence $(h_k)_{0 \leqslant k \leqslant L}$ at initialization when $L$ is large, and by leveraging a continuous-time interpretation of model \eqref{eq:scaled-discrete-resnet} via the so-called neural ordinary differential equation (neural ODE, \citealp{chen2018neural}) paradigm. In a nutshell, our results highlight the intimate connection that exists at initialization between stability of the learning process, the regularity of the weights, and the scaling factor $\alpha_L$. 
We offer in particular a proper mathematical grounding on why and how to choose the parameter $\alpha_L$ as a function of the depth $L$ and the distribution of the weights.

\subsection{Our Contributions}

\textit{Scaling at initialization.}
The optimal parameters of ResNets are learned by minimizing some empirical risk function via a gradient descent algorithm.
As highlighted for example by \citet{yang2017mean}, \citet{hanin2018howToStartTraining}, and  \citet{arpit2019initialize}, a good parameter initialization of this learning phase plays a major role
in the quality of the learned model,  in particular to avoid vanishing gradients and deadlock at initialization, or exploding gradients and quick divergence of the model parameters at the beginning of training. Moreover, a good initialization allows the use of larger learning rates, which have been shown to correlate with better generalization \citep{jastrzkebski2017three}.
It is thus of great interest to study and understand the role played by scaling of deep ResNets at initialization. This is the context in which we place ourselves in the sequel.

At initialization stage, the weights $(\theta_k)_{1 \leqslant k \leqslant L}$ are usually chosen as (realizations of) independent and identically distributed (i.i.d.) random variables, which typically follow a uniform or Gaussian distribution on $\R^p$. Accordingly, the sequence $(h_k)_{0 \leqslant k \leqslant L}$ that results from the recursion \eqref{eq:scaled-discrete-resnet} for a given input to the network takes the form of a sequence of random variables that are not~i.i.d. but are actually a martingale. Thus, denoting informally by $\mathscr{L}$ the differentiable loss associated with the learning task (classification or regression), the distributions of $(h_k)_{0 \leqslant k \leqslant L}$ and $(\frac{\partial \mathscr{L}}{\partial h_k})_{0 \leqslant k \leqslant L}$ as $L$ becomes large carry useful information on the stability of training. 
For instance, exploding gradients in the backpropagation phase of learning correspond to the fact that, with high probability, $\|\frac{\partial \mathscr{L}}{\partial h_0}\| \gg \|\frac{\partial \mathscr{L}}{\partial h_L}\|$, where $\|\cdot\|$ denotes the Euclidean norm.
Our first contribution, in Section \ref{sec:scaling-iid}, is to provide thorough mathematical statements on the behavior of these distributions (both for finite and infinite $L$), depending on the value of $\alpha_L$. Among other results, we show that only the choice $\alpha_L \approx \nicefrac{1}{\sqrt{L}}$ yields a non-trivial behavior at initialization, thereby confirming empirical findings in the literature \citep{arpit2019initialize,de2020batchNormBiasesTowardIdentity}. For $\alpha_L \gg \nicefrac{1}{\sqrt{L}}$, the norms explode exponentially fast with $L$, which is inappropriate for training. For $\alpha_L \ll \nicefrac{1}{\sqrt{L}}$, the network is almost equivalent to identity, that is, $h_L \approx h_0$. The analysis of the different cases as a function of~$\alpha_L$ is mathematically involved and makes extensive use of concentration tools from random matrix theory.

\noindent \textit{The continuous approach.} 
As noticed by several authors \citep{chen2018neural,weinan2019mean,thorpe2018deep}, model \eqref{eq:scaled-discrete-resnet} with a scaling factor $\alpha_L=\nicefrac{1}{L}$ (and not $\nicefrac{1}{\sqrt{L}}$) is formally similar to the discretization of a differential equation. Thus, when $L$ tends to infinity, the weights and hidden states change continuously with the layer according to the equation
\begin{equation} \label{eq:abstract-continuous-resnet}
    \frac{dH_t}{dt} = f(H_t, \Theta_t), \quad t \in [0,1].
\end{equation}
Here, time $t$ is the continuous analogue of the layer index $k$, $H:[0,1] \to \R^d$ is a continuous-time hidden state, and $\Theta:[0,1] \to \R^p$ a continuous-time parameter.
This important connection between ResNets and differential equations has been identified in the past years under the umbrella name of neural ODE.
Since the original article of \citet{chen2018neural}, this point of view has led to the development of a variety of new continuous-time models, together with innovative architectures and efficient training algorithms \citep{chang2018antisymmetricrnn,grathwohl2018ffjord,kidger2021efficient}. The neural ODE paradigm also enabled to leverage the rich theory of differential equations to better understand the mechanisms at work behind deep ResNets \citep{weinan2019mean,fermanian2021framing}. However, there is a debated question in the neural ODE community about the choice $\alpha_L=\nicefrac{1}{L}$, which guarantees convergence of the discrete model \eqref{eq:scaled-discrete-resnet} to its continuous-time counterpart \eqref{eq:abstract-continuous-resnet}. As a matter of fact, it seems that this choice is guided by more mathematical than practical considerations, and several authors have suggested that it is inconsistent with what is done in practice \citep{cohen2021scaling,bayer2022resnetsRoughPath}. Moreover, letting $\alpha_L=\nicefrac{1}{L}$ is somewhat contradictory with the results discussed above, which highlighted that the only non-trivial limit at initialization is $\alpha_L=\nicefrac{1}{\sqrt{L}}$.
Thus, as a second contribution, we clarify the problem  in Section \ref{sec:discrete-to-continuous} by leveraging
our previous results on stability. We show that the value $\alpha_L=\nicefrac{1}{\sqrt{L}}$ corresponds in the continuous world to a neural stochastic differential equation (SDE) of the form \eqref{eq:abstract-continuous-resnet},
where now $\Theta:[0,1] \to \R^p$ takes the form of a continuous-time stochastic process, typically a Brownian motion. By contrast, we also prove that the neural ODE regime with $\alpha_L=\nicefrac{1}{L}$ corresponds to the limit of a ResNet, not with i.i.d.~weights as considered before, but with more complex and correlated weight distributions. For these weight distributions, the scaling $\alpha_L=\nicefrac{1}{L}$ is also a critical value between explosion and identity.

Going further, our third contribution is to exhibit in Section \ref{sec:experiments} a continuous range of regimes that are controlled by the choice of $\alpha_L$ (beyond the cases $\nicefrac{1}{\sqrt{L}}$ and $\nicefrac{1}{L}$)  and the distribution of $(\theta_k)_{1 \leqslant k \leqslant L}$ at initialization, derived from a continuous-time process $\Theta$ with a regularity different from a Brownian motion. More precisely, we show experimentally that there is a strong interplay (with the same three cases---explosion, identity mapping, non-trivial behavior) between the choice of $\alpha_L$ and the regularity of $(\theta_k)_{1 \leqslant k \leqslant L}$ as a function of the layer index $k$. In addition, empirical evidence suggests that this interplay impacts both the behavior and performance of the networks during training, beyond initialization.

\subsection{Related Work}

The choice of scaling for ResNets has been discussed in many papers, without however reaching a clear consensus on the form this scaling factor should take. 
For instance, \citet{hanin2018howToStartTraining} state that stability requires $\alpha_L \leqslant \nicefrac{1}{L}$, while \citet{zhang2019convergence} show that $\alpha_L \leqslant \nicefrac{1}{\sqrt{L}}$ is enough to ensure stability. On the other hand, \citet{cohen2021scaling} claim that the scaling factor observed in practice in trained ResNets is of the form $\nicefrac{1}{L^\beta}$ with $\beta \approx 0.7$. Other authors have proposed more complex choices for $\alpha_L$ \citep[e.g.,][]{zhang2019fixup,shao2020IsNormalizationIndispensable}. Taking another point of view, \citet{de2020batchNormBiasesTowardIdentity} observe that batch normalization is empirically equivalent to taking a $\nicefrac{1}{\sqrt{L}}$ normalization factor. \citet{bachlechner2021rezero} suggest learning a scaling parameter $\alpha_k$ that is allowed to vary from one layer to another, whereas, in \eqref{eq:discrete-resnet}, $\alpha_L$ is kept constant across layers. These authors observe a great acceleration for training compared to traditional ResNets with no scaling. They also suggest a similar architecture for Transformers and then notice that $\alpha_k \approx \nicefrac{1}{L}$ at the end of training.

Closest to our analysis at initialization are the papers of \citet{arpit2019initialize} and \citet{zhang2019convergence}. \citet{arpit2019initialize} develop a theoretical analysis based on mean field approximation that suggests that a scaling factor $\alpha_L = \nicefrac{1}{\sqrt{L}}$ prevents vanishing/exploding gradients at initialization, and provide experimental evidence that this approach is competitive with batch normalization. However, the authors do not provide rigorous mathematical statements for the three different cases $\alpha_L \ll \nicefrac{1}{\sqrt{L}}$, $\alpha_L \approx \nicefrac{1}{\sqrt{L}}$, and $\alpha_L \gg \nicefrac{1}{\sqrt{L}}$, nor do they highlight the connection with the continuous-time interpretation. Interestingly, the idea of exploiting the martingale structure to analyze the magnitude of the hidden states is present in \citet{zhang2019convergence}, who study the convergence of
gradient descent for over-parameterized ResNets with different values of $\alpha_L$. Nevertheless, they consider a specific model with Gaussian weights, and only provide asymptotic results when both width and depth tend to infinity. The asymptotic limit in this particular regime has been studied by, e.g., \citet{allenzhu2019convergence} and \citet{hayou2021stable}. We depart from this point of view by considering a finite-width setting.

The connection between the choice of scaling and the continuous-time point of view has previously been noticed by \citet{zhang2019robustResNet}, then studied in detail by \citet{cohen2021scaling} and \citet{cont2022asymptotic}. These authors show that, under assumptions on the form of the weights, it is possible to derive limiting (stochastic or ordinary) differential equations for the hidden states. However, they do not discuss the transition between these two regimes, nor do they link differential equations regimes with the stability of the network.

\section{Scaling at Initialization} \label{sec:scaling-iid}
Our goal in this section is to study the effect of the scaling factor $\alpha_L$ on the stability of ResNets at initialization, assuming that the weights are i.i.d.~random variables. We start by making more precise the model and the learning problem introduced in \eqref{eq:abstract_resnet}.

\subsection{Model and Assumptions}  \label{subsec:model}

\textit{Model.}
The data is a sample of $n$ 
pairs $(x_i, y_i)_{1 \leqslant i \leqslant n}$, where $x_i$ is the input vector in $\R^{n_{\textnormal{in}}}$ and $y_i \in \R^{n_{\textnormal{out}}}$ is the output vector to be predicted. This setting includes regression and classification (after one-hot encoding of the labels).
Specifying the informal recurrence~\eqref{eq:abstract_resnet}, for any input $x \in \R^{n_\textnormal{in}}$, we consider the output $F_\pi(x) \in \R^{n_\textnormal{out}}$ of the ResNet model defined by
\begin{align}
\begin{split}
    h_0 &= A x, \\
  h_{k+1} &= h_k + \alpha_L V_{k+1} g(h_k, w_{k+1}), \quad 0 \leqslant k \leqslant L-1, \label{eq:discrete-resnet} \\
    F_\pi(x) &= B h_L,
\end{split}
\end{align}
where $\alpha_L > 0$ is the scaling factor of the ResNet and $\pi = (A, B, (w_k)_{1 \leqslant k \leqslant L}, (V_k)_{1 \leqslant k \leqslant L})$ are its parameters, with $A \in \R^{d \times n_{\textnormal{in}}}$, $B \in \R^{n_{\textnormal{out}} \times d}$, $w_{k} \in \R^p$ and $V_{k} \in \R^{d\times d}$ for $k=1,\ldots,L$. The almost-everywhere differentiable function $g: \R^d \times \R^p \to \R^d$ encodes the choice of architecture.
We note that the model includes initial and final linear layers in order to map the input space  $\R^{n_\textnormal{in}}$ into the space of hidden states $\R^d$, and symmetrically to map the last hidden state $h_L$ into the output space $\R^{n_{\textnormal{out}}}$. These two transformations are of little interest to us, since we mostly focus on the behavior of the sequence of hidden states $(h_k)_{0 \leqslant k \leqslant L}$. Let us finally notice that the results of this section can be adapted to hidden layers that do not have the same width, at the cost of increased technicality.

An important feature of model \eqref{eq:discrete-resnet} is that the layer function takes the form of a matrix-vector multiplication, which will prove crucial to make use of concentration results on random matrices. We stress that this setting is standard in practice and that it encompasses many  types of ResNets. It includes for example simple ResNets where $g(h, w) = \sigma(h)$ with $\sigma$ the activation function, and the original ResNets from \citet{he2016deep}, which have
\begin{equation*}
    g(h, w) = \textnormal{ReLU}(W h + b),
\end{equation*}
where the parameter is a pair $w=(W,b)$ with $W \in \R^{d \times d}$ a weight matrix and $b \in \R^d$ a bias, and ReLU: $x \mapsto \max(x,0)$ is applied element-wise. 
This setting also includes attention layers, where $g$ corresponds to the scaled dot-product between keys and queries, as well as convolutional layers. %
The assumptions made below do not cover these latter cases, making it an interesting avenue for future research to adapt our approach to these cases.

Throughout the article, we let $\ell: \R^{n_{\textnormal{out}}} \times \R^{n_{\textnormal{out}}} \rightarrow \R_+$ be a  loss function, differentiable w.r.t.~its first parameter, for example the squared loss or the cross-entropy loss. The objective of learning is to find the optimal parameter $\pi$ that minimizes the empirical risk $\mathscr{L}(\pi) = \sum_{i=1}^n \ell(F_{\pi}(x_i), y_i)$.

\medskip \noindent \textit{Probabilistic setting at initialization.} The minimization of the empirical risk is usually performed by stochastic gradient descent or one of its variants \citep[][Chapter 8]{goodfellow2016deep}.
The gradient descent is initialized by choosing the weights as (realizations of) i.i.d.~random variables. The parameters $w_1, V_1, \dots, w_L, V_L$ in model \eqref{eq:discrete-resnet} are therefore assumed to be an i.i.d.~collection of random variables, where we recall that $w_k \in \R^p$ and $V_{k} \in \R^{d \times d}$  parameterize the $k$-th layer of the network. In this stochastic context, the successive hidden states $h_0, \dots, h_L$ given a fixed input $x$
are also random variables, but their distribution is not i.i.d.---in fact, under our assumptions, this sequence is a martingale. To avoid unnecessary technicalities, we assume that the  sequence $(h_k)_{0 \leqslant k \leqslant L}$ is non-atomic. This is for example the case if the distribution of the parameters is absolutely continuous w.r.t.~the Lebesgue measure. In particular, this ensures that the sequence $(h_k)_{0 \leqslant k \leqslant L}$ almost surely does not hit the non-differentiability points of $g$.

It is stressed that the distribution of the parameters are assumed to be independent of the depth, so that all the dependence on $L$ is captured in the scaling factor $\alpha_L$. This model enables us to consider multiple architectures at once, via the function $g$. By contrast, some authors formulate the problem of scaling as a choice of the variance at initialization \citep[e.g.,][]{yang2017mean,wang2022deepnet}, which makes the analysis architecture-dependent. However, for a given architecture, these two approaches are essentially equivalent since $\Var(\alpha_L V_k) = \alpha_L^2 \Var(V_k)$. 

The quantity $\|h_L-h_0\| /\|h_0\|$ carries key information on the behavior of the network at initialization. On the one hand, if $\|h_L-h_0\| \ll \|h_0\|$, the network is essentially equal to the identity function. On the other hand, if $\|h_L-h_0\| \gg \|h_0\|$, the output of the network explodes. 
An intermediate situation is when $\|h_L-h_0\| \approx \|h_0\|$. In addition, another source of information is provided by the gradients of the hidden states with respect to the empirical risk~$\mathscr{L}$. If  $\|\frac{\partial \mathscr{L}}{\partial h_0} - \frac{\partial \mathscr{L}}{\partial h_L}\| \ll \|\frac{\partial \mathscr{L}}{\partial h_L}\|$, the gradients do not change as they flow through the network, which means that the exact same information is backpropagated throughout the network. Conversely, if $\|\frac{\partial \mathscr{L}}{\partial h_0} - \frac{\partial \mathscr{L}}{\partial h_L}\| \gg \|\frac{\partial \mathscr{L}}{\partial h_L}\|$, the gradients explode during backpropagation.
By exploiting the martingale structure of $(\|h_k\|)_{0 \leqslant k \leqslant L}$, as well as state-of-the-art concentration inequalities for random matrices with sub-Gaussian entries, we provide in this section probabilistic bounds on the magnitude of these various quantities.

\medskip \noindent \textit{Assumptions.} Some assumptions are needed on the choices of architecture and initialization. Recall that a centered real-valued random variable $X$ is said to be $s^2$ sub-Gaussian \citep[Chapter 3]{vanHandel2016} if for all $\lambda \in \R$, $\Esp(\exp(\lambda X)) \leqslant\exp(\nicefrac{\lambda^2s^2}{2})$. 
The sub-Gaussian property is a constraint on the tail of the probability distribution. As an example, Gaussian random variables on the real line are sub-Gaussian and so are bounded random variables. 

The following assumptions will be needed throughout the section: for any $1 \leqslant k \leqslant L$,
\begin{itemize}
    \item[$(A_1)$] For some $s \geqslant 1$, the entries of $\sqrt{d}V_{k}$ are centered i.i.d.~$s^2$ sub-Gaussian random variables, independent of $d$ and $L$, with unit variance.
    \item[$(A_2)$] For some $C > 0$, independent of $d$ and $L$, and for any $h \in \R^d$,
    \[ \frac{\|h\|^2}{2} \leqslant\Esp\big(\|g(h, w_{k})\|^2\big) \leqslant\|h\|^2 \quad \text{and} \quad \Esp\big(\|g(h, w_{k})\|^8\big) \leqslant C \|h\|^8.\]
\end{itemize}

Assumption $(A_1)$ is mild and satisfied by all initializations used in practice. For example, the classical Glorot initialization \citep{glorot2010training}---which is the default implementation in the Keras package \citep{chollet2015keras}---takes the entries of $V_{k}$ as uniform $\mathcal{U}(-\sqrt{3/d}, \sqrt{3/d})$ variables. This means that $\sqrt{d}V_{k}$ is initialized with $\mathcal{U}(-\sqrt{3}, \sqrt{3})$ random variables, which satisfy $(A_1)$.
Other examples include the Gaussian $\mathcal{N}(0, 1/d)$ initialization of \citet{He2015DelvingDI} and, for example, initialization with Rademacher variables.

The first part of Assumption $(A_2)$ ensures that $g( \cdot, w_{k})$ is not too far away from being an isometry in expectation. The second part is more technical and, roughly, allows to upper bound the deviations of the norm of $g(h_{k-1}, w_{k})$. Our next Proposition \ref{prop:standard-resnet-verifies-assumptions} shows that most classical ResNet architectures verify Assumption $(A_2)$. For the sake of readability, these models, together with their parameters, are summarized in Table \ref{tab:examples} below.
\begin{table}[ht]
    \centering
    \begin{tabular}{llll}
    \toprule
    {\bf} & {\bf Name} & {\bf Recurrence relation} & {\bf Parameters of $g$} \\
    \midrule
    \texttt{res-1}  & Simple ResNet & $h_{k+1} = h_k + \alpha_L V_{k+1} \sigma (h_k)$ & $w_{k+1} = \emptyset$ \\
    \texttt{res-2}  & Parametric ResNet & $h_{k+1} = h_k + \alpha_L V_{k+1} \sigma (W_{k+1}h_k)$ & $w_{k+1} = W_{k+1}$ \\
    \texttt{res-3}  & Original ResNet & $h_{k+1} = h_k + \alpha_L V_{k+1} \ReLU (W_{k+1} h_k)$ & $w_{k+1} = W_{k+1}$ \\
      \bottomrule
    \end{tabular}
    \caption{Examples of ResNet architectures considered in the paper. In the first two cases, the activation function $\sigma$ is such that, for all $x \in \R, a |x| \leqslant|\sigma(x)| \leqslant b |x|$, $\nicefrac{1}{\sqrt{2}} \leqslant a < b \leqslant1$. In the last two cases, $W_{k+1} \in \R^{d \times d}$.}
    \label{tab:examples}
\end{table}

\begin{proposition} \label{prop:standard-resnet-verifies-assumptions}
Let \texttt{res-1}, \texttt{res-2}, and \texttt{res-3} be the models defined in Table \ref{tab:examples}. Then
\begin{itemize}
    \item[$(i)$] Assumption $(A_2)$ is satisfied for \texttt{res-1}.
    \item[$(ii)$] Assumption $(A_2)$ is satisfied for \texttt{res-2} and \texttt{res-3}, as soon as the entries of $\sqrt{d}W_{k+1},$ $0 \leqslant k \leqslant L-1,$ are centered i.i.d.~sub-Gaussian random variables, independent of $d$ and $L$, with unit variance.
\end{itemize}
\end{proposition}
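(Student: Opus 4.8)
The plan is to separate the deterministic model \texttt{res-1}, which is immediate, from the random models \texttt{res-2} and \texttt{res-3}, where the work lies in controlling moments of $W_{k+1}h$. For \texttt{res-1} there is no parameter $\theta$, and the pointwise bound $a|x|\leqslant|\sigma(x)|\leqslant b|x|$ gives $a^2x^2\leqslant\sigma(x)^2\leqslant b^2x^2$ coordinatewise; summing over the $d$ coordinates yields $a^2\|h\|^2\leqslant\|\sigma(h)\|^2\leqslant b^2\|h\|^2$. Since $a\geqslant\nicefrac{1}{\sqrt2}$ and $b\leqslant1$, the first part of $(A_2)$ holds deterministically, hence in expectation, and raising $\|\sigma(h)\|^2\leqslant\|h\|^2$ to the fourth power gives the eighth-moment bound with $C=1$.

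For \texttt{res-2} and \texttt{res-3}, I would set $M=\sqrt{d}\,W_{k+1}$, so that $M$ has i.i.d.\ symmetric unit-variance sub-Gaussian entries and $W_{k+1}h=\nicefrac{1}{\sqrt d}\,Mh$. Writing $S_j=\langle M_j,h\rangle$ for the $j$-th row, the variables $S_j$ are i.i.d.\ across $j$ and symmetric (by symmetry of the entries), with $\Esp(S_j^2)=\|h\|^2$, so each coordinate $z_j:=(W_{k+1}h)_j$ satisfies $\Esp(z_j^2)=\nicefrac{\|h\|^2}{d}$. For the first inequality of $(A_2)$: in \texttt{res-3}, symmetry of $z_j$ gives $\Esp(\ReLU(z_j)^2)=\tfrac12\Esp(z_j^2)$, hence $\Esp(\|g(h,\theta)\|^2)=\tfrac12\|h\|^2$; in \texttt{res-2}, the pointwise bounds on $\sigma$ give $a^2\|h\|^2\leqslant\Esp(\|g(h,\theta)\|^2)\leqslant b^2\|h\|^2$, which again lies in $[\nicefrac{\|h\|^2}{2},\|h\|^2]$.

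The substance is the eighth-moment bound. Since $b\leqslant1$ and $0\leqslant\ReLU(x)\leqslant|x|$, in both models $\|g(h,\theta)\|^2\leqslant\|W_{k+1}h\|^2$, so it suffices to show $\Esp(\|W_{k+1}h\|^8)\leqslant C\|h\|^8$. I would write $\|W_{k+1}h\|^2=\nicefrac{1}{d}\sum_{j=1}^d S_j^2$ and expand
\[
\Esp\big(\|W_{k+1}h\|^8\big)=\frac{1}{d^4}\,\Esp\Big[\Big(\sum_{j=1}^d S_j^2\Big)^{4}\Big].
\]
Since $S_j$ is a sum of independent mean-zero sub-Gaussian variables with variance proxies proportional to $h_i^2$, it is itself sub-Gaussian with variance proxy a fixed multiple of $\sum_i h_i^2=\|h\|^2$, whence $\Esp(S_j^{2m})\leqslant C_m\|h\|^{2m}$ with constants $C_m$ independent of $d$ and $L$. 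Expanding the fourth power and grouping index tuples by the partition they induce (patterns $4$, $3{+}1$, $2{+}2$, $2{+}1{+}1$, $1{+}1{+}1{+}1$), each group contributes (number of tuples) $\times$ (product of moments) by independence of the $S_j$, and every such term is bounded by $(\text{const})\cdot d^4\|h\|^8$. Dividing by $d^4$ yields $\Esp(\|W_{k+1}h\|^8)\leqslant C\|h\|^8$ with $C$ depending only on the entrywise sub-Gaussian parameter.

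I expect the main obstacle to be precisely this last combinatorial bookkeeping: one must verify that the leading contribution comes from the $1{+}1{+}1{+}1$ pattern, which has $\sim d^4$ tuples each contributing $(\Esp S_j^2)^4=\|h\|^8$, and that every other pattern—although involving higher moments $\Esp(S_j^{2m})$—carries a strictly smaller power of $d$, so that after division by $d^4$ the constant $C$ is genuinely independent of $d$ and $L$. The sub-Gaussian assumption is exactly what guarantees the finiteness and $d$-independence of the $C_m$, and is therefore essential to close the argument.
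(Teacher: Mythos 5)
Your proposal is correct and follows the same overall strategy as the paper: handle \texttt{res-1} by the pointwise bound on $\sigma$ with $C=1$, reduce the first part of $(A_2)$ for \texttt{res-2}/\texttt{res-3} to $\Esp(\|Wh\|^2)=\|h\|^2$ plus the symmetry trick $\Esp(\ReLU(z)^2)=\tfrac12\Esp(z^2)$ for symmetric $z$, and reduce the eighth-moment bound to $\Esp(\|Wh\|^8)\leqslant C\|h\|^8$, proved by expanding a power of $\sum_j S_j^2$ and using independence of the rows together with moment bounds $\Esp(S_j^{2m})\leqslant C_m\|h\|^{2m}$. The one place where you genuinely diverge is how those row-moment bounds are obtained: the paper derives a tail bound on $\langle W_j,h\rangle/\|h\|$ from a sub-Gaussian extension of McDiarmid's inequality due to Kontorovich (its Lemmas on the sub-Gaussian diameter and on linear/quadratic forms), and then converts the tail bound into moments; you instead invoke the elementary fact that a linear combination of independent centered sub-Gaussian variables is sub-Gaussian with variance proxy $s^2\sum_i h_i^2$, which gives the same $\Esp(S_j^{2m})\leqslant C_m\|h\|^{2m}$ directly. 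Your route is more self-contained and avoids an external concentration theorem, at the cost of not producing the explicit numerical constants the paper tracks (which it reuses in Proposition~\ref{prop:init-forward-high-prob2}). A small point in your favor: the correct expansion for the eighth moment is indeed the \emph{fourth} power of $\sum_j \langle W_j,h\rangle^2$, as you write, with the all-distinct index pattern giving the leading $d^4\|h\|^8$ term and all other partitions contributing lower powers of $d$; your combinatorial bookkeeping is the right one and closes the argument.
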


In the models \texttt{res-1} and \texttt{res-2}, $\sigma$ can be, for instance, taken as the parametric ReLU function, i.e., $\sigma(x) = x_+ + s x_-$, where $x_+$ (resp. $x_-$) denotes the positive (resp. negative) part and the slope $s \in [\nicefrac{1}{\sqrt{2}}, 1]$ is a parameter of the model.
Parametric ReLU, also known as leaky ReLU \citep{maas2013rectifier}, has been shown to outperform standard ReLU for image datasets \citep{xu2015empirical}.
Observe also that \texttt{res-2} differs from \texttt{res-3} since the classical ReLU function is defined by $\ReLU(x) = x_+$ and thus does not satisfy the condition $|\sigma(x)| \geqslant a|x|$.  
Note that there is no bias term in these three models, as this term is commonly initialized to zero, and we are interested in the behavior at initialization.

\begin{remark}
An important architecture that is not covered by our setting is $h_{k+1} = h_k + \alpha_L \sigma(W_{k+1}h_k)$, where compared to \texttt{res-1} the weight matrix is located inside the activation function $\sigma$. This is due to the fact that the residual branch writes as a matrix-vector product in our model \eqref{eq:discrete-resnet}, which enables us to leverage matrix concentration inequalities (see, e.g., Lemmas \ref{lemma:bound-deviations-linear} and \ref{lemma:bound-deviations-quadratic}). However, we check numerically that qualitatively similar results are obtained in this case (see Appendix~\ref{apx:experimental-setting}). This leads us to believe that similar concentration results should hold true when the matrix-vector product is followed by an element-wise activation function. We leave this mathematical study for future work.
\end{remark}

\subsection{Probabilistic Bounds on the Norm of the Hidden States} \label{subsec:prob-bounds-hidden-states}

The next two propositions describe how the quantity $\|h_L - h_0\|/\|h_0\|$ changes as a function of $L \alpha_L^2$. Proposition \ref{prop:init-forward-high-prob} provides a high-probability bound of interest when $L \alpha_L^2 \ll 1$. In this case, we see that, with high probability, the network acts as the identity function, directly mapping $h_0$ to $h_L$.
On the other hand, Proposition \ref{prop:init-forward-high-prob2} provides information in the two cases $L \alpha_L^2 \gg 1$ and $L \alpha_L^2 \approx 1$. When $L \alpha_L^2 \gg 1$, the lower bound $(i)$ indicates an explosion with high probability of the norm of the last hidden state. On the other hand, when $L \alpha_L^2 \approx 1$, the bounds $(i)$ and $(ii)$ show that $h_L$ randomly varies around $h_0$ with fluctuation sizes bounded from below and above.

\begin{proposition} \label{prop:init-forward-high-prob}
Consider a ResNet \eqref{eq:discrete-resnet} such that Assumptions $(A_1)$ and $(A_2)$ are satisfied. If $L \alpha_L^2 \leqslant1$, then, for any $\delta \in (0, 1)$, with probability at least $1-\delta$,
$$\frac{\|h_L - h_0\|^2}{\|h_0\|^2} \leqslant\frac{2 L \alpha_L^2}{\delta}.$$
\end{proposition}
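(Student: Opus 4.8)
The plan is to control the second moment $\Esp\bigl(\norm{h_L-h_0}^2\mid\trib_0\bigr)$ and then conclude by Markov's inequality. I would introduce the filtration $\trib_k$ generated by $A, V_1,\theta_1,\dots,V_k,\theta_k$, so that $h_0=Ax$ is $\trib_0$-measurable and $\norm{h_0}^2$ factors out of every conditional expectation given $\trib_0$. Since the entries of $V_{k+1}$ are symmetric, hence centered, and independent of both $\theta_{k+1}$ and $\trib_k$, the increment has vanishing conditional mean,
\[ \Esp\bigl(V_{k+1}\,g(h_k,\theta_{k+1})\mid\trib_k\bigr)=0, \]
so that $(h_k)_{0\le k\le L}$ is an $\R^d$-valued martingale and $(h_k-h_0)_{0\le k\le L}$ is a martingale started at the origin.

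The crux is the conditional second moment of a single increment. Expanding $\norm{V_{k+1}g}^2=\sum_{i}(\sum_j (V_{k+1})_{ij}g_j)^2$ and using that the entries of $\sqrt d\,V_{k+1}$ are uncorrelated with unit variance (Assumption $(A_1)$), the off-diagonal terms vanish and the $d$ factors of $1/d$ produced by the row sum cancel the dimension, yielding
\[ \Esp\bigl(\norm{V_{k+1}\,g(h_k,\theta_{k+1})}^2\mid\trib_k\bigr)=\Esp\bigl(\norm{g(h_k,\theta_{k+1})}^2\mid\trib_k\bigr)\le\norm{h_k}^2, \]
where the last inequality is the upper bound in $(A_2)$. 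This dimension-free cancellation is the heart of the argument and the single place where $(A_1)$ is genuinely used; notably, neither the lower bound nor the eighth-moment bound in $(A_2)$ is needed here.

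I would then combine this with the $L^2$-orthogonality of martingale increments. Writing $h_L-h_0=\sum_{k=0}^{L-1}\alpha_L V_{k+1}g(h_k,\theta_{k+1})$, the cross terms vanish under $\Esp(\cdot\mid\trib_0)$, so
\[ \Esp\bigl(\norm{h_L-h_0}^2\mid\trib_0\bigr)=\alpha_L^2\sum_{k=0}^{L-1}\Esp\bigl(\norm{V_{k+1}g(h_k,\theta_{k+1})}^2\mid\trib_0\bigr)\le\alpha_L^2\sum_{k=0}^{L-1}\Esp\bigl(\norm{h_k}^2\mid\trib_0\bigr). \]
The same conditional computation applied to $\norm{h_{k+1}}^2$ gives $\Esp(\norm{h_{k+1}}^2\mid\trib_0)\le(1+\alpha_L^2)\Esp(\norm{h_k}^2\mid\trib_0)$, hence $\Esp(\norm{h_k}^2\mid\trib_0)\le(1+\alpha_L^2)^k\norm{h_0}^2$. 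Summing the resulting geometric series leaves $\Esp(\norm{h_L-h_0}^2\mid\trib_0)\le\bigl((1+\alpha_L^2)^L-1\bigr)\norm{h_0}^2$.

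To finish, I would use $(1+\alpha_L^2)^L\le e^{L\alpha_L^2}$ together with the elementary bound $e^x-1\le 2x$ for $x\in[0,1]$, which applies precisely because $L\alpha_L^2\le 1$; this yields $\Esp(\norm{h_L-h_0}^2\mid\trib_0)\le 2L\alpha_L^2\norm{h_0}^2$. Markov's inequality applied conditionally on $\trib_0$ to the nonnegative variable $\norm{h_L-h_0}^2$ then gives $\Prob\bigl(\norm{h_L-h_0}^2>\tfrac{2L\alpha_L^2}{\delta}\norm{h_0}^2\mid\trib_0\bigr)\le\delta$, and integrating over $\trib_0$ delivers the statement. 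I expect the main obstacle to be the careful justification of the conditional identity for $\Esp(\norm{V_{k+1}g}^2\mid\trib_k)$, namely correctly handling the joint independence of $V_{k+1}$, $\theta_{k+1}$ and $\trib_k$ and making the cancellation of the dimension $d$ rigorous, whereas the geometric-series estimate and the final Markov step are routine.
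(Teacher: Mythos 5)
Your proposal is correct and follows essentially the same route as the paper: the paper's Lemma~\ref{lemma:technical-proof-3} likewise expands $h_L-h_0=\alpha_L\sum_{k}V_{k+1}g(h_k,\theta_{k+1})$, kills the cross terms by conditioning, bounds each increment's second moment by $\Esp(\|h_k\|^2)\leqslant(1+\alpha_L^2)^k\|h_0\|^2$ via $(A_1)$ and the upper bound in $(A_2)$, sums the geometric series to get $(1+\alpha_L^2)^L-1\leqslant\exp(L\alpha_L^2)-1\leqslant 2L\alpha_L^2$, and concludes with Markov's inequality. Your explicit martingale/filtration framing and the conditional treatment of $\|h_0\|$ are just a slightly more formal presentation of the same computation.
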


\begin{proposition} \label{prop:init-forward-high-prob2}
Consider a ResNet \eqref{eq:discrete-resnet} such that Assumptions $(A_1)$ and $(A_2)$ are satisfied. 

\begin{itemize}
    \item[$(i)$] Assume that $d \geqslant 64$ and $\alpha_L^2 \leqslant\frac{2}{(\sqrt{C}s^4 + 4 \sqrt{C} + 16 s^4)d}$. Then, for any $\delta \in (0, 1)$, with probability at least $1-\delta$,
\[
\frac{\|h_L - h_0\|^2}{\|h_0\|^2} > \exp \left(\frac{3L\alpha_L^2}{8} - \sqrt{\frac{11L\alpha_L^2}{d\delta}}\right) - 1,
\]
provided that
\begin{equation}    \label{eq:condition-prop-init-forward}
2L \exp\left(-\frac{d}{64\alpha_L^2s^2}\right) \leqslant\frac{\delta}{11}.    
\end{equation}
    \item[$(ii)$] Assume that $\alpha_L^2 \leqslant \frac{1}{\sqrt{C}(d + 128 s^4)}$. Then, for any $\delta \in (0, 1)$, with probability at least $1-\delta$,
\[
\frac{\|h_L-h_0\|^2}{\|h_0\|^2} < \exp \left(L\alpha_L^2 + \sqrt{\frac{5L\alpha_L^2}{d\delta}} \right) + 1.
\]
\end{itemize}

\end{proposition}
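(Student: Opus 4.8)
The plan is to track the \emph{multiplicative} evolution of the squared norm $\|h_k\|^2$ and reduce both bounds to the control of a single martingale, exploiting that $V_{k+1}$ and $\theta_{k+1}$ are independent of the filtration $\mathcal{F}_k = \sigma(h_0, \dots, h_k)$. Whereas Proposition~\ref{prop:init-forward-high-prob} only needs a first-moment (Markov) estimate, the exponential behaviour here forces a sharper, layer-by-layer analysis.

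First I would expand one layer. With $g_k = g(h_k, \theta_{k+1})$,
\[
\|h_{k+1}\|^2 = \|h_k\|^2\big(1 + X_k\big), \qquad X_k = 2\alpha_L\frac{\langle h_k, V_{k+1}g_k\rangle}{\|h_k\|^2} + \alpha_L^2\frac{\|V_{k+1}g_k\|^2}{\|h_k\|^2}.
\]
By $(A_1)$ the entries of $V_{k+1}$ are centred, independent and of variance $1/d$; combined with independence from $\mathcal{F}_k$ this makes the cross term conditionally centred and gives $\Esp(\|V_{k+1}g_k\|^2 \mid \mathcal{F}_k) = \Esp(\|g_k\|^2 \mid \mathcal{F}_k)$. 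Hence $(A_2)$ yields
\[
\Esp(X_k \mid \mathcal{F}_k) = \alpha_L^2\frac{\Esp(\|g_k\|^2 \mid \mathcal{F}_k)}{\|h_k\|^2} \in \Big[\tfrac{\alpha_L^2}{2},\, \alpha_L^2\Big],
\]
while the isotropy of $V_{k+1}$ and the fourth/eighth-moment bounds of $(A_2)$ give $\Esp(X_k^2 \mid \mathcal{F}_k)$ of order $\alpha_L^2/d$ (the eighth moment is needed to tame the fluctuations of $\|V_{k+1}g_k\|^2$).

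Next I would telescope through the logarithm, $\log\big(\|h_L\|^2/\|h_0\|^2\big) = \sum_{k=0}^{L-1}\log(1+X_k)$, on the event $\mathcal{E}$ where every $|X_k|$ is small enough for $x - x^2 \leq \log(1+x) \leq x$ to apply. This sandwiches the sum between $\sum_k X_k - \sum_k X_k^2$ and $\sum_k X_k$. Splitting $\sum_k X_k = \sum_k \Esp(X_k \mid \mathcal{F}_k) + M_L$, the drift lies in $[\tfrac12 L\alpha_L^2, L\alpha_L^2]$, whereas the martingale $M_L$ and the quadratic correction $\sum_k X_k^2$ are of conditional $L^2$-size $O(L\alpha_L^2/d)$; a Chebyshev / $L^2$ maximal-inequality argument bounds their deviations by $\sqrt{L\alpha_L^2/(d\delta)}$ with probability $1-\delta$, which produces the radicals $\sqrt{11 L\alpha_L^2/(d\delta)}$ and $\sqrt{5 L\alpha_L^2/(d\delta)}$ and the $1/\delta$ (rather than logarithmic) tail. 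The lower constant $\tfrac38 = \tfrac12\cdot\tfrac34$ arises because, under $d \geq 64$ and the assumed bound on $\alpha_L^2$, the second-order correction $-\tfrac12\Esp(X_k^2\mid\mathcal{F}_k)$ eats at most a quarter of the per-step drift $\tfrac12\alpha_L^2$; the upper bound needs no such correction since $\log(1+x)\leq x$. Together this gives, with high probability,
\[
\exp\Big(\tfrac{3L\alpha_L^2}{8} - \sqrt{\tfrac{11L\alpha_L^2}{d\delta}}\Big) \leq \frac{\|h_L\|^2}{\|h_0\|^2} \leq \exp\Big(L\alpha_L^2 + \sqrt{\tfrac{5L\alpha_L^2}{d\delta}}\Big).
\]

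The event $\mathcal{E}$ is secured by operator-norm concentration for the sub-Gaussian matrices $V_{k+1}$: a tail of the form $\exp(-d/(64\alpha_L^2 s^2))$ per layer, unionised over the $L$ layers, is exactly condition~\eqref{eq:condition-prop-init-forward}, and $d\geq 64$ together with the explicit bound on $\alpha_L^2$ guarantees the increments $X_k$ are small enough both to justify the logarithmic expansion and to keep the $O(1/d)$ corrections below the $\tfrac38 L\alpha_L^2$ drift. Finally, to pass from $\|h_L\|^2$ to $\|h_L - h_0\|^2$ I would use $\|h_L\|^2 = \|h_0\|^2 + \|h_L - h_0\|^2 + 2\langle h_0, h_L - h_0\rangle$; the cross term is once more a conditionally centred martingale that concentrates by the same second-moment argument and is absorbed into the radicals, so that the additive $-\|h_0\|^2$ produces the $\pm 1$ in the two displayed inequalities. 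The main obstacle is precisely this simultaneous bookkeeping: keeping the union-bound failure probability, the deviation of $M_L$, the quadratic correction $\sum_k X_k^2$, and the cross-term fluctuation all strictly below the leading drift while hitting the exact constants, which is where the interplay between \eqref{eq:condition-prop-init-forward}, $d \geq 64$, and the explicit ceiling on $\alpha_L^2$ becomes delicate.
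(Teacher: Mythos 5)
Your proposal follows essentially the same route as the paper's proof: the multiplicative decomposition $\|h_{k+1}\|^2 = \|h_k\|^2(1+Y_k)$, the logarithmic telescoping with the sandwich $x - x^2 \leqslant \ln(1+x) \leqslant x$ applied on the event that no increment falls below $-\nicefrac{1}{2}$, the drift-plus-martingale split controlled by conditional second moments and Chebyshev (yielding the same constants $\nicefrac{3}{8}$, $11$, and $5$), and the per-layer sub-Gaussian tail bound union-bounded into condition \eqref{eq:condition-prop-init-forward}. The only cosmetic differences are that the paper secures the bad event through concentration of the bilinear form $\langle V_{k+1} g(h_k,\theta_{k+1}), h_k\rangle$ rather than an operator-norm bound, and that it passes from $\|h_L\|^2/\|h_0\|^2$ to $\|h_L-h_0\|^2/\|h_0\|^2$ more tersely than your explicit treatment of the cross term $\langle h_0, h_L-h_0\rangle$.
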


We recall that the constants $s$ and $C$ appearing in  Proposition~\ref{prop:init-forward-high-prob2} are defined respectively by Assumptions $(A_1)$ and $(A_2)$.
Moreover, note that the assumptions of Proposition \ref{prop:init-forward-high-prob2} on $d$ and $\alpha_L$ are mild, since in the learning tasks where deep ResNets are involved, one typically has $\alpha_L=\nicefrac{1}{L^\beta}$ with $\beta > 0$, $d \geqslant 10^2$ and $L \geqslant 10^2$.
Furthermore, the assumption on $\alpha_L$ is a technical assumption used to simplify the final expression. It is also possible to derive a proof without this assumption, at the cost of a more intricate result.
Note also that condition \eqref{eq:condition-prop-init-forward} is not severe since, when $d$ and $L$ are large, it encompasses all reasonable values of $\delta$.
Propositions \ref{prop:init-forward-high-prob} and~\ref{prop:init-forward-high-prob2} are interesting in the sense that they provide finite-depth high-probability bounds on the behavior of the hidden states, depending on the magnitude of~$L\alpha_L^2$. The results become clearer by letting $\alpha_L = \nicefrac{1}{L^\beta}$, with $\beta > 0$, as shown in the following corollary.

\begin{corollary}   \label{corollary:forward}
Consider a ResNet \eqref{eq:discrete-resnet} such that Assumptions $(A_1)$ and $(A_2)$ are satisfied, and let $\alpha_L = \nicefrac{1}{L^\beta}$, with $\beta > 0$.
\begin{itemize}
    \item[$(i)$] If $\beta > \nicefrac{1}{2}$, then
\[
\frac{\|h_L - h_0\|}{\|h_0\|} \xrightarrow[L\rightarrow\infty]{\Prob} 0.
\]
    \item[$(ii)$] If $\beta < \nicefrac{1}{2}$ and $d \geqslant 9$, then
\[
\frac{\|h_L - h_0\|}{\|h_0\|} \xrightarrow[L\rightarrow\infty]{\Prob} \infty.
\]
    \item[$(iii)$] If $\beta = \nicefrac{1}{2}$, $d \geqslant 64$, $L \geqslant (\frac{1}{2}\sqrt{C}s^4 + 2 \sqrt{C} + 8 s^4)d + 96 \sqrt{C}s^4$, then, for any $\delta \in (0, 1)$, with probability at least $1-\delta$,
\[
\exp \left(\frac{3}{8} - \sqrt{\frac{22}{d\delta}}\right) - 1 < \frac{\|h_L - h_0\|^2}{\|h_0\|^2} < \exp \left(1 + \sqrt{\frac{10}{d\delta}} \right) + 1,
\]  
provided that
\[
2L \exp \Big( -\frac{Ld}{64s^2} \Big) \leqslant \frac{\delta}{11}.
\]
\end{itemize}
\end{corollary}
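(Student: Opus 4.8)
The plan is to read off all three regimes from the single quantity $L\alpha_L^2 = L^{1-2\beta}$, which tends to $0$ when $\beta > \nicefrac12$, diverges when $\beta < \nicefrac12$, and equals $1$ when $\beta = \nicefrac12$; cases $(i)$ and $(iii)$ are then essentially a translation of Propositions~\ref{prop:init-forward-high-prob} and~\ref{prop:init-forward-high-prob2}, whereas $(ii)$ requires a genuinely separate argument. For $(i)$, since $L\alpha_L^2 = L^{1-2\beta} \to 0$, the hypothesis $L\alpha_L^2 \leqslant 1$ of Proposition~\ref{prop:init-forward-high-prob} holds for $L$ large. To upgrade its high-probability bound into convergence in probability, I fix $\varepsilon, \eta \in (0,1)$, apply the proposition with $\delta = \eta$, and observe that with probability at least $1-\eta$ one has $\|h_L - h_0\|^2/\|h_0\|^2 \leqslant 2L^{1-2\beta}/\eta$; as the right-hand side tends to $0$, it is below $\varepsilon^2$ for $L$ large, so $\Prob(\|h_L-h_0\|/\|h_0\| > \varepsilon) \leqslant \eta$ eventually, which is exactly the stated convergence to $0$ in probability.

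For $(iii)$, setting $\beta = \nicefrac12$ gives $\alpha_L^2 = \nicefrac1L$ and $L\alpha_L^2 = 1$. I would first check that the stated lower bound on $L$ is precisely what makes $\alpha_L^2 = \nicefrac1L$ satisfy both smallness conditions $\alpha_L^2 \leqslant \nicefrac{2}{((\sqrt{C}s^4 + 4\sqrt{C} + 16s^4)d)}$ of Proposition~\ref{prop:init-forward-high-prob2}$(i)$ and $\alpha_L^2 \leqslant \nicefrac{1}{(\sqrt{C}(d + 128s^4))}$ of Proposition~\ref{prop:init-forward-high-prob2}$(ii)$ (a short computation using $d \geqslant 64$ and $s \geqslant 1$), and that condition~\eqref{eq:condition-prop-init-forward}, evaluated at $\alpha_L^2 = \nicefrac1L$, reduces to the displayed tail condition. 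I then apply part $(i)$ at confidence level $\delta/2$ and part $(ii)$ at confidence level $\delta/2$, and combine them by a union bound so that both hold simultaneously with probability at least $1-\delta$. Substituting $L\alpha_L^2 = 1$ together with the replacement $\delta \leftarrow \delta/2$ turns the exponents into $\exp(\nicefrac38 - \sqrt{\nicefrac{22}{(d\delta)}}) - 1$ and $\exp(1 + \sqrt{\nicefrac{10}{(d\delta)}}) + 1$, which is the claimed two-sided bound.

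Case $(ii)$ is where the real work lies: here $L\alpha_L^2 \to \infty$ and one wants divergence in probability under the mild condition $d \geqslant 9$, which is weaker than the $d \geqslant 64$ needed to invoke Proposition~\ref{prop:init-forward-high-prob2}$(i)$ verbatim. My plan is to analyse the logarithm of the squared norm, writing $\log(\|h_L\|^2/\|h_0\|^2) = \sum_{k=0}^{L-1} \log(\|h_{k+1}\|^2/\|h_k\|^2)$, a sum of conditionally controllable increments. Using $(A_1)$ (in particular $\esp(V_{k+1}^\top V_{k+1}) = I_d$ and the vanishing of the terms odd in $V_{k+1}$) together with $(A_2)$, I would expand each increment through $\log(1+u) \geqslant u - \nicefrac{u^2}{2}$ and show that its conditional mean is at least $\alpha_L^2(\tfrac12 - \nicefrac{c}{d}) =: c_0\alpha_L^2$ with $c_0 > 0$ exactly when $d \geqslant 9$, while its conditional variance is of order $\alpha_L^2/d$, dominated by the linear-in-$V_{k+1}$ term $2\alpha_L\langle h_k, V_{k+1} g\rangle/\|h_k\|^2$. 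Summing over layers yields $\esp(\log(\|h_L\|^2/\|h_0\|^2)) \geqslant c_0 L\alpha_L^2$ and $\Var(\log(\|h_L\|^2/\|h_0\|^2)) = O(L\alpha_L^2/d)$, so Chebyshev's inequality gives $\Prob(\log(\|h_L\|^2/\|h_0\|^2) \leqslant \tfrac12 c_0 L\alpha_L^2) = O(\nicefrac{1}{(dL\alpha_L^2)}) \to 0$. Hence $\|h_L\|/\|h_0\| \to \infty$ in probability, and the triangle inequality $\|h_L - h_0\| \geqslant \|h_L\| - \|h_0\|$ transfers this to $\|h_L-h_0\|/\|h_0\|$; for $d \geqslant 64$ one could alternatively just apply Proposition~\ref{prop:init-forward-high-prob2}$(i)$ and let $L \to \infty$, the direct argument serving to reach the full range $d \geqslant 9$.

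The main obstacle is the bookkeeping in case $(ii)$: controlling the higher-order Taylor remainder of $\log(1+u)$ and the fourth moment $\esp(\|V_{k+1} g\|^4 \mid \mathscr{F}_k)$ so that the conditional drift stays bounded below by a positive multiple of $\alpha_L^2$ for every $d \geqslant 9$, and bounding the accumulated fluctuations of the log so that they are negligible against the drift once $L\alpha_L^2 \to \infty$; this is precisely where the eighth-moment part of $(A_2)$ and the sub-Gaussian constant $s$ enter. By contrast, cases $(i)$ and $(iii)$ reduce to substitution, a union bound, and the elementary passage from a high-probability estimate to a statement about limits in probability.
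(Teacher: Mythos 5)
Your proposal is correct and follows essentially the same route as the paper: $(i)$ is read off Proposition~\ref{prop:init-forward-high-prob}, $(iii)$ is a union bound over the two parts of Proposition~\ref{prop:init-forward-high-prob2} each applied at level $\nicefrac{\delta}{2}$, and $(ii)$ amounts to re-running the proof of Proposition~\ref{prop:init-forward-high-prob2}$(i)$ (drift lower bound plus Chebyshev on the log-increments) and observing that the binding constraint is $\tfrac{1}{2}-\tfrac{4}{d}>0$, i.e.\ $d\geqslant 9$---which is precisely the ``inspection of the proof'' that the paper invokes. The only caveat is that the Taylor bound should be $\ln(1+u)\geqslant u-u^2$ for $u\geqslant -\nicefrac{1}{2}$ (with the rare event $Y_k<-\nicefrac{1}{2}$ handled by the sub-Gaussian tail), not $u-\nicefrac{u^2}{2}$, which fails for $u<0$; this does not affect the threshold $d\geqslant 9$.
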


\begin{figure}
    \centering
    \includegraphics[width=\textwidth]{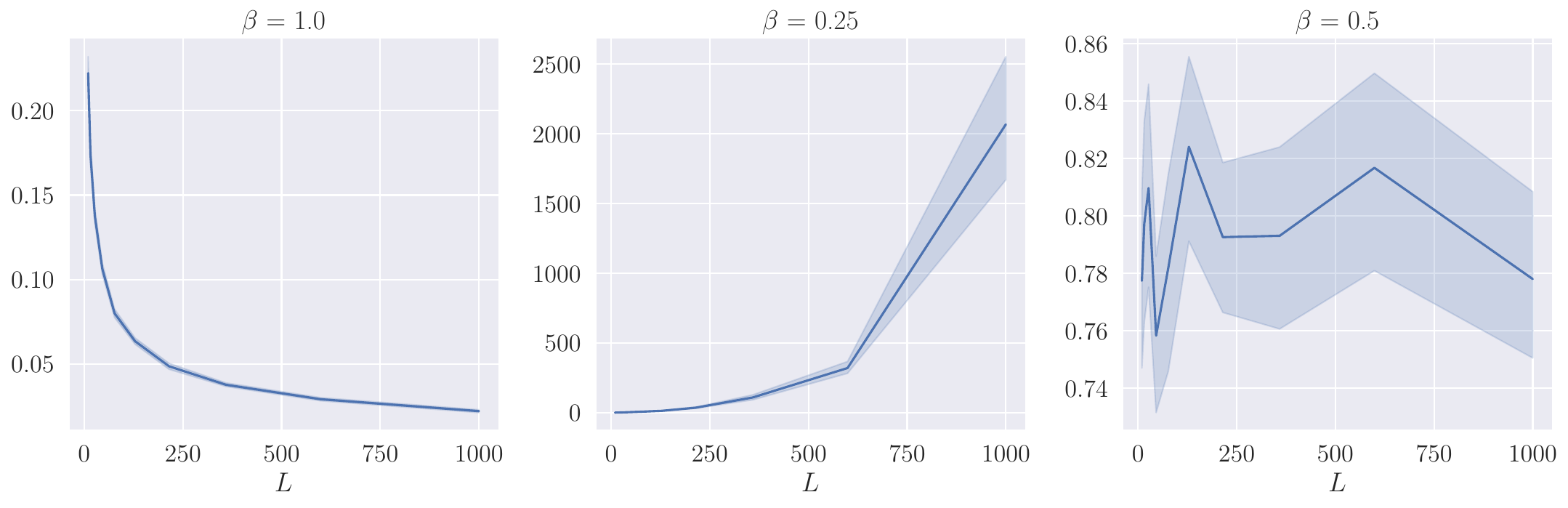}
    \caption{Evolution of $\|h_L - h_0\|/\|h_0\|$ as a function of $L$ for different values of~$\beta$ and an i.i.d.~$\mathcal{U}(-\sqrt{3/d}, \sqrt{3/d})$ initialization of model \texttt{res-3}, with $d=40$. The input is a random Gaussian observation $x$ in dimension $n_{\textnormal{in}} = 64$. The experiment is repeated with $50$ independent randomizations.}
    \label{fig:scaling_init_norm}
\end{figure}

Corollary \ref{corollary:forward} highlights three different asymptotic behaviors for $\|h_L\|$, depending on the values of $\beta$. 
For $\beta > \nicefrac{1}{2}$, statement $(i)$ tells that $h_L$ converges towards $h_0$ in probability, as $L$ tends to infinity, which means that the neural network is essentially equivalent to an identity mapping. 
On the other hand, for $\beta < \nicefrac{1}{2}$, the norm of $h_L$ explodes with high probability.
Finally, for the critical value $\beta = \nicefrac{1}{2}$, we see that $h_L$ fluctuates around $h_0$, with a fluctuation size independent of $L$. Observe that the lower bound in $(iii)$ is not trivial as soon as $\exp(\nicefrac{3}{8} - \sqrt{\nicefrac{11}{d\delta}}) > 1$, i.e., $d > \nicefrac{99}{64\delta}$. 
The message of Corollary \ref{corollary:forward} is that the only scaling leading to a non-degenerate distribution {at initialization} is for $\beta = \nicefrac{1}{2}$. 

\begin{figure}
    \centering
    \begin{subfigure}[b]{0.49\textwidth}    
        \centering
        \includegraphics[width=\textwidth]{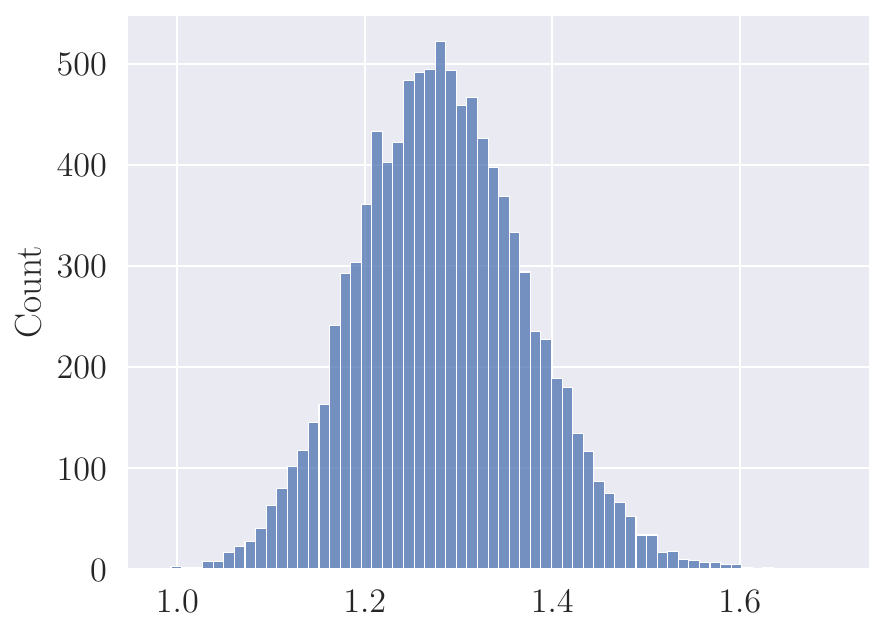}
        \caption{Distribution of $\|h_L\|/\|h_0\|$}
        \label{fig:distribution-forward}
    \end{subfigure}
    \hfill
    \begin{subfigure}[b]{0.49\textwidth}
        \centering
        \includegraphics[width=\textwidth]{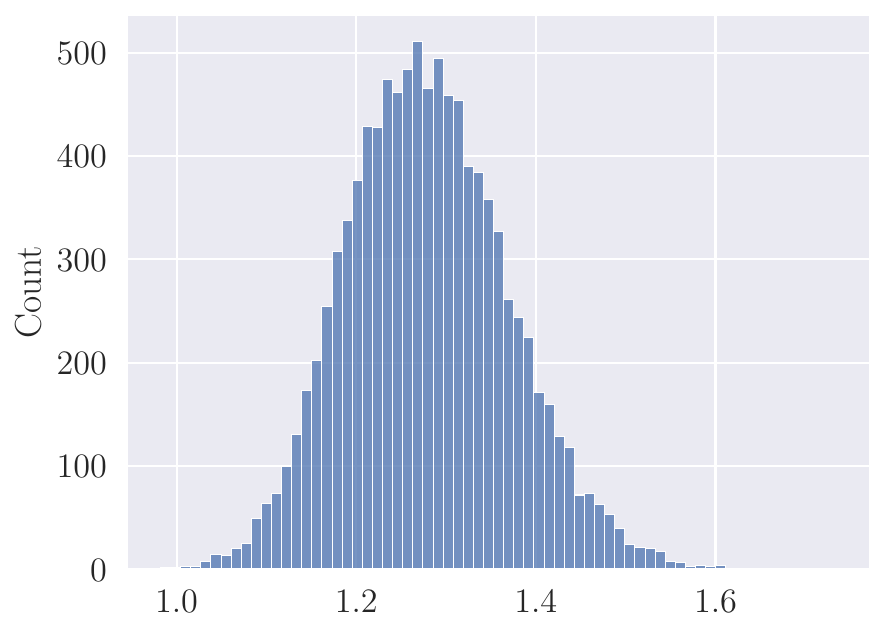}
        \caption{Distribution of $\|\frac{\partial \mathscr{L}}{\partial h_0}\|/ \|\frac{\partial \mathscr{L}}{\partial h_L}\|$}
        \label{fig:distribution-gradients}
    \end{subfigure}
       \caption{Empirical distributions of the norms for $\beta = \nicefrac{1}{2}$, $L=10^3 $, $d=100$. The experiment is repeated with $10^4$ independent randomizations.}
       \label{fig:distributions}
\end{figure}

The three statements of Corollary~\ref{corollary:forward} are illustrated in Figure \ref{fig:scaling_init_norm}. In this experiment, we consider model \texttt{res-3}, a random Gaussian observation $x$ in dimension $n_{\textnormal{in}} = 64$, and parameters initialized with a uniform distribution $\mathcal{U}(-\sqrt{3/d}, \sqrt{3/d})$. We refer to Appendix~\ref{apx:experimental-setting} for a detailed setup of all the experiments of the paper.
Figure \ref{fig:distribution-forward} shows the empirical distribution of $\|h_L\|/\|h_0\|$ when $\beta=\nicefrac{1}{2}$ for a large number of realizations. This figure illustrates in particular that our bounds are reasonably sharp, since the bounds indicate that the first quartile of the distribution is larger than $0.87$ (whereas the first quartile of the empirical histogram is equal to $1.21$) and the third quartile is less than $2.06$ (whereas the third quartile of the empirical histogram is equal to $1.34$). Determining the exact distribution of $\|h_L\|/\|h_0\|$ is an interesting avenue for future research that is beyond the scope of the present article. There is however a strong indication that the ratio follows a log-normal distribution, as confirmed by a normality test on (the log of) the empirical distribution. 

In a nutshell, the proofs of Propositions \ref{prop:init-forward-high-prob} and \ref{prop:init-forward-high-prob2} rest upon controlling of the norm of the hidden states, which obeys the recurrence 
\begin{equation}    \label{eq:rec-norm-hidden-states}
\|h_{k+1}\|^2 = \|h_k\|^2 + \alpha_L^2 \|V_{k+1} g(h_k, w_{k+1})\|^2 + 2 \alpha_L \langle h_k, V_{k+1} g(h_k, w_{k+1}) \rangle, 
\end{equation}
where $\langle \cdot, \cdot \rangle$ denotes the standard scalar product in $\R^d$.
Taking the expectations on both side, one deduces with Assumptions $(A_1)$ and $(A_2)$ that 
\begin{align}    \label{eq:sketch-proof-forward-1}
\Esp \big(\|V_{k+1} g(h_k, w_{k+1})\|^2 \big) &= \Esp\Big(\Esp \big(\|V_{k+1} g(h_k, w_{k+1})\|^2 \big) \, \big| \, h_k, w_{k+1}\Big) \nonumber \\ 
&= \Esp \big(\|g(h_k, w_{k+1})\|^2 \big) \approx \|h_k\|^2    
\end{align}
and 
\begin{equation}    \label{eq:sketch-proof-forward-2}
\Esp \big(\langle h_k, V_{k+1} g(h_k, w_{k+1}) \rangle \big) = \Esp\Big(\Esp\big(\langle h_k, V_{k+1} g(h_k, w_{k+1}) \rangle \big| h_k, w_{k+1}\big)\Big) = 0.
\end{equation}
The equalities \eqref{eq:sketch-proof-forward-1} and \eqref{eq:sketch-proof-forward-2} allow deriving without further work bounds in expectation on $\|h_L\|$, as already observed in an informal manner by \citet{arpit2019initialize}. However, the results we are after are stronger since they involve precise quantification of the fluctuations induced by the initialization through high-probability bounds. A finer control of the deviations of $\|V_{k+1} g(h_k, w_{k+1})\|^2$ and $\langle h_k, V_{k+1} g(h_k, w_{k+1}) \rangle$ is then needed. This involves concentration inequalities on random matrices with sub-Gaussian entries, and martingale arguments. This probabilistic derivation allows improvements over earlier works in the literature that only show stability for $\beta \geq 1$ \citep{hanin2018howToStartTraining,allenzhu2019convergence}. A similar proof technique was already used in \citet{zhang2019convergence} to show in an asymptotic setting explosion of the forward pass for $\beta < \nicefrac{1}{2}$ and stability for $\beta \geq \nicefrac{1}{2}$, in accordance with our results. We extend their result in several ways, most notably by considering a sub-Gaussian initialization relaxing their Gaussian assumption, a more general architecture, and obtaining fully non-asymptotic bounds, while their approach is asymptotic both in width and depth. This makes the mathematical analysis significantly more challenging.  We also introduce a novel distinction between the stability case $\beta=\nicefrac{1}{2}$ and the identity case $\beta > \nicefrac{1}{2}$, showing the presence of non-vanishing fluctuations only for $\beta=\nicefrac{1}{2}$.
For completeness, we note that a revised version of their paper does provide non-asymptotic bounds \citep{zhang2022stabilize}, albeit still in the more restrictive setting of a Gaussian initialization, a specific architecture, and without separating the cases $\beta=\nicefrac{1}{2}$ and $\beta > \nicefrac{1}{2}$.

We next show precise non-asymptotic bounds for the gradients, using the martingale structure of the forward differentiation recurrence, a novel proof technique.

\subsection{Probabilistic Bounds on the Gradients} 

Propositions \ref{prop:init-forward-high-prob} and \ref{prop:init-forward-high-prob2} provide insights on the output of the network when $L$ is large. However, they do not carry information on the backwards dynamics of propagation of the gradients of the loss $p_k = \frac{\partial \mathscr{L}}{\partial h_k} \in \R^d$. Assessing the dynamics of the $(p_k)_{0 \leqslant k \leqslant L}$ as a function of $L$ is important since the behavior of this sequence impacts trainability of the network at initialization. Thus, in this subsection, we are interested in quantifying the magnitude of $\|p_0 - p_L\| / \|p_L\|$, when $L$ is large. Notice that, contrarily to the previous subsection where we were mostly interested in the last hidden state $h_L$, the quantity of interest is now $p_0$ (not $p_L$), the gradient at index $0$. The reason is that the sequence $(p_k)_{0 \leqslant k \leqslant L}$ is defined backwardly, as we will see below. We also stress that $(p_k)_{0 \leqslant k \leqslant L}$ is the sequence of derivatives of the loss w.r.t.~the hidden states $h_k$, and not w.r.t.~the parameters. 
The reason for considering this sequence is that the $p_k$ are involved in the backpropagation algorithm and are therefore essential for assessing the stability of the gradient descent \citep[see, e.g.,][]{arpit2019initialize}.

Analyzing the behavior of the sequence $(p_k)_{0 \leqslant k \leqslant L}$ is challenging since, according to the backpropagation (or reverse-mode differentiation) formula, one has
\begin{equation*} 
p_k = p_{k+1} + \alpha_L \frac{\partial g(h_k, w_{k+1})^\top}{\partial h} V_{k+1}^\top p_{k+1}.
\end{equation*}
Taking the norm,
\[
\|p_k\|^2 = \|p_{k+1}\|^2 + \alpha_L^2 \Big\| \frac{\partial g(h_k, w_{k+1})^\top}{\partial h} V_{k+1}^\top p_{k+1} \Big\|^2 + 2 \alpha_L \Big\langle p_{k+1}, \frac{\partial g(h_k, w_{k+1})^\top}{\partial h} V_{k+1}^\top p_{k+1} \Big\rangle.
\]
Although the equation looks qualitatively similar to \eqref{eq:rec-norm-hidden-states}, it has the unpleasant feature that $p_{k+1}$ depends on the whole sequence of weights $w_1, V_1, \dots,$ $w_{L}, V_{L}$. This forbids applying directly the same proof techniques as for the hidden states due to the lack of adaptation of the $p_k$ to the filtration of the hidden state process.
Simplifying assumptions have sometimes been made to analyze this recurrence equation, for instance assuming independence of $\frac{\partial g(h_k, w_{k+1})}{\partial h}$ and $h_k$ \citep[see, e.g.,][]{yang2017mean}. 
However, such assumption remains a strong requirement, which is not verified for many network architectures (for example model \texttt{res-1}). 
Other authors resort to an $\varepsilon$-net argument \citep{allenzhu2019convergence,zhang2019convergence}.
We tackle the problem from a different point of view and propose an alternative approach based on forward-mode differentiation, valid under a much weaker assumption. The cost we pay is that we obtain results in expectation and not in high probability.

Let us sketch our approach before stating the results more formally.
We denote by $z \in \R^d$ an independent random variable that will be used to assess the magnitude of the gradients. For any $0 \leqslant i,j \leqslant L$, let $\frac{\partial h_j}{\partial h_i} \in \R^{d \times d}$ be the Jacobian matrix of $h_j$ with respect to $h_i$. Recall that the $(m,n)$-th entry of this matrix equals the derivative of the $m$-th coordinate of $h_j$ w.r.t. the $n$-th coordinate of $h_i$. Then, 
letting $q_k(z) = \frac{\partial h_k}{\partial h_0} z$, we have, by the chain rule,
\begin{equation}    \label{eq:general-rel-q_k}
q_{k+1}(z) = \frac{\partial h_{k+1}}{\partial h_{k}} q_k(z) = q_k(z) + \alpha_L V_{k+1} \frac{\partial g(h_k, w_{k+1})}{\partial h} q_{k}(z).
\end{equation}
Identity \eqref{eq:general-rel-q_k}, which is similar to \eqref{eq:discrete-resnet}, expresses $q_{k+1}(z)$ as a function of $q_k(z)$, and therefore respects the flow of information. 
Next, assuming that $z$ is random with a Gaussian distribution, it is possible to express one of our quantities of interest, $\|p_0\| / \|p_L\|$, as a function of the last vector $q_L(z)$. Indeed,
\begin{equation}    \label{eq:forward-diff-main}
\frac{\|p_0\|^2}{\|p_L\|^2}  = \frac{1}{\|p_L\|^2} \Esp_{z \sim \mathcal{N}(0, I_d)} \Big(\big|p_0^\top z\big|^2\Big) = \Esp_{z \sim \mathcal{N}(0, I_d)} \bigg(\Big|\Big(\frac{p_L}{\|p_L\|}\Big)^\top q_L(z)\Big|^2\bigg), 
\end{equation}
where $I_d$ is the identity matrix in $\R^d$ and the second equality is a consequence of 
\[
p_0^\top z = \Big( \frac{\partial \mathscr{L}}{\partial h_0} \Big)^\top z = \Big( \frac{\partial \mathscr{L}}{\partial h_L} \Big)^\top \frac{\partial h_L}{\partial h_0} z = p_L^\top q_L(z).
\]
In summary, the recurrence \eqref{eq:general-rel-q_k} allows us to derive bounds on the norm of $q_L(z)$, which can then transfer to $\|p_0\| / \|p_L\|$ via \eqref{eq:forward-diff-main}. For this, it is necessary to make the following assumption on the ratio $p_L / \|p_L\|$:
\begin{itemize}
    \item[$(A_3)$] Let $b = p_L / \|p_L\|$. Then $\Esp(b | h_L) = 0$ and $\Esp(b^\top b | h_L) = I_d / d$.
\end{itemize}

It is a mild assumption, which is verified for instance if $n_{\textnormal{out}} = 1$ with squared error (for regression) or cross-entropy (for binary classification). In these cases, $p_L / \|p_L\| = \nicefrac{B^\top}{\|B\|_F}$, where $\|\cdot\|_F$ is the Frobenius norm and $B$ is the weight matrix of the last layer. We finally need the following assumption, which is the equivalent of Assumption $(A_2)$ for the gradients.
\begin{itemize}
    \item[$(A_4$)] One has, almost surely,
    \[
    \frac{\|q_k\|^2}{2} \leqslant \Esp \bigg( \Big\| \frac{\partial g(h_k, w_{k+1})}{\partial h} q_{k} \Big\|^2  \Big| h_k, q_{k} \bigg) \leqslant\|q_k\|^2.
    \]
\end{itemize}

Assumption $(A_4)$ is satisfied by all the standard architectures listed in Table \ref{tab:examples}, as shown by the next proposition.

\begin{proposition} \label{prop:standard-resnet-verifies-assumptions-gradients}
Let \texttt{res-1}, \texttt{res-2}, and \texttt{res-3} be the models defined in Table \ref{tab:examples}. Assume that $(A_1)$ is satisfied and $\sigma$ is almost everywhere differentiable, with $a \leqslant \sigma' \leqslant b$. Then
\begin{itemize}
    \item[$(i)$] Assumption $(A_4)$ is satisfied for \texttt{res-1}.
    \item[$(ii)$] Assumption $(A_4)$ is satisfied for \texttt{res-2} and \texttt{res-3}, when the entries of $\sqrt{d}W_{k}, 1 \leqslant k \leqslant L,$ are centered i.i.d. random variables, independent of $d$ and $L$, with unit variance.
\end{itemize}
\end{proposition}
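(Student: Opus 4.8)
The plan is to compute the Jacobian $\frac{\partial g(h_k, \theta_{k+1})}{\partial h}$ explicitly for each architecture, exploit the diagonal-times-matrix structure it inherits, and then take the conditional expectation over the only fresh source of randomness entering $(A_4)$, namely the weight $\theta_{k+1}=W_{k+1}$. By the forward recursion \eqref{eq:general-rel-q_k}, both $h_k$ and $q_k$ are measurable with respect to $\theta_1, V_1, \dots, \theta_k, V_k$ (and the random direction $z$), hence independent of $W_{k+1}$, so conditioning on the pair $(h_k, q_k)$ leaves $W_{k+1}$ with its original i.i.d.\ symmetric distribution. The recurring computational tool is the elementary second-moment identity: if $M$ has i.i.d.\ zero-mean entries of variance $\nicefrac1d$, then $\Esp(\|Mu\|^2) = \|u\|^2$ for every deterministic $u$, obtained by expanding coordinatewise and using independence and symmetry (mean zero) to cancel the cross terms.

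For \texttt{res-1} the situation is deterministic given $(h_k,q_k)$: since $g(h)=\sigma(h)$ acts coordinatewise, $\frac{\partial g}{\partial h}$ is the diagonal matrix $\mathrm{diag}(\sigma'((h_k)_i))$, and $a \leqslant \sigma' \leqslant b$ gives entrywise
\[
a^2 \|q_k\|^2 \leqslant \Big\| \tfrac{\partial g(h_k)}{\partial h} q_k \Big\|^2 \leqslant b^2 \|q_k\|^2 ;
\]
as Table~\ref{tab:examples} forces $\nicefrac1{\sqrt2} \leqslant a < b \leqslant 1$, i.e.\ $a^2 \geqslant \nicefrac12$ and $b^2 \leqslant 1$, Assumption $(A_4)$ holds. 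For \texttt{res-2} the chain rule factors the Jacobian as $\frac{\partial g}{\partial h} = D W_{k+1}$ with $D = \mathrm{diag}(\sigma'((W_{k+1} h_k)_i))$. I would bound pointwise, for each realization of $W_{k+1}$, $a^2 \|W_{k+1} q_k\|^2 \leqslant \|D W_{k+1} q_k\|^2 \leqslant b^2 \|W_{k+1} q_k\|^2$, take the conditional expectation, and apply the building-block identity $\Esp(\|W_{k+1} q_k\|^2 \mid h_k, q_k) = \|q_k\|^2$; the constraints on $a,b$ then close the argument exactly as for \texttt{res-1}.

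The case \texttt{res-3} is where the real difficulty lies, and I expect it to be the main obstacle. Again $\frac{\partial g}{\partial h} = D W_{k+1}$, but now $D = \mathrm{diag}(\mathbf{1}_{(W_{k+1} h_k)_i > 0})$ since $\ReLU' = \mathbf{1}_{\,\cdot\,>0}$ almost everywhere. The indicator $\mathbf{1}_{(W_{k+1}h_k)_i > 0}$ is strongly correlated with the factor $(W_{k+1} q_k)_i^2$, so one cannot factor the expectation and must instead lean on the symmetry of the weights. Writing $w$ for a generic row of $W_{k+1}$, the identity $w \overset{d}{=} -w$ yields $\Esp(\mathbf{1}_{w^\top h_k > 0}(w^\top q_k)^2) = \Esp(\mathbf{1}_{w^\top h_k < 0}(w^\top q_k)^2)$, and summing the two gives $\Esp(\mathbf{1}_{w^\top h_k > 0}(w^\top q_k)^2) = \tfrac12 \Esp((w^\top q_k)^2) = \tfrac1{2d}\|q_k\|^2$. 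Summing over the $d$ rows produces $\Esp(\|D W_{k+1} q_k\|^2 \mid h_k, q_k) = \tfrac12 \|q_k\|^2$, meeting the lower bound of $(A_4)$ with equality and trivially satisfying the upper bound. The delicate point to justify is that the boundary event $\{(W_{k+1}h_k)_i = 0\}$ is null: otherwise the symmetry identity leaks the missing mass $\Esp(\mathbf{1}_{w^\top h_k=0}(w^\top q_k)^2)$ and the lower bound is no longer tight, so it could fail. This is precisely where the global non-atomicity assumption on $(h_k)$ — which guarantees that the hidden states almost surely avoid the kinks of $\ReLU$ — is invoked to discard that event.
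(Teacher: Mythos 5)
Your proposal is correct and follows essentially the same route as the paper: pointwise diagonal bounds for \texttt{res-1}, the factorization of the Jacobian as $DW_{k+1}$ together with $\Esp(\|W_{k+1}q\|^2\mid q)=\|q\|^2$ for \texttt{res-2}, and the sign-symmetry argument giving the exact factor $\nicefrac{1}{2}$ for \texttt{res-3}. Your explicit treatment of the measurability of $(h_k,q_k)$ with respect to the earlier weights and of the null boundary event $\{(W_{k+1}h_k)_i=0\}$ is slightly more careful than the paper's write-up, but the substance is identical.
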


The next two propositions are the counterparts of Proposition \ref{prop:init-forward-high-prob} and Proposition \ref{prop:init-forward-high-prob2} for the gradient dynamics.
\begin{proposition}  \label{prop:gradients}
Consider a ResNet \eqref{eq:discrete-resnet} such that Assumptions $(A_1)$-$(A_4)$ are satisfied. If $L \alpha_L^2 \leqslant 1$, then, for any $\delta \in (0, 1)$, with probability at least $1-\delta$,
$$\frac{\|p_0 - p_L\|^2}{\|p_L\|^2} \leqslant\frac{2 L \alpha_L^2}{\delta}.$$
\end{proposition}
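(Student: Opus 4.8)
The plan is to mirror the proof of Proposition~\ref{prop:init-forward-high-prob}, replacing the hidden-state recurrence~\eqref{eq:rec-norm-hidden-states} by the forward-mode recurrence~\eqref{eq:general-rel-q_k}. As in the forward case, I would first reduce the high-probability statement to a bound in expectation via Markov's inequality: since $\|p_0-p_L\|^2/\|p_L\|^2$ is a nonnegative random variable, it suffices to show that $\Esp(\|p_0-p_L\|^2/\|p_L\|^2)\leqslant 2L\alpha_L^2$, after which $\Prob(\|p_0-p_L\|^2/\|p_L\|^2 > 2L\alpha_L^2/\delta)\leqslant\delta$ follows immediately.

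To compute this expectation I would represent $\|p_0-p_L\|^2$ through forward-mode differentiation, exactly as in~\eqref{eq:forward-diff-main}. Writing $p_0 = (\partial h_L/\partial h_0)^\top p_L$ by the chain rule and using $q_0(z)=z$, the Gaussian identity $\|v\|^2 = \Esp_{z\sim\mathcal{N}(0,I_d)}(|v^\top z|^2)$ gives $(p_0-p_L)^\top z = p_L^\top(q_L(z)-z)$, hence $\|p_0-p_L\|^2 = \Esp_{z}(|p_L^\top(q_L(z)-z)|^2)$. Dividing by $\|p_L\|^2$ and setting $b=p_L/\|p_L\|$, I would integrate out $b$ using Assumption $(A_3)$: conditioning on the $\sigma$-algebra generated by $z$ and all the hidden-layer weights (which makes $q_L(z)-z$ measurable and determines $h_L$), the conditional law of $b$ depends on the weights only through $h_L$, so the covariance $\Esp(bb^\top\mid h_L)=I_d/d$ replaces $|b^\top v|^2$ by $\|v\|^2/d$. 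This yields the clean identity $\Esp(\|p_0-p_L\|^2/\|p_L\|^2) = \tfrac1d\,\Esp(\|q_L(z)-z\|^2)$, reducing the backward quantity to a purely forward one.

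The remaining step is to bound $\Esp\|q_L(z)-z\|^2$, which is now fully analogous to bounding $\Esp\|h_L-h_0\|^2$ in Proposition~\ref{prop:init-forward-high-prob}. Telescoping~\eqref{eq:general-rel-q_k} gives $q_L(z)-z = \alpha_L\sum_{k=0}^{L-1} w_k$ with $w_k=V_{k+1}\frac{\partial g(h_k,\theta_{k+1})}{\partial h}q_k(z)$, so that $\Esp\|q_L(z)-z\|^2 = \alpha_L^2\sum_{k,k'}\Esp\langle w_k,w_{k'}\rangle$. For $k<k'$ the cross terms vanish by the martingale argument: conditioning on everything up to layer $k'$ leaves $V_{k'+1}$ as the only fresh factor in $w_{k'}$, and it is zero-mean and independent by $(A_1)$. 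On the diagonal, $(A_1)$ gives $\Esp(\|V_{k+1}v\|^2\mid v)=\|v\|^2$ (the entries of $V_{k+1}$ have variance $1/d$), and $(A_4)$ gives $\Esp\|\frac{\partial g(h_k,\theta_{k+1})}{\partial h}q_k\|^2\leqslant\Esp\|q_k\|^2$; feeding this back into the recurrence yields $\Esp\|q_{k+1}\|^2\leqslant(1+\alpha_L^2)\Esp\|q_k\|^2$, hence $\Esp\|q_k\|^2\leqslant d(1+\alpha_L^2)^k$ since $\Esp\|q_0\|^2=\Esp\|z\|^2=d$. Summing over $k$ gives $\Esp\|q_L(z)-z\|^2\leqslant d\big((1+\alpha_L^2)^L-1\big)$, so that $\Esp(\|p_0-p_L\|^2/\|p_L\|^2)\leqslant(1+\alpha_L^2)^L-1\leqslant 2L\alpha_L^2$, using $(1+\alpha_L^2)^L\leqslant e^{L\alpha_L^2}$ and $e^x-1\leqslant 2x$ on $[0,1]$ together with $L\alpha_L^2\leqslant1$. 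Markov's inequality then concludes.

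The main obstacle, and the reason the gradient case is genuinely harder than the forward case, is the dependence of $b=p_L/\|p_L\|$ on the entire sequence of weights through $h_L$, which lies in the ``future'' relative to each $q_k$ and would break any direct martingale argument carried out in the backward variables $p_k$. The key that unlocks the proof is the combination of the forward-mode reformulation~\eqref{eq:forward-diff-main}, which re-expresses the backward quantity in terms of the forward-flowing $q_k$, and Assumption $(A_3)$, which is exactly strong enough to average out $b$ after conditioning on the hidden layers. Once this reduction is made, everything else is a faithful repetition of the proof of Proposition~\ref{prop:init-forward-high-prob}, with $(A_4)$ playing the role of $(A_2)$. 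I would take care to verify that the conditional structure underlying $(A_3)$---namely that $b$ enters only through $h_L$---holds in the stated generality and not merely in the $n_{\textnormal{out}}=1$ example.
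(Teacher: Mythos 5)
Your proposal is correct and follows essentially the same route as the paper's proof: the forward-mode reformulation via \eqref{eq:forward-diff-main} and $q_L(z)-z$, averaging out $b$ with Assumption $(A_3)$, the telescoping/martingale cancellation of cross terms, the recurrence $\Esp\|q_{k+1}(z)\|^2\leqslant(1+\alpha_L^2)\Esp\|q_k(z)\|^2$ from $(A_1)$ and $(A_4)$, the bound $(1+\alpha_L^2)^L-1\leqslant 2L\alpha_L^2$, and Markov's inequality. No gaps.
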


\begin{proposition}  \label{prop:gradients2}
Consider a ResNet \eqref{eq:discrete-resnet} such that Assumptions $(A_1)$-$(A_4)$ are satisfied. Then
\[
\big(1 + \frac{1}{2} \alpha_L^2\big)^L - 1 \leqslant \Esp \bigg( \frac{\|p_0-p_L\|^2}{\|p_L\|^2} \bigg) \leqslant (1 + \alpha_L^2)^L - 1.
\]
\end{proposition}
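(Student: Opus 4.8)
The plan is to transfer the problem to the forward-tangent process $q_k(z) = \frac{\partial h_k}{\partial h_0} z$ already introduced in \eqref{eq:general-rel-q_k}, and to track second moments exactly in the spirit of the sketch preceding \eqref{eq:forward-diff-main}. Write $b = p_L/\|p_L\|$ and define the increment $r_k(z) = q_k(z) - z$, so that $r_0 = 0$ and, from \eqref{eq:general-rel-q_k}, $r_{k+1}(z) = r_k(z) + \alpha_L V_{k+1}\frac{\partial g(h_k,\theta_{k+1})}{\partial h} q_k(z)$. Adapting the identity \eqref{eq:forward-diff-main} to the difference $p_0 - p_L$ rather than to $p_0$, and using $(p_0 - p_L)^\top z = p_L^\top q_L(z) - p_L^\top z = p_L^\top r_L(z)$, I would first establish that $\frac{\|p_0 - p_L\|^2}{\|p_L\|^2} = \Esp_{z \sim \mathcal{N}(0,I_d)}\big(|b^\top r_L(z)|^2\big)$, where the inner expectation is over the auxiliary Gaussian $z$ only.

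Next I would integrate out $b$ using $(A_3)$. Conditioning on the $\sigma$-field $\mathcal{F}$ generated by $z$, $A$ and all the weights $(V_k,\theta_k)_{1\leqslant k\leqslant L}$ --- under which $r_L$ and $h_L$ are measurable while the final matrix $B$ remains independent --- the conditional law of $b = b(B,h_L)$ coincides with its law given $h_L$, so $\Esp(bb^\top \mid \mathcal{F}) = \Esp(bb^\top \mid h_L) = I_d/d$ by $(A_3)$. Hence $\Esp(|b^\top r_L|^2 \mid \mathcal{F}) = r_L^\top \Esp(bb^\top\mid\mathcal{F}) r_L = \|r_L\|^2/d$, and taking the full expectation yields $\Esp\big(\frac{\|p_0-p_L\|^2}{\|p_L\|^2}\big) = \frac{1}{d}\Esp(\|r_L\|^2)$.

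It then remains to bound $\Esp(\|r_L\|^2)$. Taking squared norms in the recurrence for $q_k$ and using that $V_{k+1}$ has symmetric, hence mean-zero, entries (so that the cross term $\Esp\langle q_k, V_{k+1}\frac{\partial g(h_k,\theta_{k+1})}{\partial h} q_k\rangle$ vanishes after conditioning on everything but $V_{k+1}$) together with $\Esp(V_{k+1}^\top V_{k+1}) = I_d$ from $(A_1)$, I get $\Esp(\|q_{k+1}\|^2) = \Esp(\|q_k\|^2) + \alpha_L^2 \Esp\big(\|\frac{\partial g(h_k,\theta_{k+1})}{\partial h} q_k\|^2\big)$, and Assumption $(A_4)$ sandwiches the last term between $\tfrac12\Esp(\|q_k\|^2)$ and $\Esp(\|q_k\|^2)$. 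Since $\Esp(\|q_0\|^2)=\Esp\|z\|^2=d$, a straightforward induction gives $d(1+\tfrac12\alpha_L^2)^L \leqslant \Esp(\|q_L\|^2) \leqslant d(1+\alpha_L^2)^L$. To pass from $\|q_L\|$ to $\|r_L\|$, I would expand $\|r_L\|^2 = \|q_L\|^2 - 2\langle q_L, z\rangle + \|z\|^2$ and note that the same mean-zero argument applied to the recurrence for $\langle q_k, z\rangle$ gives $\Esp\langle q_{k+1}, z\rangle = \Esp\langle q_k, z\rangle$, whence $\Esp\langle q_L, z\rangle = \Esp\|z\|^2 = d$. Therefore $\Esp(\|r_L\|^2) = \Esp(\|q_L\|^2) - d$, and dividing by $d$ produces exactly the claimed two-sided bound.

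The main obstacle I expect is the decoupling step: justifying $\Esp(bb^\top\mid\mathcal{F}) = I_d/d$ rigorously. The delicate point is that $b$ and $r_L$ both depend on the shared weights through $h_L$, so one cannot simply assert their independence; the argument must isolate the genuinely independent source of randomness in $b$ (the final matrix $B$) and invoke $(A_3)$ as a statement about the conditional distribution given $h_L$. One should also read $(A_3)$ as a statement about the outer product $bb^\top$ (rather than the scalar $b^\top b$) for the normalization $I_d/d$ to make sense. The remaining computations are routine once the mean-zero and independence structure of $V_{k+1}$ is exploited as above.
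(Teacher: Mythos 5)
Your proposal is correct and follows essentially the same route as the paper: transfer to the forward tangent $q_L(z)$ via \eqref{eq:forward-diff-main}, integrate out $b$ using $(A_3)$ (read, as you note, as $\Esp(bb^\top\mid h_L)=I_d/d$), kill the cross terms by conditioning on everything but $V_{k+1}$, and sandwich $\Esp(\|q_k(z)\|^2)$ with $(A_4)$. The only (cosmetic) difference is that you compute $\Esp(\|q_L(z)-z\|^2)$ as $\Esp(\|q_L(z)\|^2)-2\Esp\langle q_L(z),z\rangle+d$ with $\Esp\langle q_L(z),z\rangle=d$, whereas the paper expands $q_L(z)-z$ as the telescoping sum of increments and sums their second moments directly---both rest on the same orthogonality and yield identical bounds.
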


A simple corollary of the propositions above is as follows.
\begin{corollary}
\label{cor:gradients2}
Consider a ResNet \eqref{eq:discrete-resnet} such that Assumptions $(A_1)$-$(A_4)$ are satisfied, and take $\alpha_L = \nicefrac{1}{L^\beta}$, with $\beta > 0$. Then
\begin{itemize}
    \item[$(i)$] If $\beta > \nicefrac{1}{2}$,
\[
\frac{\|p_0 - p_L\|}{\|p_L\|}  \xrightarrow[L\rightarrow\infty]{\Prob} 0.
\]
    \item[$(ii)$] If $\beta < \nicefrac{1}{2}$,
\[
\Esp \bigg( \frac{\|p_0-p_L\|^2}{\|p_L\|^2} \bigg) \xrightarrow[]{L\rightarrow\infty} \infty.
\]
    \item[$(ii)$] If $\beta = \nicefrac{1}{2}$,
\[
\exp\Big(\frac{1}{2}\Big) - 1 \leqslant \Esp \bigg( \frac{\|p_0-p_L\|^2}{\|p_L\|^2} \bigg) \leqslant \exp(4) - 1.
\]
\end{itemize}
\end{corollary}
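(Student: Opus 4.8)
The plan is to obtain all three statements by specializing $\alpha_L = \nicefrac{1}{L^\beta}$ in Propositions~\ref{prop:gradients} and~\ref{prop:gradients2}, so that $\alpha_L^2 = L^{-2\beta}$ and $L\alpha_L^2 = L^{1-2\beta}$; the remaining work is then pure elementary asymptotics on the quantities $(1+\tfrac12\alpha_L^2)^L$ and $(1+\alpha_L^2)^L$. For part $(i)$ (the regime $\beta>\nicefrac12$), I would start from the upper bound of Proposition~\ref{prop:gradients2} together with the inequality $(1+x)^L \le \exp(Lx)$ to write $\Esp(\|p_0-p_L\|^2/\|p_L\|^2) \le (1+\alpha_L^2)^L - 1 \le \exp(L^{1-2\beta}) - 1$, which tends to $0$ since $1-2\beta<0$. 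Convergence in probability of $\|p_0-p_L\|/\|p_L\|$ then follows from this $L^2$-bound by Markov's inequality applied to the square. One could equally invoke Proposition~\ref{prop:gradients}, which applies as soon as $L\alpha_L^2 = L^{1-2\beta} \le 1$, i.e.\ for $L$ large enough.

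For part $(ii)$ (the regime $\beta<\nicefrac12$) I would use the lower bound of Proposition~\ref{prop:gradients2}: writing $(1+\tfrac12\alpha_L^2)^L = \exp\big(L\log(1+\tfrac12 L^{-2\beta})\big)$ and using $\log(1+x)\ge x/2$ for $x\in[0,1]$, the exponent is at least $\tfrac14 L^{1-2\beta}$ once $\tfrac12 L^{-2\beta}\le 1$, and this diverges because $1-2\beta>0$; hence the expectation tends to $+\infty$. For part $(iii)$ (where $\beta=\nicefrac12$, so $\alpha_L^2=1/L$), I would substitute directly into Proposition~\ref{prop:gradients2} to obtain $(1+\tfrac{1}{2L})^L - 1 \le \Esp(\|p_0-p_L\|^2/\|p_L\|^2) \le (1+\tfrac1L)^L - 1$, and pass to the limit using the classical facts $(1+\tfrac{1}{2L})^L \to \exp(\tfrac12)$ and $(1+\tfrac1L)^L \to \exp(1)$. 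The upper constant is controlled through $\exp(1)-1 \le \exp(4)-1$ (indeed $(1+1/L)^L \le \exp(1)$ for every $L$), while the lower constant $\exp(\tfrac12)-1$ is the limit of the increasing lower bound.

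There is no genuine analytic difficulty here---this really is a corollary, and Proposition~\ref{prop:gradients2} alone carries all three parts. The only points requiring care are bookkeeping: transferring the $L^2$ control in $(i)$ to convergence in probability via Markov's inequality; extracting the sharp rate $L^{1-2\beta}$ from $L\log(1+cL^{-2\beta})$ through the two-sided estimate $x/2 \le \log(1+x)\le x$ for small $x$; and reading the two-sided bound of $(iii)$ asymptotically, since $(1+\tfrac{1}{2L})^L$ increases to its limit $\exp(\tfrac12)$ strictly from below, so that the stated inequalities are to be understood in the limit $L\to\infty$ rather than for every fixed $L$.
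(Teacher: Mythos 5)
Your proof is correct and follows essentially the same route as the paper, which likewise derives all three parts from Propositions~\ref{prop:gradients} and~\ref{prop:gradients2} by substituting $\alpha_L^2 = L^{-2\beta}$ and doing elementary asymptotics on $(1+\tfrac12\alpha_L^2)^L$ and $(1+\alpha_L^2)^L$. Your caveat about part $(iii)$ is well taken: since $(1+\tfrac{1}{2L})^L$ increases to $\exp(\tfrac12)$ strictly from below, the stated lower bound $\exp(\tfrac12)-1 \leqslant \Esp\big(\|p_0-p_L\|^2/\|p_L\|^2\big)$ does not follow from Proposition~\ref{prop:gradients2} at any fixed $L$ and must be read asymptotically (or with $(1+\tfrac{1}{2L})^L-1$ in place of $\exp(\tfrac12)-1$), a point the paper's one-line proof of $(iii)$ glosses over.
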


Corollary \ref{cor:gradients2} is illustrated in Figure \ref{fig:scaling_init_gradient}. The experimental protocol is the same as in Figure~\ref{fig:scaling_init_norm}, but we now track $p_0$ and $p_L$, the gradients of the loss $\mathscr{L}$ with respect to the first and the last hidden states. In accordance with our results, when $\beta > \nicefrac{1}{2}$, the gradient remains the same from one layer to another (left plot). On the other hand, the middle plot clearly shows that when $\beta < \nicefrac{1}{2}$ the gradient explodes. 
Once again, the case $\beta = \nicefrac{1}{2}$ (right plot) is the only one for which the distribution of gradients at initialization is non-trivial. Figure \ref{fig:distribution-gradients} illustrates that the empirical distribution of gradients in this case also seems to be log-normal.

Our findings extend results from the literature showing explosion of the backward pass for $\beta < \nicefrac{1}{2}$ and stability for $\beta \geq \nicefrac{1}{2}$, both in an asymptotic setting \citep{allenzhu2019convergence,zhang2019convergence,hayou2021stable} and a non-asymptotic setting \citep{zhang2022stabilize}. Similar comparisons can be drawn with these papers as in the previous section (see end of Section \ref{subsec:prob-bounds-hidden-states}). Furthermore, we emphasize again that making use of the forward-mode formulation of the gradients differs from these previous works, which resorted either to an infinite-width setting or to backward-mode differentiation and an $\varepsilon$-net argument.

\begin{figure}
    \centering
    \includegraphics[width=\textwidth]{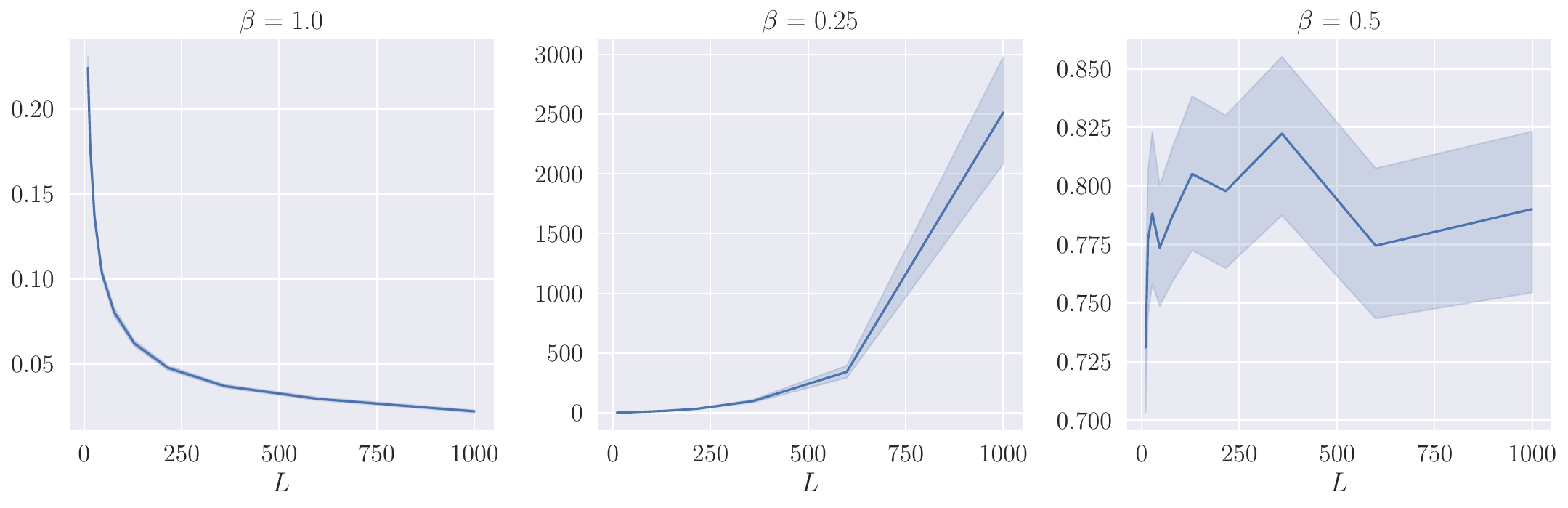}
    \caption{Evolution of $\|p_0-p_L\| / \|p_L\|$ as a function of $L$ for different values of~$\beta$ and an i.i.d.~$\mathcal{U}(-\sqrt{3/d}, \sqrt{3/d})$ initialization of model \texttt{res-3}, with $d=40$. The input is a random Gaussian observation $x$ in dimension $n_{\textnormal{in}} = 64$. The experiment is repeated with $50$ independent randomizations.}
    \label{fig:scaling_init_gradient}
\end{figure}

In summary, this and the previous subsection both point towards the same conclusion: there are three different cases, depending on the value of $\beta$---explosion when $\beta < \nicefrac{1}{2}$, non-degenerate limit when $\beta=\nicefrac{1}{2}$, and identity when $\beta>\nicefrac{1}{2}$. In the explosion case, it is well known that the network cannot be trained \citep{yang2017mean}. The theory thus points out that the value $\nicefrac{1}{2}$ plays a pivotal role. Remarkably, this value has a specific interpretation in the continuous-time point of view of ResNets, in terms of SDE. This is the topic that we address in the next section.

\section{Scaling in the Continuous-time Setting}
\label{sec:discrete-to-continuous}

Starting with the discrete ResNet \eqref{eq:discrete-resnet}, it is tempting to let $L$ go to infinity and consider the network as the discretization of a differential equation where the layer index $k \in \{0, \dots, L\}$ is replaced by the time index $t \in [0,1]$. This interpretation of deep neural networks has been popularized by \citet{chen2018neural} and is often referred to as the neural ODE paradigm.
Notice that this setting is different from the so-called mean-field analysis, where the width of the network is assumed to be infinite beforehand. In our setting, the width $d$ is assumed to be finite and fixed.

\subsection{Convergence Towards a SDE in the Large-depth Regime}\label{subsec:convergence_sde}
One of the main messages of Section \ref{sec:scaling-iid} is that the standard initialization with i.i.d.~parameters leads to a non-degenerate model for large values of $L$ only if $L\alpha_L^2 \approx 1$ (Propositions \ref{prop:init-forward-high-prob} and \ref{prop:init-forward-high-prob2}), or, equivalently, if $\beta=\nicefrac{1}{2}$ when $\alpha_L = \nicefrac{1}{L^\beta}$ (Corollary \ref{corollary:forward}). Remarkably, in the continuous-time limit, this regime corresponds to the discretization of a SDE. Indeed, consider for simplicity the  (discrete) ResNet model \texttt{res-1}
\begin{equation} \label{eq:discrete_sde_resnet}
    h_0 = Ax, \quad h_{k+1} = h_k + \frac{1}{\sqrt{L}} V_{k+1}\sigma(h_k), \quad 0 \leqslant k \leqslant L-1,
\end{equation}
where the entries of all $(V_{k})_{1 \leqslant k \leqslant L}$ are assumed to be i.i.d.~$\mathcal{N}(0, \nicefrac{1}{d})$. Recall the following definition:

\begin{definition}\label{def:brownian-motion}
A one-dimensional Brownian motion $(B_t)_{t \in [0, 1]}$ is a continuous-time stochastic process with $B_0=0$, almost surely continuous, with independent increments, and such that for any $0 \leqslant s < t \leqslant 1$, $B_{t} - B_s \sim \mathcal{N}(0, t-s)$.
\end{definition}

Now, let $\mathbf{B}:[0,1] \to \R^{d \times d}$ be a $(d \times d)$-dimensional Brownian motion, in the sense that the $(B_{ij})_{1 \leqslant i, j \leqslant d}$ are independent one-dimensional Brownian motions.  Thus, for any $0 \leqslant k \leqslant L-1$ and any $1 \leqslant i,j \leqslant d$, we have
\begin{equation*} 
    \mathbf{B}_{\nicefrac{(k+1)}{L}, i, j} - \mathbf{B}_{\nicefrac{k}{L}, i, j} \sim \mathcal{N}\Big(0, \frac{1}{L}\Big),
\end{equation*}
and the increments for different values of $(i,j,k)$ are independent. As a consequence, the recurrence~\eqref{eq:discrete_sde_resnet} is equivalent in distribution to the recurrence 
\begin{equation*}
    h_{k+1}^\top = h_k^\top + \frac{1}{\sqrt{d}} \sigma(h_k^\top)(\mathbf{B}_{\nicefrac{(k+1)}{L}} - \mathbf{B}_{\nicefrac{k}{L}}), \quad 0 \leqslant k \leqslant L-1.
\end{equation*}
(Note that this is true because $V_{k+1}$ has the same distribution as $V_{k+1}^\top$.)
We recognize the Euler-Maruyama discretization \citep{kloeden1992numerical} on the $\{k/L, 0 \leqslant k \leqslant L\}$ mesh of the SDE 
\begin{equation} \label{eq:resnet_sde_init}
    H_0 = Ax, \quad dH_t^\top = \frac{1}{\sqrt{d}} \sigma(H_t^\top) d\mathbf{B}_t, \quad t \in [0, 1],
\end{equation}
where the output of the network is now a function of the final value of $H$, that is, $H_1$. 
The link between the discrete ResNet \eqref{eq:discrete_sde_resnet} and the SDE \eqref{eq:resnet_sde_init} is formalized in the next proposition. 

\begin{proposition} \label{prop:resnet_convergence_sde_init}
    Consider the \texttt{res-1} model, where
    the entries of $V_{k}$ are i.i.d.~Gaussian $\mathcal{N}(0, \nicefrac{1}{d})$ random variables. Assume that the activation function $\sigma$ is Lipschitz continuous. Then the SDE \eqref{eq:resnet_sde_init} has a unique solution $H$ and, for any $0 \leqslant k \leqslant L$,
    \begin{equation*}
        \esp \big( \| H_{\nicefrac{k}{L}} - h_k \| \big) \leqslant\frac{c}{\sqrt{L}},
    \end{equation*}
    for some $c > 0$.
\end{proposition}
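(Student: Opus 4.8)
The statement is the classical strong-convergence theorem for the Euler–Maruyama scheme \citep{kloeden1992numerical}, specialized to the drift-free SDE \eqref{eq:resnet_sde_init}; the plan is to first settle well-posedness and then run the standard error analysis via the It\^o isometry and Gr\"onwall's lemma, keeping careful track of the matrix-Brownian structure. Writing $X_t = H_t^\top \in \R^{1\times d}$, the equation reads $dX_t = \sqrt{\nicefrac{2}{d}}\,\sigma(X_t)\,d\mathbf{B}_t$ with zero drift, where the $(i,j)$ entry of $\mathbf{B}$ contributes $\sqrt{\nicefrac{2}{d}}\,\sigma(X_{t,i})\,dB_{t,ij}$ to the $j$-th coordinate of $X_t$. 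Since $\sigma$ is, say, $K$-Lipschitz, the coefficient $x\mapsto\sqrt{\nicefrac{2}{d}}\,\sigma(x)$ is globally Lipschitz with linear growth, so the standard existence-and-uniqueness theorem for SDEs with Lipschitz coefficients yields a unique strong solution $H$ on $[0,1]$, together with the a priori bound $\sup_{t\in[0,1]}\esp\|H_t\|^2<\infty$.

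Next I introduce the continuous-time interpolation of the discrete scheme. Let $\eta(s)=\lfloor Ls\rfloor/L$ denote the last mesh point before $s$, and define $\bar H$ by $\bar H_t^\top = h_0^\top + \int_0^t \sqrt{\nicefrac{2}{d}}\,\sigma(\bar H_{\eta(s)}^\top)\,d\mathbf{B}_s$, so that $\bar H_{\nicefrac{k}{L}}=h_k$ for all $k$ by construction. Subtracting the integral forms of $H$ and $\bar H$ and applying the It\^o isometry coordinate by coordinate---here the mutual independence of the $d^2$ scalar Brownian motions makes the sum over the $d$ output coordinates cancel the prefactor $\nicefrac{2}{d}$---gives
\[
\esp\|H_t - \bar H_t\|^2 = 2\int_0^t \esp\big\|\sigma(H_s^\top) - \sigma(\bar H_{\eta(s)}^\top)\big\|^2\,ds.
\]
Invoking the Lipschitz property of $\sigma$ and splitting via $\|H_s - \bar H_{\eta(s)}\|^2 \leqslant 2\|H_s - \bar H_s\|^2 + 2\|\bar H_s - \bar H_{\eta(s)}\|^2$, I obtain
\[
\esp\|H_t - \bar H_t\|^2 \leqslant 4K^2\!\int_0^t \esp\|H_s - \bar H_s\|^2\,ds + 4K^2\!\int_0^t \esp\|\bar H_s - \bar H_{\eta(s)}\|^2\,ds.
\]

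It remains to control the one-step fluctuation and to close the loop. For $s\in[\nicefrac{k}{L},\nicefrac{(k+1)}{L})$ one has $\bar H_s^\top - \bar H_{\eta(s)}^\top = \sqrt{\nicefrac{2}{d}}\,\sigma(h_k^\top)(\mathbf{B}_s - \mathbf{B}_{\nicefrac{k}{L}})$, and the same isometry gives $\esp\|\bar H_s - \bar H_{\eta(s)}\|^2 = 2(s-\eta(s))\,\esp\|\sigma(h_k)\|^2 \leqslant \nicefrac{2}{L}\,\esp\|\sigma(h_k)\|^2$. By linear growth of $\sigma$ this is at most $\nicefrac{C}{L}$ provided one has the uniform-in-$L$ moment bound $\sup_L\sup_{0\leqslant k\leqslant L}\esp\|h_k\|^2 \leqslant M < \infty$, which I would establish beforehand by the very same isometry argument together with the discrete Gr\"onwall inequality. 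Feeding $\nicefrac{C}{L}$ into the error inequality and applying the (continuous) Gr\"onwall lemma yields $\sup_{t\in[0,1]}\esp\|H_t - \bar H_t\|^2 \leqslant \nicefrac{c^2}{L}$ for some $c>0$. Evaluating at a mesh point, using $\bar H_{\nicefrac{k}{L}}=h_k$, and passing from the $L^2$ to the $L^1$ bound through Jensen's inequality $\esp\|\cdot\|\leqslant(\esp\|\cdot\|^2)^{\nicefrac{1}{2}}$ then gives $\esp\|H_{\nicefrac{k}{L}}-h_k\|\leqslant \nicefrac{c}{\sqrt{L}}$, as claimed. The main obstacle is the uniform-in-$L$ second-moment estimate $M$ on the scheme; everything downstream is a routine Gr\"onwall argument, and the only point requiring care is that the matrix/Brownian bookkeeping be carried out consistently so that the dimension factors cancel as above.
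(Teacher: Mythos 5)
Your proposal is correct, but it takes a more self-contained route than the paper. The paper's proof is a two-line verification: it invokes Theorems 4.5.3 and 10.2.2 of \citet{kloeden1992numerical} (existence/uniqueness for Lipschitz coefficients, and strong order-$\nicefrac{1}{2}$ convergence of Euler--Maruyama) and simply checks the five hypotheses of those theorems---joint measurability, global Lipschitz continuity, linear growth, square-integrability of $H_0$, and H\"{o}lder-$\nicefrac{1}{2}$ regularity in time, all of which are immediate here because the drift vanishes and the diffusion coefficient is time-homogeneous. You instead re-derive the content of Theorem 10.2.2 from scratch: continuous interpolation $\bar H$ of the scheme, It\^{o} isometry (your dimension bookkeeping, with the factor $d$ from summing coordinates cancelling the $\nicefrac{2}{d}$ prefactor, is right), the splitting of $\|H_s-\bar H_{\eta(s)}\|$, the one-step fluctuation bound of order $\nicefrac{1}{L}$, the uniform-in-$L$ second-moment estimate on the scheme (routine, as you say, via the martingale structure and discrete Gr\"{o}nwall), and finally Gr\"{o}nwall plus Jensen. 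What your approach buys is transparency---the reader sees exactly where the rate $\nicefrac{1}{\sqrt{L}}$ comes from and which structural features (zero drift, independence of the $d^2$ Brownian coordinates) are used---at the cost of length; the paper's approach buys brevity at the cost of deferring all the work to a textbook. One point both treatments leave implicit, and which you should state if you write this up: the inequality $\esp\|H_{\nicefrac{k}{L}}-h_k\|\leqslant \nicefrac{c}{\sqrt{L}}$ only makes sense once the discrete weights are coupled to the driving Brownian motion via $\nicefrac{1}{\sqrt{L}}\,V_{k+1}^\top=\sqrt{\nicefrac{2}{d}}\,(\mathbf{B}_{\nicefrac{(k+1)}{L}}-\mathbf{B}_{\nicefrac{k}{L}})$, which is legitimate because $V_{k+1}$ and $V_{k+1}^\top$ are equal in distribution; your phrase ``$\bar H_{\nicefrac{k}{L}}=h_k$ by construction'' presupposes this coupling.
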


Notice that the requirement that $\sigma$ is Lipschitz continuous is satisfied by most classical activation functions, including ReLU.
This proposition is interesting for several reasons. First, the scaling $\beta= \nicefrac{1}{2}$, which is exactly the one that yields a non-trivial dynamics at initialization, corresponds in the continuous world to a remarkably `simple' model of diffusion. This shows that very deep neural networks properly initialized with i.i.d.~weights are equivalent to solutions of SDE. This analogy opens interesting perspectives for training deep networks using automatic differentiation for solutions of neural SDE \citep{li2020scalable}.%

Second, we stress that the emergence of a SDE instead of an ODE carries an important message. Several authors (including, e.g., \citealp{thorpe2018deep}) have shown that, under appropriate assumptions, a deep ResNet converges in the large depth limit to an ODE and not a SDE. The reason why we obtain a SDE here is intrinsically connected with the choice of i.i.d.~initialization for the weights, which makes a Brownian motion appear at the limit, as highlighted above. In other words, the i.i.d.~initialization, the choice $\beta = \nicefrac{1}{2}$ (the relevant critical value exhibited in Section \ref{sec:scaling-iid}), and the emergence of a SDE are intimately linked together. On the other hand, the case $\beta = 1$ matches with an ODE if the initialization is not i.i.d., as we will see in Subsection \ref{subsec:smooth}.

Finally, we point out that Proposition \ref{prop:resnet_convergence_sde_init} states the convergence of a ResNet towards a SDE for the basic architecture \texttt{res-1} and for Gaussian initialization. The extension to more general settings is an interesting direction of research, although clearly beyond the scope of the present paper (see, e.g., \citealp{peluchetti2020infinitely}, and \citealp{cohen2021scaling}, for results in this direction). 

\subsection{Scaling in the Neural ODE Setting} 
\label{subsec:smooth}

\textit{Convergence towards an ODE.}
The basic message of our Proposition \ref{prop:resnet_convergence_sde_init} is that an i.i.d.~initialization, together with $\beta=\nicefrac{1}{2}$, leads to a SDE rather than an ODE. A natural question is then whether a different choice of weight distributions (at initialization) and scaling can lead to a classical neural ODE.

To answer this question and leave the world of i.i.d.~initialization, we assume that the weights $(V_k)_{1 \leqslant k \leqslant L}$ and $(w_k)_{1 \leqslant k \leqslant L}$ are discretizations of smooth functions $\mathscr{V}: [0, 1] \rightarrow \R^{d \times d}$ and $\mathscr{W}: [0, 1] \rightarrow \R^p$. We then consider the general iteration \eqref{eq:discrete-resnet} with $\alpha_L=\nicefrac{1}{L}$, that is,
\begin{align}
\begin{split}
    h_0 &= A x, \quad h_{k+1} = h_k + \frac{1}{L} V_{k+1} g(h_k, w_{k+1}), \quad 0 \leqslant k \leqslant L-1, \label{eq:smooth-regime}
\end{split}
\end{align}
where $V_k = \mathscr{V}_{k/L}$ and $w_k = \mathscr{W}_{k/L}$.
Of course, it is still possible to consider $(V_k)_{1 \leqslant k \leqslant L}$ (resp. $(w_k)_{1 \leqslant k \leqslant L}$) as random variables, by letting $(\mathscr{V}_t)_{t \in [0,1]}$  (resp. $(\mathscr{W}_t)_{t \in [0,1]}$) be a continuous-time stochastic process.  
In this model, we shall need the following assumption: 
\begin{itemize}
    \item[$(A_5)$] For any $1 \leqslant k \leqslant L$, one has $V_k = \mathscr{V}_{k/L}$ and $w_k = \mathscr{W}_{k/L}$, where the stochastic processes $\mathscr{V}$ and $\mathscr{W}$ are almost surely Lipschitz continuous. 
\end{itemize}

More precisely, almost surely, there exist  $K_{\mathscr{V}}, K_{\mathscr{W}}$ %
such that, for any $s,t \in [0,1]$,
\begin{equation*}
\|\mathscr{V}_t - \mathscr{V}_s \| \leqslant K_{\mathscr{V}}|t-s|, \quad \|\mathscr{W}_t - \mathscr{W}_s \| \leqslant K_{\mathscr{W}}|t-s|.
\end{equation*}
A typical model that satisfies Assumption $(A_5)$ is obtained by letting the entries of $\mathscr{V}$ and $\mathscr{W}$ be independent Gaussian processes with expectation zero and squared exponential covariance $K(x, x') = \exp (-\frac{(x-x')^2}{2\ell^2})$, where $\ell > 0$ \citep{lederer2019uniform}. Note that the Lipschitz constants may themselves be random, depending on the Gaussian process sample.

We shall also need the following requirement on $g$, which is satisfied by all our models as soon as $\sigma$ is Lipschitz continuous:
\begin{itemize}
 \item[$(A_6)$] The function $g$ is Lipschitz continuous on compact sets, in the sense that for any compact $\mathscr{P} \subseteq \R^p$, there exists $K_{\mathscr{P}} > 0$ such that, for all $h,h' \in \R^d$, $w \in \mathscr{P}$, 
    $$\|g(h, w) - g(h', w)\| \leqslant K_{\mathscr{P}} \|h-h'\|,$$
 and for any compact $\mathscr{D} \subseteq \R^d$, there exists $K_{\mathscr{D},\mathscr{P}} > 0$ such that, for all $h \in \mathscr{D}$, $w, w' \in \mathscr{P}$, 
    $$\|g(h, w) - g(h, w')\| \leqslant K_{\mathscr{D},\mathscr{P}} \|w-w'\|.$$
\end{itemize}

Under Assumptions $(A_5)$ and $(A_6)$, the recurrence~\eqref{eq:smooth-regime} almost surely converges towards the neural ODE given by
\begin{align}
\begin{split}
    H_0 &= Ax, \quad dH_t = \mathscr{V}_t g(H_t, \mathscr{W}_t) dt, \quad t \in [0, 1], \label{eq:neural-ode}
\end{split}
\end{align}
as shown by the proposition below.
\begin{proposition} \label{prop:neural-ode}
Consider model \eqref{eq:smooth-regime} such that Assumptions $(A_5)$ and $(A_6)$ are satisfied.  Then the ODE \eqref{eq:neural-ode} has a unique solution $H$, and, almost surely, there exists some $c>0$ such that, for any $0 \leqslant k \leqslant L$,
\[
\|H_{\nicefrac{k}{L}} - h_k \| \leqslant\frac{c}{L}.
\]
\end{proposition}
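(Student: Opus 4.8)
The plan is to recognize \eqref{eq:smooth-regime} as a forward Euler discretization of the ODE \eqref{eq:neural-ode} and to run the classical global error analysis, taking care that every constant is uniform in $k$ and $L$. Throughout I fix a realization of the processes on which the bounds of $(A_5)$ hold, so that $K_{\mathscr{V}}, K_{\Theta}, C_{\mathscr{V}}, C_{\Theta}$ are finite deterministic constants; the final constant $c$ will depend on them and will therefore be almost surely finite, which matches the ``almost surely, there exists $c>0$'' in the statement.

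First I would settle existence and uniqueness. Since $\Theta$ is bounded by $C_{\Theta}$, all the relevant parameters lie in the fixed compact set $\mathscr{P} = \{\theta \in \R^p : \|\theta\| \leqslant C_{\Theta}\}$. By $(A_6)$ the map $h \mapsto g(h, \Theta_t)$ is then globally Lipschitz with constant $K_{\mathscr{P}}$ uniformly in $t$, and combined with $\|\mathscr{V}_t\| \leqslant C_{\mathscr{V}}$ this makes the right-hand side $(t, H) \mapsto \mathscr{V}_t g(H, \Theta_t)$ of \eqref{eq:neural-ode} continuous in $t$ and globally Lipschitz in $H$. The Cauchy--Lipschitz (Picard--Lindel\"of) theorem then yields a unique solution $H$ on $[0,1]$. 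A linear growth bound $\|g(h, \theta)\| \leqslant \|g(0,\theta)\| + K_{\mathscr{P}} \|h\|$, with $\sup_{\theta \in \mathscr{P}} \|g(0,\theta)\|$ finite by continuity, together with Gr\"onwall's lemma bounds $\sup_{t \in [0,1]} \|H_t\|$; the same growth bound applied to the recurrence \eqref{eq:smooth-regime} and a discrete Gr\"onwall argument bound $\max_{0 \leqslant k \leqslant L} \|h_k\|$ uniformly in $L$. Hence both $(H_t)$ and $(h_k)$ take values in a common compact set $\mathscr{D}$, which is exactly what licenses the use of the $\theta$-Lipschitz constant $K_{\mathscr{D}, \mathscr{P}}$ of $(A_6)$ below. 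Moreover $\|\tfrac{dH_t}{dt}\| \leqslant C_{\mathscr{V}} \sup_{h \in \mathscr{D}, \theta \in \mathscr{P}} \|g(h,\theta)\| =: M$, so $\|H_t - H_s\| \leqslant M|t-s|$.

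Next comes the one-step error analysis. Writing $t_k = k/L$ and $e_k = \|H_{t_k} - h_k\|$, I subtract the exact increment $H_{t_{k+1}} - H_{t_k} = \int_{t_k}^{t_{k+1}} \mathscr{V}_s g(H_s, \Theta_s)\, ds$ from the scheme increment $\tfrac{1}{L}\mathscr{V}_{t_{k+1}} g(h_k, \Theta_{t_{k+1}})$ and decompose the integrand $\mathscr{V}_s g(H_s, \Theta_s) - \mathscr{V}_{t_{k+1}} g(h_k, \Theta_{t_{k+1}})$ into four telescoping differences: moving $H_s \to H_{t_k}$ (controlled by $C_{\mathscr{V}} K_{\mathscr{P}} \|H_s - H_{t_k}\| \leqslant C_{\mathscr{V}} K_{\mathscr{P}} M/L$), moving $\Theta_s \to \Theta_{t_{k+1}}$ (controlled by $C_{\mathscr{V}} K_{\mathscr{D},\mathscr{P}} K_{\Theta}/L$ via $(A_5)$ and $(A_6)$), moving $\mathscr{V}_s \to \mathscr{V}_{t_{k+1}}$ (controlled by $K_{\mathscr{V}} M_g/L$ with $M_g = \sup_{\mathscr{D} \times \mathscr{P}} \|g\|$), and finally moving $H_{t_k} \to h_k$ (controlled by $C_{\mathscr{V}} K_{\mathscr{P}} e_k$). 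Integrating over the interval of length $1/L$, the first three terms contribute $O(1/L^2)$ and the last contributes $\tfrac{1}{L} C_{\mathscr{V}} K_{\mathscr{P}} e_k$, so that
\[
e_{k+1} \leqslant \Big(1 + \frac{C_{\mathscr{V}} K_{\mathscr{P}}}{L}\Big) e_k + \frac{c'}{L^2}
\]
for a constant $c'$ depending only on the quantities above.

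Finally, since $e_0 = \|Ax - Ax\| = 0$, the discrete Gr\"onwall inequality gives
\[
e_k \leqslant \frac{c'}{C_{\mathscr{V}}K_{\mathscr{P}} L}\Big((1 + C_{\mathscr{V}} K_{\mathscr{P}}/L)^k - 1\Big) \leqslant \frac{c'\big(e^{C_{\mathscr{V}}K_{\mathscr{P}}} - 1\big)}{C_{\mathscr{V}}K_{\mathscr{P}}}\cdot \frac{1}{L}
\]
for all $k \leqslant L$, using $(1 + a/L)^k \leqslant e^{ak/L} \leqslant e^{a}$; this is the claimed $c/L$ bound. The main obstacle is not any single estimate --- each is a routine Lipschitz bound --- but the bookkeeping needed to guarantee that all the constants ($M$, $M_g$, the various Lipschitz constants, and ultimately $c$) are uniform over $k$ and $L$. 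The crux is the a priori confinement of both the continuous trajectory $(H_t)$ and the \emph{entire} discrete sequence $(h_k)$ to one compact set $\mathscr{D}$, without which the $\theta$-Lipschitz estimate of $(A_6)$, whose constant $K_{\mathscr{D},\mathscr{P}}$ is only local, could not be applied uniformly.
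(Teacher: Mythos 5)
Your proposal is correct and follows essentially the same route as the paper: Picard--Lindel\"of for existence and uniqueness, a priori boundedness and Lipschitz-in-time estimates to control the local truncation error of the forward Euler step at order $\nicefrac{1}{L^2}$, and a discrete Gr\"onwall recursion yielding the global $\nicefrac{c}{L}$ bound. The only (harmless) difference is bookkeeping: you explicitly confine the discrete iterates $(h_k)$ to a compact set before invoking the local constant $K_{\mathscr{D},\mathscr{P}}$, whereas the paper orders its telescoping so that the $t$-Lipschitz estimate is only ever applied along the bounded continuous trajectory $H$.
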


It should be stressed that the transition from the discrete recurrence \eqref{eq:smooth-regime} to the continuous-time differential equation \eqref{eq:neural-ode} relies on the assumptions that the weight sequences $(w_k)_{1 \leqslant k \leqslant L}$ and $(V_k)_{1 \leqslant k \leqslant L}$ are the discretizations of smooth limiting processes $\mathscr{W}$ and $\mathscr{V}$ on the one hand, and that the scaling $\alpha_L$ is chosen as $\nicefrac{1}{L}$ on the other hand. 
From a practical perspective, Proposition \ref{prop:neural-ode} shows that it is possible to initialize ResNets in the ODE regime, by choosing a smooth stochastic process, discretizing it at each layer, and taking a $\nicefrac{1}{L}$ scaling. This is in sharp contrast with the results of Sections \ref{sec:scaling-iid} and \ref{subsec:convergence_sde}, which show that the usual i.i.d.~procedure leads to a neural SDE.

\medskip \noindent \textit{Stability and scaling.} Assuming that the weights of the network are discretizations of a smooth function (Assumption $(A_5)$), it is possible to obtain stability results, depending on the value of $\beta$, similarly to what has been done in Section \ref{sec:scaling-iid}. We show below that $\beta=1$ is a critical value, by examining the hidden states, in the same way as $\beta = \nicefrac{1}{2}$ is a critical value in the i.i.d.~setting. Similar results can be shown for the gradients. We begin by a proposition handling the cases $\beta > 1$ and $\beta = 1$.

\begin{proposition}\label{prop:stability-neural-ode}
Consider a ResNet \eqref{eq:discrete-resnet} such that Assumptions $(A_5)$ and $(A_6)$ are satisfied. Let $\alpha_L = \nicefrac{1}{L^\beta}$, with $\beta  >0$.
\begin{itemize}
    \item[$(i)$] If $\beta > 1$, then, almost surely,
\[
\frac{\|h_L-h_0\|}{\|h_0\|} \xrightarrow[]{L\rightarrow\infty} 0.
\]
    \item[$(ii)$] If $\beta = 1$, then, almost surely, there exists some $c > 0$ such that
    \begin{equation*}
    	\frac{\|h_L-h_0\|}{\|h_0\|} \leqslant c.
    \end{equation*}

\end{itemize}
\end{proposition}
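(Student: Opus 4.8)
The plan is to carry out a \emph{pathwise} (hence almost sure) deterministic analysis, exploiting that under Assumption $(A_5)$ each realization of the weight sequences is uniformly bounded and that under Assumption $(A_6)$ the field $g$ grows at most linearly. Fix a realization of the processes $\mathscr{V}$ and $\Theta$. Since $\|\Theta_t\| \leqslant C_\Theta$, all parameters $\theta_k = \Theta_{k/L}$ lie in the fixed compact ball $\mathscr{P} = \{\theta \in \R^p : \|\theta\| \leqslant C_\Theta\}$. Applying the first Lipschitz bound of $(A_6)$ with $h' = 0$, and using that $g(0, \cdot)$, being Lipschitz on the compact $\mathscr{P}$, is bounded by some $M := \sup_{\theta \in \mathscr{P}} \|g(0, \theta)\| < \infty$, I obtain the linear growth estimate
$$\|g(h, \theta_k)\| \leqslant K_{\mathscr{P}} \|h\| + M, \qquad h \in \R^d,$$
where $K_{\mathscr{P}}$ is the Lipschitz constant in $(A_6)$.

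Combining this with $\|V_k\| = \|\mathscr{V}_{k/L}\| \leqslant C_{\mathscr{V}}$ (which controls the matrix--vector product) and taking norms in the recurrence \eqref{eq:discrete-resnet} gives the scalar recursion
$$\|h_{k+1}\| \leqslant (1 + a)\|h_k\| + b, \qquad a := \alpha_L C_{\mathscr{V}} K_{\mathscr{P}}, \quad b := \alpha_L C_{\mathscr{V}} M.$$
Iterating this discrete Grönwall inequality yields $\|h_k\| \leqslant (1+a)^k \|h_0\| + \tfrac{b}{a}\big((1+a)^k - 1\big)$ for every $k \leqslant L$. The crucial observation is that $(1+a)^L \leqslant \exp(aL) = \exp(C_{\mathscr{V}} K_{\mathscr{P}} L^{1-\beta})$, which is bounded by a constant independent of $L$ as soon as $\beta \geqslant 1$. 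Hence, for $\beta \geqslant 1$, there is a realization-dependent but finite constant $M_0$ with $\max_{0 \leqslant k \leqslant L} \|h_k\| \leqslant M_0$, uniformly in $L$.

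Next I would bound the increment directly. Telescoping and the triangle inequality give
$$\|h_L - h_0\| \leqslant \alpha_L \sum_{k=0}^{L-1} \|V_{k+1}\| \, \|g(h_k, \theta_{k+1})\| \leqslant \alpha_L L \, C_{\mathscr{V}} \big(K_{\mathscr{P}} M_0 + M\big) = L^{1-\beta} C_{\mathscr{V}}\big(K_{\mathscr{P}} M_0 + M\big).$$
Dividing by the fixed positive number $\|h_0\|$ then settles both cases: for $\beta > 1$ the factor $L^{1-\beta} \to 0$, which proves $(i)$; for $\beta = 1$ the factor equals $1$, so the right-hand side is a finite constant $c$ independent of $L$, which proves $(ii)$. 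Note that this direct route handles both regimes uniformly and sidesteps going through the ODE limit of Proposition~\ref{prop:neural-ode}.

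I expect the only genuine subtlety to be the uniform-in-$L$ control of $\max_k \|h_k\|$: one must verify that the geometric prefactor $(1+a)^L$ does not blow up, which is exactly where the threshold $\beta = 1$ enters through $aL = C_{\mathscr{V}} K_{\mathscr{P}} L^{1-\beta}$. The rest is bookkeeping: the constants $C_{\mathscr{V}}, C_\Theta, K_{\mathscr{P}}, M$ are almost surely finite because $(A_5)$ guarantees bounded, Lipschitz paths and $(A_6)$ is a pathwise statement, so the conclusions hold almost surely; and one implicitly uses $\|h_0\| > 0$ (true for $h_0 = Ax \neq 0$) for the ratio to be well defined.
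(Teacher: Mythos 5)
Your proof is correct and follows essentially the same route as the paper's: both derive the scalar recursion $\|h_{k+1}\|\leqslant(1+C_1\alpha_L)\|h_k\|+C_2\alpha_L$ from the boundedness and Lipschitz properties in $(A_5)$--$(A_6)$, iterate it to get a bound on $\max_k\|h_k\|$ uniform in $L$ when $\beta\geqslant 1$, and then telescope $h_L-h_0=\alpha_L\sum_{k=0}^{L-1}V_{k+1}g(h_k,\theta_{k+1})$ to obtain $\|h_L-h_0\|\leqslant C_3L^{1-\beta}$. Your version merely makes the constants more explicit; nothing substantive differs.
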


The explosion case ($\beta < 1$) is more delicate to deal with. We prove it for a linear model, and leave for future work the extension to more general cases.

\begin{proposition}\label{prop:explosion-neural-ode}
Consider the $\texttt{res-1}$ model, taking $\sigma$ as the identity function. Assume that Assumption $(A_5)$ is satisfied and that $\mathscr{V}_0^T$ has a positive eigenvalue. Let $\alpha_L = \nicefrac{1}{L^\beta}$, with $\beta \in (0, 1)$. Then, almost surely,
\[
\max_k \frac{\|h_k-h_0\|}{\|h_0\|} \xrightarrow[]{L\rightarrow\infty} \infty.
\]
\end{proposition}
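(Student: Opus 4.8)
The plan is to exploit that for the linear \texttt{res-1} model the recursion becomes $h_{k+1} = (I + \alpha_L V_{k+1}) h_k$, so that $h_k = \prod_{j=1}^k (I + \alpha_L V_j)\, h_0$, and to track the growth of $h_k$ along the unstable direction dictated by $\mathscr{V}_0$. I would work on the probability-one event on which $\mathscr{V}$ is Lipschitz and bounded (with finite constants $K_{\mathscr{V}}, C_{\mathscr{V}}$), on which $\mathscr{V}_0^\top$ has a real positive eigenvalue $\lambda$ with eigenvector $w$ (equivalently $w^\top \mathscr{V}_0 = \lambda w^\top$), and on which $u_0 := w^\top h_0 \neq 0$ (this last holds almost surely as soon as $h_0 \neq 0$, by non-degeneracy of the law of $\mathscr{V}_0$; up to replacing $w$ by $-w$ I may assume $u_0 > 0$). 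Everything below is then a deterministic statement about a fixed realization as $L \to \infty$.

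The key computation is the scalar recursion satisfied by $u_k := w^\top h_k$. Writing $V_{k+1} = \mathscr{V}_{(k+1)/L} = \mathscr{V}_0 + (\mathscr{V}_{(k+1)/L} - \mathscr{V}_0)$ and using $w^\top \mathscr{V}_0 = \lambda w^\top$, I obtain
\[
u_{k+1} = (1 + \alpha_L \lambda)\, u_k + \alpha_L \eta_{k+1}, \qquad |\eta_{k+1}| \leqslant \|w\|\, K_{\mathscr{V}}\, \tfrac{k+1}{L}\, \|h_k\|,
\]
where the bound on the remainder $\eta_{k+1} = w^\top(\mathscr{V}_{(k+1)/L}-\mathscr{V}_0) h_k$ uses the Lipschitzness of $\mathscr{V}$. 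The point is that the multiplicative factor $1 + \alpha_L \lambda > 1$ amplifies $u_k$, while the perturbation factor $(k+1)/L$ stays small as long as $k$ is small compared to $L$.

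I would then argue by contradiction. Suppose $\max_{k}\|h_k\|/\|h_0\|$ does not tend to infinity; then there is a constant $M$ and a subsequence of depths along which $\|h_k\| \leqslant M\|h_0\|$ for all $k$. Along this subsequence, for indices $k \leqslant \tau L$ the remainder obeys $|\eta_{k+1}| \leqslant \tau \|w\| K_{\mathscr{V}} M \|h_0\| =: D'$, and unrolling the recursion gives $u_k \geqslant (1+\alpha_L\lambda)^k (u_0 - D'/\lambda) + D'/\lambda$. Choosing $\tau$ small enough (depending only on $M, \lambda, \|w\|, K_{\mathscr{V}}, u_0, \|h_0\|$, but \emph{not} on $L$) so that $D'/\lambda \leqslant u_0/2$ yields $u_k \geqslant (1+\alpha_L\lambda)^k u_0/2$. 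Evaluating at $k = \lfloor \tau L\rfloor$ and using $\log(1+\alpha_L\lambda) \geqslant \tfrac12 \alpha_L\lambda$ for $L$ large gives $u_{\lfloor\tau L\rfloor} \gtrsim \exp(\tfrac14 \tau\lambda L^{1-\beta}) \to \infty$ because $\beta < 1$. Since $\|h_{\lfloor\tau L\rfloor}\| \geqslant |u_{\lfloor\tau L\rfloor}|/\|w\|$, this contradicts $\|h_k\|\leqslant M\|h_0\|$. Hence $\max_k\|h_k\|/\|h_0\|\to\infty$, and since $\|h_k-h_0\|\geqslant \|h_k\|-\|h_0\|$, also $\max_k\|h_k-h_0\|/\|h_0\|\to\infty$.

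The main obstacle is precisely the remainder $\alpha_L\eta_{k+1}$: because $\mathscr{V}_0$ may have other eigenvalues of larger modulus, the full norm $\|h_k\|$ can a priori grow faster than its projection $u_k$, so a global operator-norm perturbation bound on $\prod_j(I+\alpha_L V_j)$ versus $(I+\alpha_L\mathscr{V}_0)^k$ is too lossy to isolate the $\lambda$-direction. The contradiction framing is what circumvents this: either $\|h_k\|$ is already large (and we are done directly), or it is bounded, in which case the remainder is genuinely negligible on the window $k\leqslant\tau L$ and the $(1+\alpha_L\lambda)^k$ amplification of $u_k$ takes over. A secondary, more routine point is ensuring $w^\top h_0\neq 0$, handled by the almost-sure non-degeneracy noted above.
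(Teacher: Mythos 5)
Your proof is correct and follows essentially the same route as the paper's: project $h_k$ onto an eigenvector of $\mathscr{V}_0^\top$ with positive eigenvalue, use the Lipschitz continuity of $\mathscr{V}$ to control the perturbation $w^\top(V_{k+1}-\mathscr{V}_0)h_k$, and derive a contradiction from the assumption that $\max_k\|h_k\|$ stays bounded. The only (immaterial) differences are that you restrict the recursion to a window $k\leqslant\tau L$ to keep the perturbation small, whereas the paper runs it to $k=L$ against a threshold $M\asymp\nicefrac{1}{\alpha_L}$ and thereby also extracts the quantitative rate $\max_k\|h_k-h_0\|/\|h_0\|\gtrsim L^\beta$, and that you make explicit the (paper-implicit) requirement $w^\top h_0\neq 0$.
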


The assumption of the existence of a positive eigenvalue for $\mathscr{V}_0^\top$ is mild. For instance, if the entries of $\mathscr{V}_0$ are i.i.d.~random variables with finite moments of all order, \citet{gotze2021rate} show that such an eigenvalue exists with probability at least $1-\nicefrac{1}{d}$ for $d$ large enough. Essentially, the proof relies on showing divergence of the hidden states along the eigenvector associated to a positive eigenvalue of $\mathscr{V}_0^T$. The extension to models that do not have an identity activation function is delicate. Indeed, while we expect such divergence to generally occur with a non-linear activation function, it is technically delicate to show as one has to rule out cases where the non-linearity of the activation function induces compensations that prevent divergence.

\begin{figure}
    \centering
    \includegraphics[width=\textwidth]{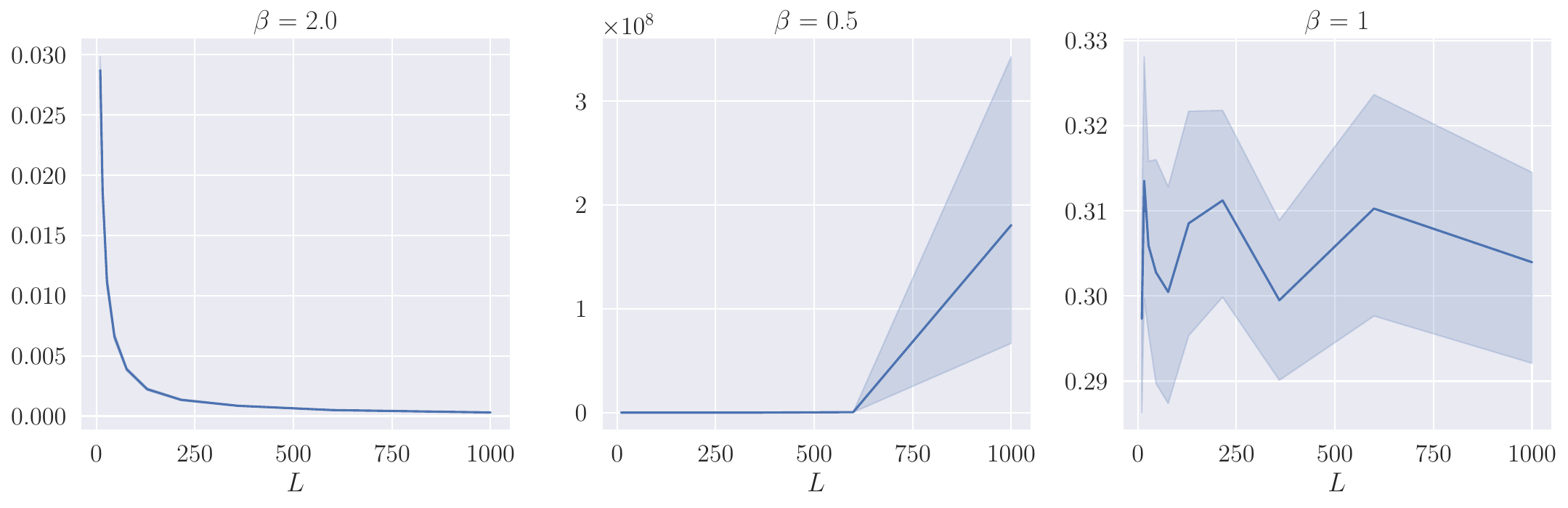}
    \caption{Evolution of $\|h_L-h_0\|/\|h_0\|$ as a function of $L$ for different values of~$\beta$ and a smooth initialization of model \texttt{res-3}, with $d=40$. The input is a random Gaussian observation $x$ in dimension $n_{\textnormal{in}} = 64$. The experiment is repeated with $50$ independent randomizations.}
    \label{fig:smooth_scaling_init_norm}
\end{figure}

\begin{figure}
    \centering
    \includegraphics[width=\textwidth]{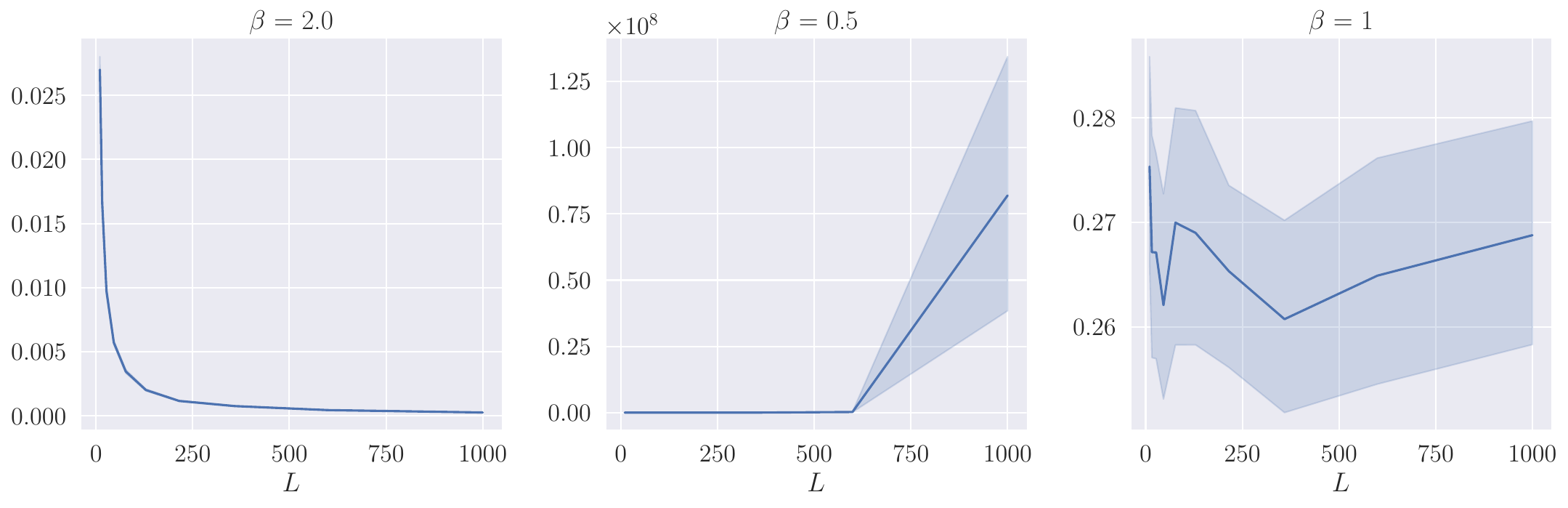}
    \caption{Evolution of $\|p_0-p_L\|/\|p_L\|$ as a function of $L$ for different values of~$\beta$ and a smooth initialization of model \texttt{res-3}, with $d=40$. The input is a random Gaussian observation $x$ in dimension $n_{\textnormal{in}} = 64$. The experiment is repeated with $50$ independent randomizations.}
    \label{fig:smooth_scaling_init_gradient}
\end{figure}

In this setting, we observe experimentally a behavior of the output and of the gradients when $L$ grows large similar to the one explored in Section \ref{sec:scaling-iid}. This is illustrated in Figures \ref{fig:smooth_scaling_init_norm} and \ref{fig:smooth_scaling_init_gradient}, which mirror Figures \ref{fig:scaling_init_norm} and \ref{fig:scaling_init_gradient} in Section \ref{sec:scaling-iid}. The figures clearly show that there exist three cases for the output and for the gradients: an identity case (left plots), an explosion case (middle), and a non-trivial case separating explosion and identity (right). However, the remarkable point is that the separation occurs for $\beta = 1$, and not $\beta=\nicefrac{1}{2}$, as predicted by Propositions \ref{prop:stability-neural-ode} and \ref{prop:explosion-neural-ode}.

\section{Experiments}
\label{sec:experiments}

We experimentally investigate in this section two questions. The first one is to know whether there exists a range of scaling factors $\beta > 0$ and weight initializations, beyond the i.i.d.~and the smooth regimes. The second question is whether our analysis, which pertains to the initialization phase, provides insights into the training phase, beyond initialization.

\subsection{Intermediate Regimes}
\label{subsec:fbm-magnitude}

In order to describe the transition between the i.i.d.~and smooth cases, a possible route is to consider that the weights are increments of a $\gamma$-H\"{o}lder stochastic process. This model is interesting insofar as the Brownian motion (SDE regime) is $(\nicefrac{1}{2}-\varepsilon)$-H\"{o}lder ($\varepsilon>0$) and a Lipschitz continuous stochastic process (ODE regime) is $1$-H\"{o}lder.

In line with the above, in a series of experiments, we initialize the weights as increments of a fractional Brownian motion $(B^H_t)_{t \in [0, 1]}$. Recall that $B^H$ is a continuous-time Gaussian process, starting at zero, with zero expectation for all $t \in [0, 1]$, and covariance function
\[
\Esp(B^{H}_s B^{H}_t) = \frac {1}{2} (|s|^{2H}+|t|^{2H}-|t-s|^{2H}), \quad 0 \leqslant s, t \leqslant 1,
\]
where $H \in (0, 1)$ is called the Hurst index. This index describes the raggedness of the process, with a higher value leading to a smoother process. When $H = \nicefrac{1}{2}$, the process is a standard Brownian motion (Definition \ref{def:brownian-motion}), whose increments are independent by construction. When $H > \nicefrac{1}{2}$, the increments of the process are positively correlated, while if $H < \nicefrac{1}{2}$ the increments are negatively correlated. Importantly, a fractional Brownian motion with Hurst index $H$ is $(H-\varepsilon)$-H\"{o}lder continuous for any $\varepsilon > 0$. In the limit when $H \rightarrow 1$, the trajectories converge to linear functions (whose increments satisfy $(A_5)$).
As an illustration, Figure \ref{fig:fbm} depicts three realizations of a fractional Brownian motion with $H=0.2$ (left), $H=0.5$ (middle), and $H=0.8$ (right).

\begin{figure}
    \centering
    \begin{subfigure}[b]{0.32\textwidth}
        \centering
        \includegraphics[width=\textwidth]{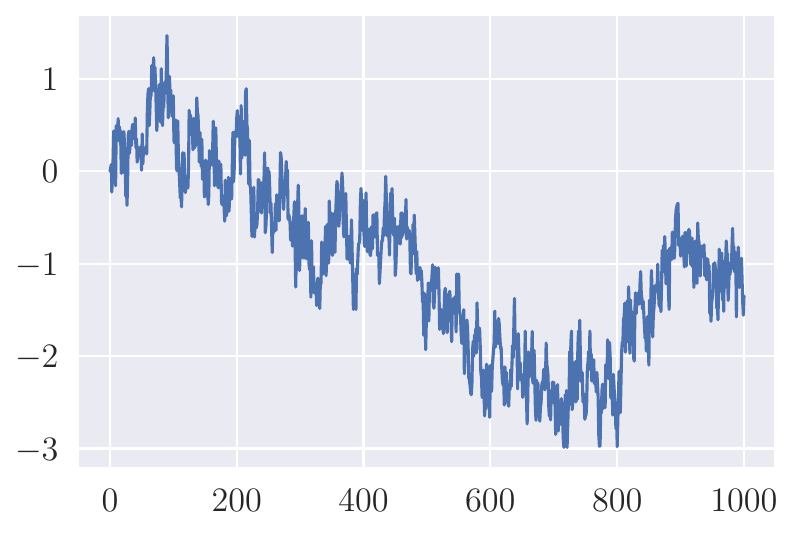}
        \caption{$H=0.2$}
    \end{subfigure}
    \hfill
    \begin{subfigure}[b]{0.32\textwidth}    
        \centering
        \includegraphics[width=\textwidth]{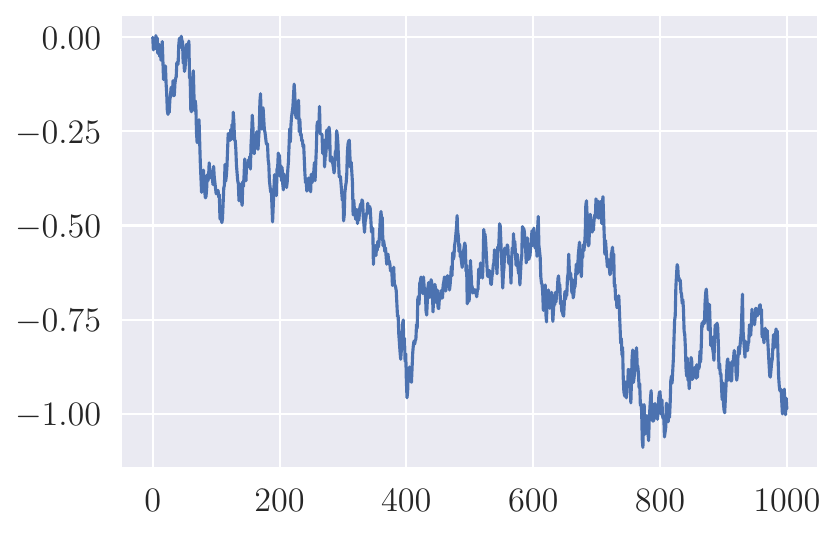}
        \caption{$H=0.5$}
    \end{subfigure}
    \hfill
    \begin{subfigure}[b]{0.32\textwidth}
        \centering
        \includegraphics[width=\textwidth]{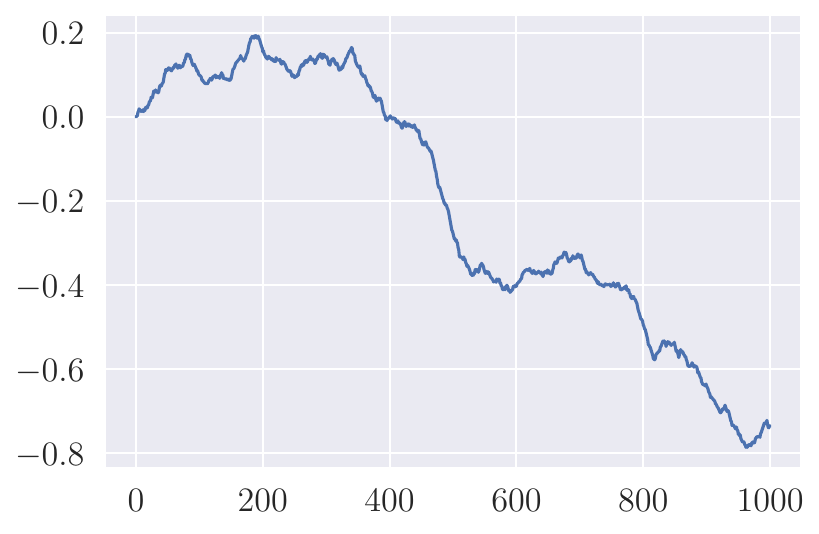}
        \caption{$H=0.8$}
    \end{subfigure}
    \caption{Examples of realizations of a fractional Brownian motion $B^H$ for different Hurst indexes $H$. Note that the smaller the value of $H$, the more irregular the trajectory is.}
    \label{fig:fbm}
\end{figure}

\begin{figure}[ht]
     \centering
     \begin{subfigure}[b]{0.49\textwidth}
         \centering
         \includegraphics[width=\textwidth]{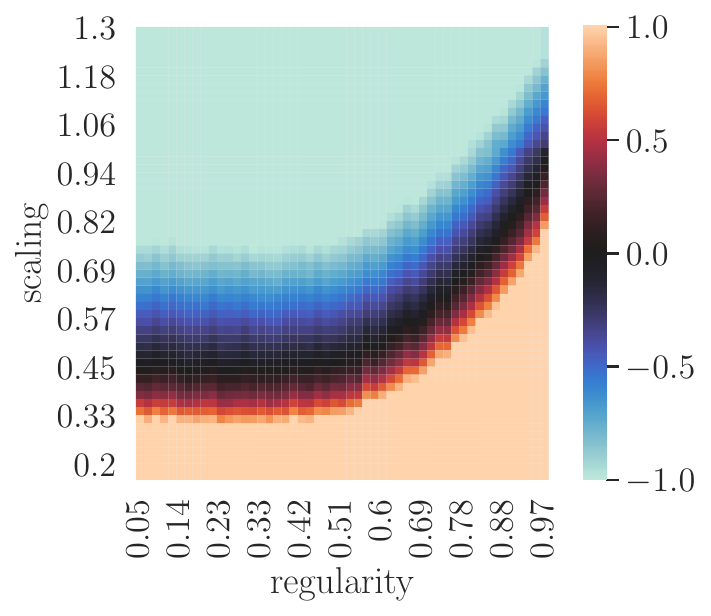}
         \caption{$\ln_{10} (\|h_L-h_0\|/\|h_0\|)$}
     \end{subfigure}
     \hfill
     \begin{subfigure}[b]{0.49\textwidth}
         \centering
         \includegraphics[width=\textwidth]{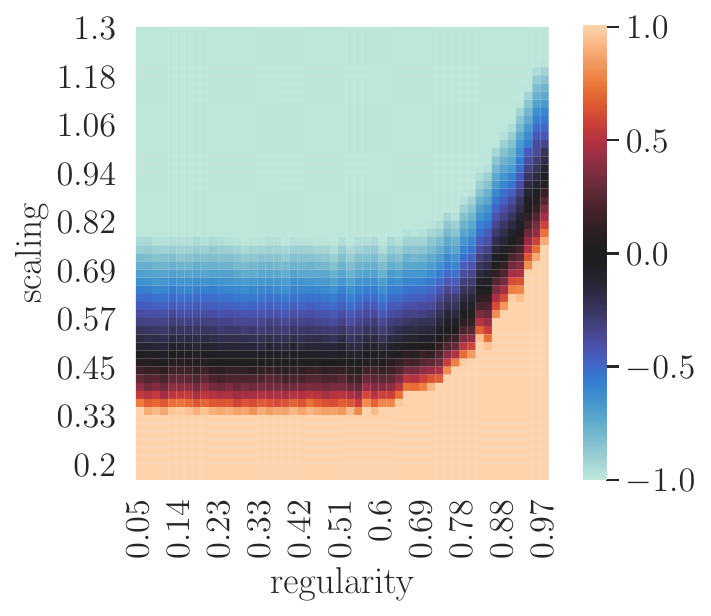}
         \caption{$\ln_{10} (\|p_0-p_L\|/\|p_L\|)$}
     \end{subfigure}
     \caption{Magnitude of the outputs and of the gradients as a function of the regularity of the weights (Hurst index $H$) and of the scaling factor $\beta$. The orange zone corresponds to the explosion case, i.e., $\|h_L-h_0\| \gg \|h_0\|$ and $\|p_0-p_L\| \gg \|p_L\|$. The blue zone corresponds to the identity case, i.e., $\|h_L-h_0\| \ll \|h_0\|$ and $\|p_0-p_L\| \ll \|p_L\|$. Finally, the black zone is an intermediate case, where $\|h_L-h_0\| \approx \|h_0\|$ and $\|p_0-p_L\| \approx \|p_L\|$.}
     \label{fig:heatmap}
\end{figure}
In order to assess the effect of the scaling factor $\beta$ and the Hurst index $H$, we initialize a neural network \texttt{res-3} with $d=40$, $L=1000$, various values of $\beta \in [0.2, 1.3]$, and with weights taken as increments of fractional Brownian motions with various Hurst indices $H \in (0, 1)$.
Figure \ref{fig:heatmap} depicts the empirical magnitude of the output and the gradients at initialization as a function of the Hurst index $H$ and the scaling factor $\beta$. 
First note that we recover the two regimes (i.i.d.~and smooth) discussed so far. For $H=\nicefrac{1}{2}$, the i.i.d.~regime kicks in, with explosion ($\beta<\nicefrac{1}{2}$, orange zone), non-trivial behavior ($\beta=\nicefrac{1}{2}$, black zone), and identity ($\beta>\nicefrac{1}{2}$, blue zone). Likewise, we see at $H=1$ a similar pattern in the smooth regime, with, as predicted by Proposition \ref{prop:stability-neural-ode}, a critical value $\beta=1$. %
Beyond these two specific cases, we observe for an index $H$ varying in $(\nicefrac{1}{2}, 1)$ a whole range of intermediate situations, where the transition between identity and explosion seems to happen for a critical $\beta = H$. Interestingly, for $H < \nicefrac{1}{2}$, the transition seems to saturate at the value $\beta=\nicefrac{1}{2}$.

The take-home message is that the choice of the scaling of a ResNet seems to be closely linked to the regularity of the weights as a function of the layer. More precisely, for all regimes,  the critical scaling factor between explosion and identity seems to have a natural interpretation as the (H\"{o}lder) regularity of the underlying continuous-time stochastic process.
We believe that the mathematical understanding of this connection, beyond the fractional Brownian motion case, is a promising research direction for the future.
Finally, these experiments suggest that it is sensible to initialize a ResNet for any value of the scaling $\beta \in (\nicefrac{1}{2}, 1)$, while avoiding the identity and explosion situations, by simulating a fractional Brownian motion of Hurst index $H=\beta$ and initializing the weights as the increments of this process.

\subsection{Beyond Initialization}

At initialization, before the gradient descent, the distribution of the weights $(w_k)_{1 \leqslant k \leqslant L}$ and $(V_k)_{1 \leqslant k \leqslant L}$ is chosen by the practitioner. By contrast, during and after training, control is lost on these distributions, making the picture more complex. In particular, the existence and characterization of a continuous-time stochastic process whose discretization matches the trained ResNet is an interesting but difficult problem. Attacking this question requires a fine understanding of the interaction between training dynamics and the regularity of the sequence of the weights during the gradient descent.
However, there is experimental evidence that the trained weights exhibit strong structure as a function of the layer index $k$ \citep{cohen2021scaling,bayer2022resnetsRoughPath}, and that their regularity strongly depends on the choice of initialization. 
Interesting theoretical preliminary results in the ODE case are reported in \citet{sander2022do} for a linear activation and further generalized by \citet{marion2024implicit}.
Figure \ref{fig:ex-weights-after-training} depicts this mechanism by plotting a given coordinate of $w_k$ as a function of the layer index $k$ ranging from $1$ to the depth $L=1000$, after training. Note that, in this experiment, there is no bias term in the residual layers, following the formulations from Table \ref{tab:examples}. We check that adding a bias term gives qualitatively similar plots (see Appendix~\ref{apx:experimental-setting}).

\begin{figure}[!b]
     \centering
     \begin{subfigure}[b]{0.32\textwidth}
         \centering
         \includegraphics[width=\textwidth]{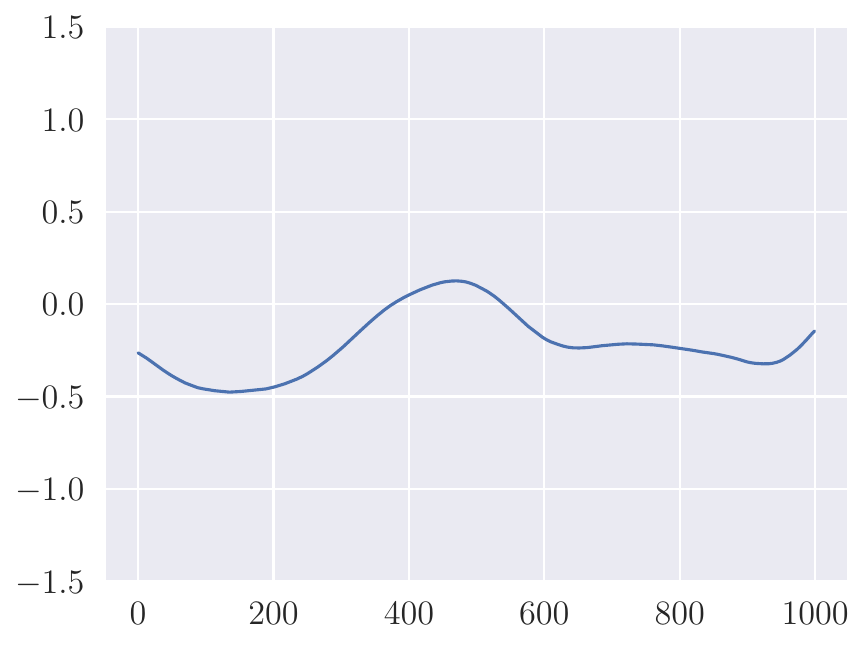}
         \caption{$\beta=1$, smooth initialization}
     \end{subfigure}
     \hfill
     \begin{subfigure}[b]{0.32\textwidth}
         \centering
         \includegraphics[width=\textwidth]{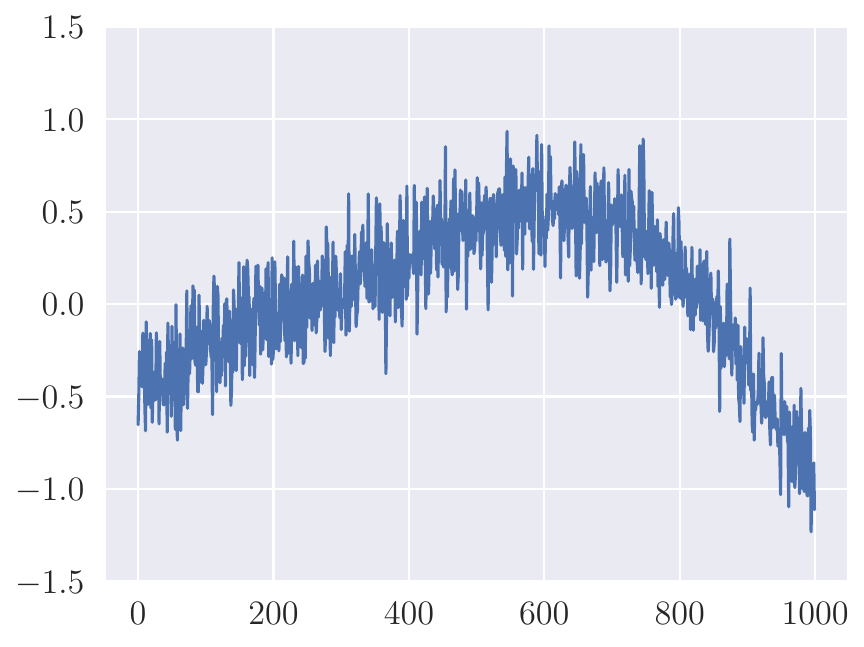}
         \caption{$\beta=1$, i.i.d.~initialization}
     \end{subfigure}
     \hfill
     \begin{subfigure}[b]{0.32\textwidth}
         \centering
         \includegraphics[width=\textwidth]{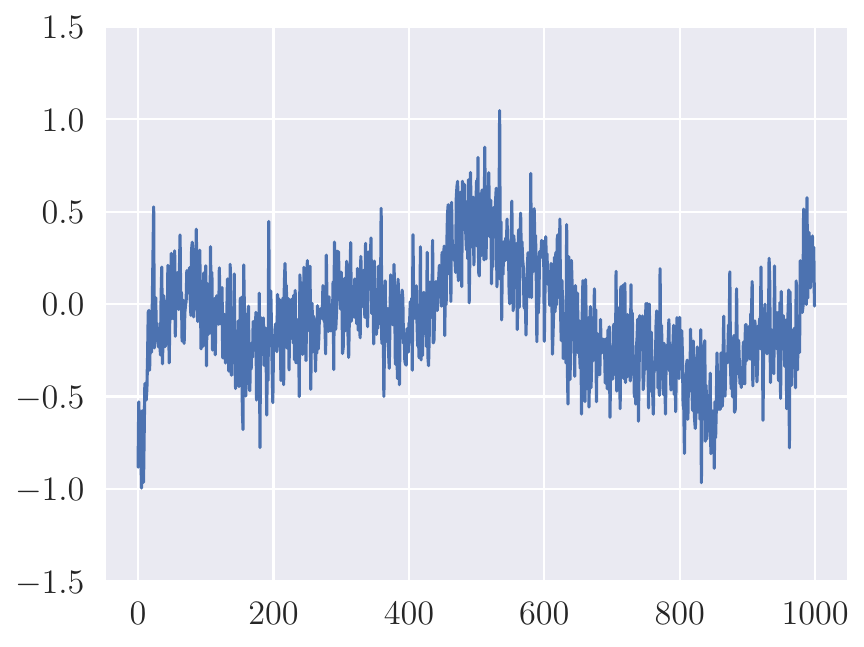}
         \caption{$\beta=\nicefrac{1}{2}$, i.i.d.~initialization}
     \end{subfigure}
     \caption{Plot of a given coordinate of $w_k$, after training, as a function of the layer index $k$ ranging from $1$ to the depth $L=1000$ for three different choices of $\beta$ and initializations.}
     \label{fig:ex-weights-after-training}
\end{figure}
To investigate the link between regularity of the weights at initialization, scaling, and performance after training, we train ResNets on the datasets MNIST \citep{deng2012mnist} and CIFAR-10 \citep{Krizhevsky2009learningmultiple}. As in Subsection~\ref{subsec:fbm-magnitude}, we initialize the ResNets with various scaling factors and weights that are increments of fractional Brownian motions with different regularities. Then, for each combination of weight initialization and scaling factor, the ResNet is trained using the Adam optimizer \citep{kingmaAdamMethodStochastic2017} for $10$ epochs. The model includes a zero-initialized bias term on each residual layer. The results in terms of accuracy are presented in Figure~\ref{fig:heatmap-trained} (light orange = good performance, blue = bad performance). We observe a pattern similar to the one of Figure~\ref{fig:heatmap}. This means that, for a given regularity, the network is unable to learn if it is initialized with a scaling too far below the critical value, which of course is connected with the gradient explosion issue discussed previously. On the other hand, and perhaps more surprisingly, the performance seems to be more or less stable in the identity region, with perhaps a small degradation in the case of CIFAR-10. This somewhat contrasts with the results from \citet{yang2017mean}, who exhibit a decrease in performance for i.i.d.~initialization and a large scaling factor $\beta$. Note however that, in the experiments reported in Figure~\ref{fig:heatmap}, we adapt the learning rate of the gradient descent on a grid by cross-validation. %
 When taking a fixed learning rate, we also observe a decrease in performance for large scaling factor $\beta$.
The interplay between the learning rate and the scaling factor is one of the keys to better assess how the performance of the trained network is connected with the scaling.
\begin{figure}[!t]
     \centering
     \begin{subfigure}[b]{0.49\textwidth}
         \centering
         \includegraphics[width=\textwidth]{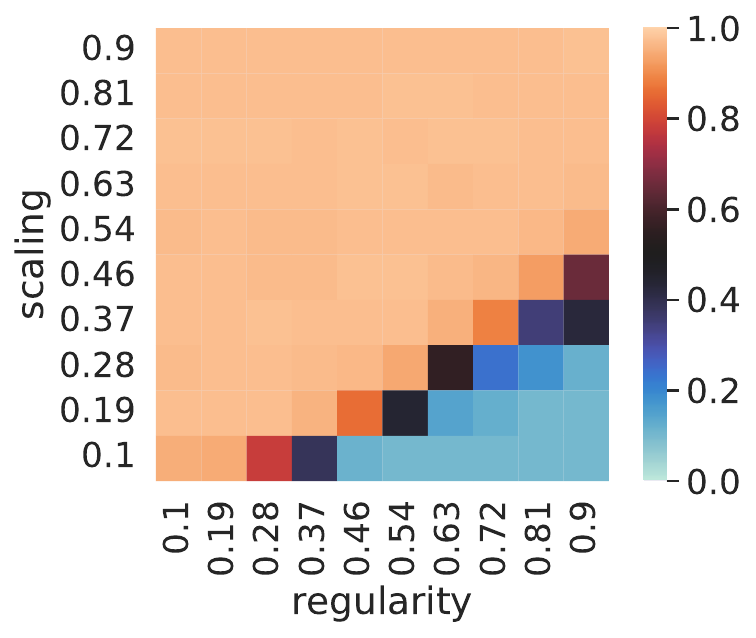}
         \caption{On MNIST}
     \end{subfigure}
     \hfill
     \begin{subfigure}[b]{0.49\textwidth}
         \centering
         \includegraphics[width=\textwidth]{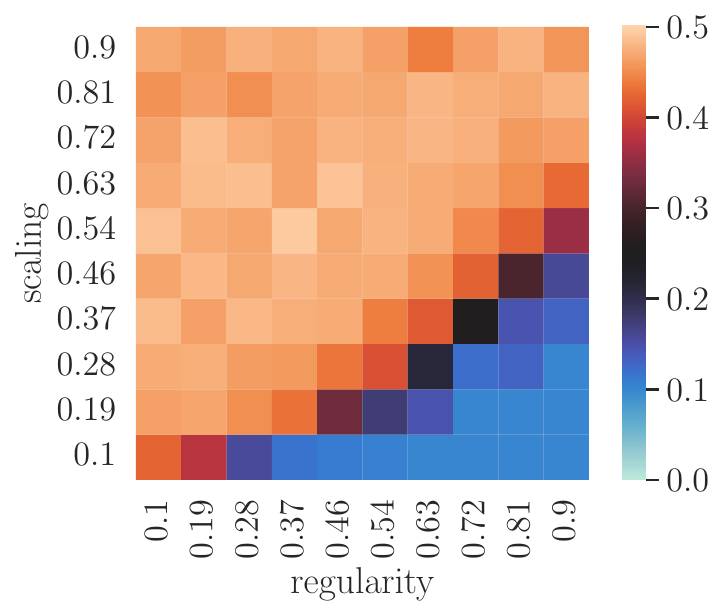}
         \caption{On CIFAR-10}
     \end{subfigure}
     \caption{Accuracy after training as a function of the regularity of the weights at initialization and scaling. For each point of the heatmap, the model was trained on a grid of learning rates, and the best performance is shown.}
     \label{fig:heatmap-trained}
\end{figure}

\acks{The authors thank anonymous referees for insightful comments. They also thank S. Schoenholz for fruitful discussion. P. Marion was supported by a grant from R\'{e}gion Île-de-France and by a Google PhD Fellowship award.}

\appendix
\section{Proofs}

Throughout the proofs, the $i$-th coordinate of a vector $v$ is denoted by $v_i$. Similarly, the $i$-th row of a matrix $M$ is denoted by $M_i$, and its $(i, j)$-th entry by $M_{ij}$.

\subsection{Proof of Proposition \ref{prop:standard-resnet-verifies-assumptions}}

Statement $(i)$ is clear (with $C = 1$) since, for any $h\in\R^d$,
\begin{equation*}
 \|\sigma(h)\|^2 \in \big[ a^2 \|h\|^2, b^2 \|h\|^2 \big] \subseteq \big[\frac{1}{2} \|h\|^2, \|h\|^2 \big].
\end{equation*}
With respect to statement $(ii)$, it is enough to show that for any $h \in \R^d$ and any random matrix $W$ satisfying the assumptions of the proposition, one has
\[
\frac{\|h\|^2}{2} \leqslant\Esp\big(\|\sigma(Wh)\|^2\big) \leqslant \|h\|^2 \quad \textnormal{and} \quad 
\Esp\big(\|\sigma(Wh)\|^8\big) \leqslant C\|h\|^8,
\]
as well as
\[
\Esp\big(\|\ReLU(Wh)\|^2\big) = \frac{\|h\|^2}{2} \quad \textnormal{and} \quad 
\Esp\big(\|\ReLU(Wh)\|^8\big) \leqslant C\|h\|^8.
\]
The two claims with the squared norms are consequences of Lemmas \ref{lemma:technical-proof-1} and \ref{lemma:technical-proof-2} in Appendix \ref{apx:sub-gaussian-matrices}, together with the fact that the variance of the entries of $W$ equals $1/d$. In order to prove the other two statements, first note that
$\Esp(\|\sigma(Wh)\|^8) \leqslant\Esp(\|Wh\|^8)$ and $\Esp(\|\ReLU(Wh)\|^8) \leqslant\Esp(\|Wh\|^8)$. The results are then consequences of Lemma \ref{lemma:bound-deviations-quadratic} in Appendix \ref{apx:sub-gaussian-matrices}, which states that
\[
\Esp\left(\frac{\|Wh\|^{8}}{\|h\|^{8}}\right) \leqslant 1 + \frac{384s^4}{d} + \frac{3072s^6}{d^2} \leqslant 1 + 384s^4 + 3072s^6.
\]

\subsection{Proof of Proposition \ref{prop:init-forward-high-prob}}
According to Lemma \ref{lemma:technical-proof-3} below, one has
\[\Esp \Big(\frac{\|h_L - h_0\|^2}{\|h_0\|^2}\Big) \leqslant \big(\big(1+\alpha_L^2\big)^L - 1\big).\]
But, for $L \alpha_L^2 \leqslant1$, we have
$
(1+\alpha_L^2)^L - 1 \leqslant\exp(L \alpha_L^2) - 1 \leqslant2 L \alpha_L^2.
$
Therefore,
\[
\Esp \Big(\frac{\|h_L - h_0\|^2}{\|h_0\|^2}\Big) \leqslant2 L \alpha_L^2,
\]
and the result follows from Markov's inequality.
\begin{lemma}   \label{lemma:technical-proof-3}
Consider a ResNet \eqref{eq:discrete-resnet} such that Assumptions $(A_1)$ and $(A_2)$ are satisfied. Then
\[
\bigg(\Big(1+\frac{\alpha_L^2}{2}\Big)^L - 1\bigg)\leqslant \Esp \Big(\frac{\|h_L - h_0\|^2}{\|h_0\|^2}\Big) \leqslant\Big(\big(1+\alpha_L^2\big)^L - 1\Big).
\]
\end{lemma}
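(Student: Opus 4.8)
The plan is to track the second moment $\Esp(\|h_k\|^2)$ layer by layer using the recurrence \eqref{eq:rec-norm-hidden-states}, and then to convert bounds on $\Esp(\|h_L\|^2)$ into bounds on $\Esp(\|h_L-h_0\|^2)$ by exploiting the martingale structure of $(h_k)$. Throughout, I condition on the natural filtration $\mathscr{F}_k = \sigma(A, \theta_1, V_1, \dots, \theta_k, V_k)$, with respect to which $h_k$ is measurable while $(\theta_{k+1}, V_{k+1})$ is independent of $\mathscr{F}_k$.

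First I would take the conditional expectation of \eqref{eq:rec-norm-hidden-states} given $\mathscr{F}_k$. As recalled in \eqref{eq:sketch-proof-forward-2}, the scalar-product term vanishes: since $V_{k+1}$ is centered (by the symmetry in $(A_1)$) and independent of both $\mathscr{F}_k$ and $\theta_{k+1}$, one has $\Esp(\langle h_k, V_{k+1} g(h_k, \theta_{k+1})\rangle \mid \mathscr{F}_k) = 0$. For the quadratic term, the unit-variance normalization of the entries of $\sqrt{d}V_{k+1}$ in $(A_1)$ gives $\Esp(\|V_{k+1} w\|^2 \mid \mathscr{F}_k) = \|w\|^2$ for any $\mathscr{F}_k$-measurable $w$, whence $\Esp(\|V_{k+1} g(h_k, \theta_{k+1})\|^2 \mid \mathscr{F}_k) = \Esp(\|g(h_k, \theta_{k+1})\|^2 \mid \mathscr{F}_k)$, which $(A_2)$ sandwiches between $\tfrac{1}{2}\|h_k\|^2$ and $\|h_k\|^2$. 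Combining these, I obtain the one-step estimate
\[
\Big(1 + \tfrac{\alpha_L^2}{2}\Big)\|h_k\|^2 \leqslant \Esp\big(\|h_{k+1}\|^2 \mid \mathscr{F}_k\big) \leqslant \big(1 + \alpha_L^2\big)\|h_k\|^2.
\]

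Taking a further expectation conditional on $h_0$ and iterating from $k=0$ to $L-1$ yields, by a straightforward induction,
\[
\Big(1 + \tfrac{\alpha_L^2}{2}\Big)^L \|h_0\|^2 \leqslant \Esp\big(\|h_L\|^2 \mid h_0\big) \leqslant \big(1 + \alpha_L^2\big)^L \|h_0\|^2.
\]
To finish, I would use that $(h_k)$ is a martingale — a consequence of the same centering argument, giving $\Esp(h_L \mid h_0) = h_0$ — so that in $L^2$ the increment $h_L - h_0$ is orthogonal to the vector $h_0$ (deterministic given $h_0$). Expanding $\|h_L\|^2 = \|h_0\|^2 + 2\langle h_0, h_L - h_0\rangle + \|h_L - h_0\|^2$ and taking conditional expectation then gives $\Esp(\|h_L - h_0\|^2 \mid h_0) = \Esp(\|h_L\|^2 \mid h_0) - \|h_0\|^2$. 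Dividing by $\|h_0\|^2$ and inserting the induction bounds yields the claimed two-sided inequality conditionally on $h_0$; since these bounds are deterministic constants, averaging over $h_0$ preserves them.

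I do not expect a genuine obstacle here: once the conditioning structure is fixed, the argument is essentially bookkeeping. The only point requiring care is to use the centering property \emph{twice} — once in the one-step recurrence to kill the scalar-product term, and once more to identify $\Esp(\|h_L-h_0\|^2)$ with $\Esp(\|h_L\|^2) - \|h_0\|^2$ rather than expanding $\|h_L - h_0\|^2$ directly (which would reintroduce cross terms between non-adjacent increments). Keeping track of the normalization by $\|h_0\|^2$ when $h_0 = Ax$ is itself random is handled cleanly by conditioning on $h_0$, precisely because the resulting bounds do not depend on its value.
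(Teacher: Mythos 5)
Your proof is correct and follows essentially the same route as the paper's: a one-step second-moment recurrence derived from $(A_1)$ and $(A_2)$, iterated to sandwich $\Esp(\|h_k\|^2)$, combined with the martingale structure to reduce $\Esp(\|h_L-h_0\|^2)$ to $\Esp(\|h_L\|^2)-\|h_0\|^2$. The only cosmetic difference is that the paper expands $h_L-h_0$ as a sum of increments and kills the cross terms by conditioning, whereas you invoke the Pythagorean identity for martingales directly; the two computations are equivalent and yield identical bounds.
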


\medskip \noindent \textit{Proof (Lemma \ref{lemma:technical-proof-3}).}
Taking the squared norm of the forward update rule \eqref{eq:discrete-resnet} and dividing by $\|h_0\|^2$ yields
\begin{equation}    \label{eq:norm-update}
 \frac{\|h_{k+1}\|^2}{\|h_0\|^2} = \frac{1}{\|h_0\|^2} \Big( \|h_k\|^2 + \alpha_L^2 \|V_{k+1}g(h_k, w_{k+1})\|^2 + 2 \alpha_L \langle h_k, V_{k+1}g(h_k, w_{k+1}) \rangle \Big).  
\end{equation}
We deduce by Assumptions $(A_1)$ and $(A_2)$ that
\[
\Big(1 + \frac{\alpha_L^2}{2}\Big) \Esp \Big(\frac{\|h_{k}\|^2}{\|h_0\|^2}\Big) \leqslant \Esp\Big(\frac{\|h_{k+1}\|^2}{\|h_0\|^2}\Big) \leqslant(1 + \alpha_L^2) \Esp \Big(\frac{\|h_{k}\|^2}{\|h_0\|^2}\Big).
\]
Therefore, by recurrence, we are led to
\begin{equation} \label{eq:formula-expectation-h_k}
\Big(1 + \frac{\alpha_L^2}{2}\Big)^k \leqslant \Esp\Big(\frac{\|h_{k}\|^2}{\|h_0\|^2}\Big) \leqslant(1 + \alpha_L^2)^k.   
\end{equation}
Now, observe that $h_L = h_0 + \alpha_L \sum_{k=0}^{L-1} V_{k+1}g(h_k, w_{k+1})$. Thus, we have
\begin{equation*}
 \Esp \Big(\frac{\|h_L - h_0\|^2}{\|h_0\|^2}\Big) = \alpha_L^2 \sum_{k,k'=0}^{L-1} \Esp\Big(\frac{g(h_k, w_{k+1})^\top V_{k+1}^\top V_{k'+1}g(h_{k'}, w_{k'+1})}{\|h_0\|^2}\Big).  
\end{equation*}
By conditioning on all random variables except $V_{k'+1}$ for $k < k'$ (and $V_{k+1}$ for $k > k'$), it is easy to see that the only non-zero terms are when $k=k'$. This yields
\begin{align*}
\Esp \Big(\frac{\|h_L - h_0\|^2}{\|h_0\|^2}\Big) &= \alpha_L^2 \sum_{k=0}^{L-1} \Esp\Big(\frac{\|V_{k+1}g(h_k, w_{k+1})\|^2}{\|h_0\|^2} \Big) \\
&\leqslant\alpha_L^2 \sum_{k=0}^{L-1} \Esp\Big(\frac{\|h_{k}\|^2}{\|h_0\|^2}\Big) \\
&\quad \mbox{(by Assumptions $A_1$ and $A_2$)} \\
&\leqslant\alpha_L^2 \sum_{k=0}^{L-1} (1 + \alpha_L^2)^k  \\
&\quad \mbox{(by \eqref{eq:formula-expectation-h_k})} \\
&= \big(\big(1+\alpha_L^2\big)^L - 1\big).
\end{align*}
Similarly,
\begin{align*}
\Esp \Big(\frac{\|h_L - h_0\|^2}{\|h_0\|^2}\Big)  &\geqslant \frac{\alpha_L^2}{2} \sum_{k=0}^{L-1} \Esp\Big(\frac{\|h_{k}\|^2}{\|h_0\|^2}\Big) \\
&= \Big(\Big(1+\frac{\alpha_L^2}{2}\Big)^L - 1\Big).
\end{align*}
\hfill\BlackBox\\[2mm]

\subsection{Proof of Proposition \ref{prop:init-forward-high-prob2}}
\label{proof:prop:init-forward-high-prob2}

Dividing \eqref{eq:norm-update} by $\|h_k\|^2$ and taking the logarithm leads to
\begin{equation*}
\ln(\|h_{k+1}\|^2) = \ln(\|h_k\|^2) + \ln \bigg(1 + \alpha_L^2 \frac{\|V_{k+1}g(h_k, w_{k+1})\|^2}{\|h_k\|^2} + 2 \alpha_L \Big\langle \frac{h_k}{\|h_k\|}, \frac{V_{k+1}g(h_k, w_{k+1})}{\|h_k\|} \Big\rangle \bigg).
\end{equation*}
Let 
\[Y_{k, 1} = \alpha_L^2 \frac{\|V_{k+1}g(h_k, w_{k+1})\|^2}{\|h_k\|^2}, \quad Y_{k,2} = 2 \alpha_L \Big\langle \frac{h_k}{\|h_k\|}, \frac{V_{k+1}g(h_k, w_{k+1})}{\|h_k\|} \Big\rangle,\] 
and $Y_k = Y_{k, 1} + Y_{k,2}$. The proof of Proposition \ref{prop:init-forward-high-prob2} strongly relies on the following lemma, which provides technical information on the moments of $Y_{k, 1}$ and $Y_{k,2}$. For the sake of clarity, its proof is postponed to Appendix \ref{apx:proof-prop:bounds-moments}.
\begin{lemma} \label{prop:bounds-moments}
Assume that Assumptions $(A_1)$ and $(A_2)$ are satisfied. Then

\begin{itemize}
  \begin{minipage}{0.35\linewidth}   
    \item[$(E_1)$] $\Esp(Y_{k,2}|h_k) = 0$. 
    \item[$(E_2)$] $\frac{\alpha_L^2}{2} \leqslant \Esp(Y_{k,1}|h_k) \leqslant\alpha_L^2$.
    \item[$(E_3)$] $\Esp(Y_{k,1}Y_{k,2}|h_k) = 0$.
    \item[$(E_4)$] $\Esp(Y_{k,2}^2|h_k) \leqslant4 \frac{\alpha_L^2}{d}$.
  \end{minipage}
  \hspace{1.8cm}
  \begin{minipage}{0.5\linewidth}
    \item[$(E_5)$] $\Esp(Y_{k,2}^4|h_k) \leqslant2048 \frac{s^4\alpha_L^4}{d^2}$.
    \item[$(E_6)$] $\Esp(Y_{k,1}^4|h_k) \leqslant C \left(3072 \frac{s^{6}}{d^2} + 384 \frac{s^4}{d} + 1 \right)\alpha_L^8$.
    \item[$(E_7)$] $\Esp(Y_{k,1}^2|h_k) \leqslant \sqrt{C} \left(128 \frac{s^{4}}{d} + 1 \right)\alpha_L^4$.
  \end{minipage}
\end{itemize}
\end{lemma}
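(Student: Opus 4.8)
The plan is to fix the pair $(h_k,\theta_{k+1})$ and work conditionally, so that the vector $w := g(h_k,\theta_{k+1}) \in \R^d$ becomes deterministic and the only remaining randomness is carried by $V_{k+1}$. Writing $M := \sqrt{d}\,V_{k+1}$, Assumption $(A_1)$ guarantees that $M$ has i.i.d.\ symmetric, unit-variance, $s^2$ sub-Gaussian entries. With this notation the two quantities become
\begin{equation*}
Y_{k,1} = \frac{\alpha_L^2}{d}\,\frac{\|M w\|^2}{\|h_k\|^2}, \qquad Y_{k,2} = \frac{2\alpha_L}{\sqrt{d}\,\|h_k\|^2}\,\langle h_k, M w\rangle,
\end{equation*}
so that each statement reduces to a conditional moment of a linear or quadratic form in $M$, followed by an integration over $\theta_{k+1}$ governed by Assumption $(A_2)$.

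The identities $(E_1)$ and $(E_3)$ I would obtain for free from symmetry: since $M \overset{d}{=} -M$, any expression that is odd in the entries of $M$ has vanishing conditional expectation. The form $\langle h_k, Mw\rangle$ is linear and $\|Mw\|^2\langle h_k, Mw\rangle$ is cubic in $M$, so both average to zero given $(h_k,\theta_{k+1})$, hence given $h_k$. For $(E_2)$ and $(E_4)$ I would use only independence and the unit-variance normalization, i.e.\ $\Esp(M_{ij}M_{i'j'}) = \delta_{ii'}\delta_{jj'}$: this yields $\Esp(\|Mw\|^2) = d\|w\|^2$ and $\Esp(\langle h_k, Mw\rangle^2) = \|h_k\|^2\|w\|^2$, whence $\Esp(Y_{k,1}\mid h_k,\theta_{k+1}) = \alpha_L^2\|w\|^2/\|h_k\|^2$ and $\Esp(Y_{k,2}^2\mid h_k,\theta_{k+1}) = 4\alpha_L^2\|w\|^2/(d\|h_k\|^2)$. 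Integrating over $\theta_{k+1}$ and invoking the two-sided bound $\tfrac12\|h_k\|^2 \le \Esp(\|w\|^2) \le \|h_k\|^2$ of $(A_2)$ then delivers $(E_2)$ and $(E_4)$.

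The fourth-moment estimate $(E_5)$ rests on the observation that, conditionally on $(h_k,\theta_{k+1})$, the scalar $\langle h_k, Mw\rangle = \sum_{i,j}(h_k)_i w_j M_{ij}$ is a weighted sum of i.i.d.\ $s^2$ sub-Gaussian variables, hence is itself sub-Gaussian with parameter $s^2\sum_{i,j}(h_k)_i^2 w_j^2 = s^2\|h_k\|^2\|w\|^2$; the standard fourth-moment bound for sub-Gaussian variables then controls $\Esp(\langle h_k,Mw\rangle^4)$ by a constant multiple of $s^4\|h_k\|^4\|w\|^4$, and integrating $\|w\|^4$ against $(A_2)$ closes the estimate. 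The two remaining bounds $(E_6)$ and $(E_7)$ are the genuinely technical ones, as they require the fourth and eighth conditional moments of the quadratic form $\|Mw\|^2$; here I would quote Lemma \ref{lemma:bound-deviations-quadratic}, which (after rescaling by $\sqrt d$) provides exactly $\Esp(\|Mw\|^{8}) \le d^4\big(1 + \tfrac{384 s^4}{d} + \tfrac{3072 s^6}{d^2}\big)\|w\|^8$ together with the analogous bound $\Esp(\|Mw\|^4) \le d^2\big(1+\tfrac{128 s^4}{d}\big)\|w\|^4$. Substituting these and integrating the resulting powers of $\|w\|$ against $(A_2)$—using $\Esp(\|w\|^8) \le C\|h_k\|^8$ directly for $(E_6)$, and $\Esp(\|w\|^4) \le \sqrt{\Esp(\|w\|^8)} \le \sqrt{C}\|h_k\|^4$ via Cauchy--Schwarz for $(E_7)$—produces the stated constants. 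The main obstacle is thus entirely hidden inside Lemma \ref{lemma:bound-deviations-quadratic}: obtaining sharp (in $d$) moment bounds for $\|Mw\|^2$ demands a careful expansion of $\Esp\big((\sum_{i,j}M_{ij}w_j)^{2}\big)^{p}$ and combinatorial control of the surviving cross terms via the sub-Gaussian moment growth, everything else being routine conditioning and the substitution of $(A_2)$.
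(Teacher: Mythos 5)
Your treatment of $(E_1)$--$(E_4)$, $(E_6)$ and $(E_7)$ matches the paper's proof in substance: conditioning on $(h_k,\theta_{k+1})$, killing the odd moments by the symmetry of the entries of $V_{k+1}$, computing the second moments from independence and the unit-variance normalization, and feeding the fourth/eighth moment bounds of Lemma \ref{lemma:bound-deviations-quadratic} into $(A_2)$ (with the Cauchy--Schwarz step $\Esp(\|w\|^4)\leqslant\sqrt{\Esp(\|w\|^8)}\leqslant\sqrt{C}\|h_k\|^4$ exactly as the paper needs for $(E_7)$). All of that is correct and recovers the stated constants.

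The one genuine discrepancy is $(E_5)$. Your route bounds $\Esp\big(\langle h_k,V_{k+1}w\rangle^4\mid h_k,\theta_{k+1}\big)$ by a multiple of $s^4\|h_k\|^4\|w\|^4$ and then integrates $\|w\|^4$ over $\theta_{k+1}$; but $(A_2)$ only controls $\Esp(\|w\|^2)$ and $\Esp(\|w\|^8)$, so this integration necessarily passes through $\Esp(\|w\|^4)\leqslant\sqrt{C}\|h_k\|^4$ and produces a bound of the form $c\sqrt{C}\,s^4\alpha_L^4/d^2$. The stated $(E_5)$ has no factor of $C$. The paper avoids this by normalizing the linear form by the \emph{random} norm $\|g(h_k,\theta_{k+1})\|$, applying the conditional sub-Gaussian tail bound of Lemma \ref{lemma:bound-deviations-linear}, and then moving the expectation over $\theta_{k+1}$ inside the exponential via Jensen's inequality, so that only $\Esp(\|g\|^2\mid h_k)\leqslant\|h_k\|^2$ is ever used; the moments are then read off the resulting tail bound $\Prob(|Y_{k,2}|>t)\leqslant 2\exp(-dt^2/(16\alpha_L^2 s^2))$. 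This is not merely cosmetic: that tail bound (inequality \eqref{eq:general-bound-large-deviation}) is reused verbatim in the proof of Proposition \ref{prop:init-forward-high-prob2} to control $\Prob(Y_{k,2}<-\nicefrac{1}{2})$, so the detour through the tail rather than through conditional moments is doing extra work that your version would have to supply separately. Your argument would still suffice for the downstream results after adjusting the admissible range of $\alpha_L$ by a factor depending on $C$, but it does not prove $(E_5)$ as stated.
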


For $c > 0$, we have
\begin{align*}
    \Prob\Big(\frac{\|h_L\|^2}{\|h_0\|^2} \geqslant c\Big) 
    &= \Prob \Big(\ln(\|h_L\|^2) - \ln(\|h_0\|^2) \geqslant \ln (c)\Big) \\
    &= \Prob \Big(\sum_{k=0}^{L-1} \ln(1+Y_k) \geqslant \ln (c)\Big) \\
    & \leqslant\Prob \Big(\sum_{k=0}^{L-1} Y_k \geqslant \ln (c)\Big) \\
    &\quad \mbox{(using $\ln(1+x) \leqslant x$ for $x > -1$).}
\end{align*}
Let $S = \sum_{k=0}^{L-1} Y_k - \Esp(Y_k|h_k)$. By $(E_1)$ and $(E_2)$,
\[
\sum_{k=0}^{L-1} \Esp(Y_k|h_k) \leqslant L\alpha_L^2.
\]
So, for $c > \exp(L\alpha_L^2)$,
\begin{align}
\Prob\Big(\frac{\|h_L\|^2}{\|h_0\|^2} \geqslant c\Big) 
&\leqslant \Prob \bigg(S \geqslant \ln (c) - \sum_{k=0}^{L-1} \Esp(Y_k|h_k)\bigg) \nonumber \\
&\leqslant \Prob (S \geqslant \ln (c) - L\alpha_L^2) \nonumber \\
&\leqslant \Prob \Big(S^2 \geqslant \big(\ln (c) - L\alpha_L^2\big)^2\Big) \nonumber \\
&\leqslant\frac{\Esp(S^2)}{(\ln (c) - L\alpha_L^2)^2}  \label{eq:proof-markov} \\
&\quad \mbox{(by Markov's inequality.)} \nonumber
\end{align}
It remains to upper bound $\Esp(S^2)$. To this aim, note that
\begin{align*}
\Esp(S^2) = \sum_{k=0}^{L-1} \Esp \Big( \big(Y_k - \Esp(Y_k|h_k) \big)^2 \Big) &\leqslant\sum_{k=0}^{L-1} \Esp \big(Y_k^2 \big) \\
&\leqslant4 \frac{L \alpha_L^2}{d} + 128 \sqrt{C} \frac{L \alpha_L^4s^4}{d} + \sqrt{C} L\alpha_L^4 \\
&\quad \mbox{(by $(E_3)$, $(E_4)$, and $(E_7)$)} \\
&\leqslant5 \frac{L \alpha_L^2}{d}.
\end{align*}
The last inequality is true for $\alpha_L^2 \leqslant \frac{1}{\sqrt{C}(d + 128 s^4)}$. Therefore, by inequality \eqref{eq:proof-markov}, we obtain, for $c > \exp(L\alpha_L^2)$,
\[
\Prob\Big(\frac{\|h_L\|^2}{\|h_0\|^2} \geqslant c\Big) \leqslant\frac{5L \alpha_L^2}{d\left(\ln (c) - L \alpha_L^2\right)^2}.
\]
We conclude that, for any $\delta \in (0,1)$, with probability at least $1-\delta$,
\[
\frac{\|h_L\|^2}{\|h_0\|^2} < \exp \left(L\alpha_L^2 + \sqrt{\frac{5L\alpha_L^2}{d\delta}} \right).
\]
This shows statement $(ii)$ of the proposition.

Next, to prove statement $(i)$, observe that $c > 0$,
\begin{align*}
    \Prob\Big(\frac{\|h_L\|^2}{\|h_0\|^2} \leqslant c\Big) 
    &= \Prob \Big(\ln(\|h_L\|^2) - \ln(\|h_0\|^2) \leqslant \ln(c)\Big) \\
    &= \Prob \Big(\sum_{k=0}^{L-1} \ln(1+Y_k) \leqslant \ln(c) \Big) \\
    &= \Prob \Big(\sum_{k=0}^{L-1} \ln(1+Y_k) \leqslant \ln(c) \text{ and } \forall k, Y_k \geqslant -\frac{1}{2} \Big) \\
    &\quad + \Prob \Big(\sum_{k=0}^{L-1} \ln(1+Y_k) \leqslant \ln(c) \text{ and } \exists k, Y_k < -\frac{1}{2} \Big).
\end{align*}
Using the inequality $\ln(1+x) \geqslant x - x^2$ for $x \geqslant - \nicefrac{1}{2}$, we obtain
\begin{align*}
    \Prob\Big(\frac{\|h_L\|^2}{\|h_0\|^2} \leqslant c\Big) 
    & \leqslant\Prob \Big(\sum_{k=0}^{L-1} Y_k - Y_k^2 \leqslant \ln(c) \text{ and } \forall k, Y_k \geqslant -\frac{1}{2} \Big) \\
    &\quad + \Prob \Big(\sum_{k=0}^{L-1} \ln(1+Y_k) \leqslant \ln(c) \text{ and } \exists k, Y_k < -\frac{1}{2} \Big).
\end{align*}
Thus,
\begin{equation}    \label{eq:tech-proof}
     \Prob\Big(\frac{\|h_L\|^2}{\|h_0\|^2} \leqslant c\Big) 
     \leqslant \Prob \Big(\sum_{k=0}^{L-1} Y_k - Y_k^2 \leqslant \ln(c) \Big) + \sum_{k=0}^{L-1} \Prob \Big(Y_{k,2} < -\frac{1}{2}\Big).
\end{equation}
We handle the two terms above on the right-hand side separately. For the first term, let $Z_k = Y_k - Y_k^2$ and $S = \sum_{k=0}^{L-1} Z_k - \Esp(Z_k|h_k)$.
Observe that, by $(E_1)$-$(E_4)$ and $(E_7)$, 
\begin{equation}    \label{eq:proof-lower-bound-expectation}
\sum_{k=0}^{L-1} \Esp(Z_k|h_k) \geqslant \frac{L \alpha_L^2}{2} - 4 \frac{L \alpha_L^2}{d}- 128 \sqrt{C} \frac{L \alpha_L^4s^4}{d} - \sqrt{C} L \alpha_L^4 \geqslant  \frac{3}{8}L\alpha_L^2,   
\end{equation}
where the last inequality is valid for $d \geqslant 64$ and $\alpha_L^2 \leqslant\frac{1}{16\sqrt{C}(2s^4+1)}$. Hence, for $0 < c < \exp(\nicefrac{3L\alpha_L^2}{8})$,
\begin{align*}
    \Prob \bigg(\sum_{k=0}^{L-1} Y_k - Y_k^2 \leqslant\ln c \bigg) 
&= \Prob \Big(S \leqslant \ln(c) - \sum_{k=0}^{L-1} \Esp(Z_k|h_k) \Big) \\ 
&\leqslant \Prob \Big(S \leqslant \ln(c) - \frac{3L \alpha_L^2}{8} \Big) \\ 
&\leqslant \Prob \Big(S^2 \geqslant \Big(\ln(c) - \frac{3L \alpha_L^2}{8} \Big)^2\Big) \\ 
&\leqslant \frac{\Esp(S^2)}{\big(\ln(c) - \frac{3L \alpha_L^2}{8}\big)^2} \\
&\quad \mbox{(by Markov's inequality.)}
\end{align*}
Using the $c_r$-inequality $(a+b)^n \leqslant 2^{n-1} (a^n + b^n)$ respectively for $n=2$ and $n=4$, we see that
\begin{align*}
\Esp(S^2) &= \sum_{k=0}^{L-1} \Esp \Big( \big(Z_k - \Esp(Z_k|h_k) \big)^2 \Big) 
\leqslant\sum_{k=0}^{L-1} \Esp \big(Z_k^2 \big) 
\leqslant2 \sum_{k=0}^{L-1} \Esp \big(Y_k^2 \big) + \Esp \big(Y_k^4 \big) \\
&\leqslant2 \sum_{k=0}^{L-1}
\Esp(Y_{k,1}^2) + \Esp(Y_{k,2}^2)  + 2 \Esp(Y_{k,1}Y_{k,2}) + 8 \Esp(Y_{k,1}^4) + 8 \Esp(Y_{k,2}^4).
\end{align*}
By $(E_3)$-$(E_7)$, it is easy to verify that, for $d \geqslant 64$ and $\alpha_L^2 \leqslant\frac{1}{(\sqrt{C}s^4/16 + 2 \sqrt{C} + 8 s^4)d}$,
\begin{align*}
\Esp(S^2)
&\leqslant10 \frac{L \alpha_L^2}{d}.
\end{align*}
This shows that, for $c < \exp(\nicefrac{3L\alpha_L^2}{8})$,
\[
\Prob \Big(\sum_{k=0}^{L-1} Y_k - Y_k^2 \leqslant \ln(c) \Big)
\leqslant\frac{10 L \alpha_L^2}{d\big(\ln(c) - \frac{3L \alpha_L^2}{8}\big)^2}.
\]
To conclude the proof, it remains to upper bound the second term of inequality \eqref{eq:tech-proof}. According to inequality~\eqref{eq:general-bound-large-deviation} in the proof of Lemma \ref{prop:bounds-moments} (with $t=\nicefrac{1}{2}$), one has
\[
\sum_{k=0}^{L-1} \Prob \Big(Y_{k,2} < -\frac{1}{2}\Big) \leqslant2 L \exp \Big(- \frac{d}{64 \alpha_L^2 s^2} \Big).    
\]
Putting everything together, we are led to
\[
\Prob\Big(\frac{\|h_L\|^2}{\|h_0\|^2} \leqslant c\Big) \leqslant \frac{10L \alpha_L^2}{d\big(\ln(c) - \frac{3L \alpha_L^2}{8}\big)^2} + 2 L \exp \Big(- \frac{d}{64 \alpha_L^2s^2} \Big).
\]
Take $\delta \in (0, 1)$. Then, if $2 L \exp \big(- \frac{d}{64 \alpha_L^2s^2} \big) \leqslant\frac{\delta}{11}$, with probability at least $1-\delta$,
\[
\frac{\|h_L\|^2}{\|h_0\|^2} > \exp \bigg(\frac{3L\alpha_L^2}{8} - \sqrt{\frac{11L\alpha_L^2}{d\delta}}\bigg).
\]
Notice that this inequality is valid under the assumption  $\alpha_L^2 \leqslant\frac{2}{(\sqrt{C}s^4 + 4 \sqrt{C} + 16 s^4)d}$.

\subsection{Proof of Corollary \ref{corollary:forward}}

Statement $(i)$ is a consequence of Proposition \ref{prop:init-forward-high-prob}, whereas $(ii)$ is a consequence of Proposition~\ref{prop:init-forward-high-prob2}~$(i)$. 
The latter is valid under the conditions $d \geqslant 64$ and $\alpha_L \leqslant \frac{2}{(\sqrt{C}s^4 + 4 \sqrt{C} + 16 s^4)d}$, which is automatically satisfied for all $L$ large enough. Furthermore, an inspection of the proof of Proposition \ref{prop:init-forward-high-prob2} reveals that the divergence in high probability of $\|h_L\|$ can be proved under the relaxed assumption $d \geqslant 9$. Indeed, the main constraint on $d$ comes from the lower bound~\eqref{eq:proof-lower-bound-expectation}, where one needs to make sure that $\frac{L\alpha_L^2}{2} - 4 \frac{L\alpha_L^2}{d} > 0$, which is the case for $d=9$.

To prove $(iii)$, we use a union bound on both statements of Proposition \ref{prop:init-forward-high-prob2}.
\subsection{Proof of Proposition \ref{prop:standard-resnet-verifies-assumptions-gradients}}

The first claim follows from the observation that
\[
\frac{\partial g(h_k, w_{k+1})}{\partial h} q_{k} = 
\begin{pmatrix}
\sigma'(h_{k,1}) & 0 & \dots & 0\\
0 & \sigma'(h_{k,2}) & \dots & 0 \\
\vdots & \vdots & \ddots & \vdots \\
0 & 0 & \dots & \sigma'(h_{k,d}) \\
\end{pmatrix}
q_k,
\]
from $(A_1)$, and from the assumption on $\sigma'$.

Let us now prove $(ii)$. In the rest of the proof, the subscript $k$ is ignored to lighten the notation. Observe that
\[
\frac{\partial g(h, w)}{\partial h} q = V
\begin{pmatrix}
\sigma'(\langle W_{1}, h\rangle) & 0 & \dots & 0\\
0 & \sigma'(\langle W_{2}, h\rangle) & \dots & 0 \\
\vdots & \vdots & \ddots & \vdots \\
0 & 0 & \dots & \sigma'(\langle W_{d}, h\rangle) \\
\end{pmatrix}
W q.
\]
 Denote by $D$ the matrix in the middle of the right-hand side. 
 Then
\begin{align*}
\Esp \bigg( \Big\| \frac{\partial g(h, w)}{\partial h} q \Big\|^2  \Big| h, q \bigg) = \Esp\big(\|V D W q\|^2 | h, q\big)
&= \Esp\big(\|D W q\|^2 | h, q\big) \\
&\quad \mbox{(by $(A_1)$)}
\end{align*}
For model \texttt{res-2}, we have
\begin{equation*}
\Esp \bigg( \Big\| \frac{\partial g(h, w)}{\partial h} q \Big\|^2  \Big| h, q \bigg) = \Esp \bigg( \sum_{i=1}^d \Big(\sum_{j=1}^d W_{ij} q_{j} \Big)^2 \sigma'(\langle W_i, h\rangle) \bigg| h,q \bigg).
\end{equation*}
The conclusion follows from the hypothesis that $a \leqslant \sigma' \leqslant b$ and $\Esp\big(\|W q\|^2 | q\big) 
= \|q\|^2$.
For model \texttt{res-3}, we have
\begin{equation*}
\Esp \bigg( \Big\| \frac{\partial g(h, w)}{\partial h} q \Big\|^2  \Big| h, q \bigg) = \Esp \bigg( \sum_{i=1}^d \Big(\sum_{j=1}^d W_{ij} q_{j} \Big)^2 \mathbf{1}_{\sum_{j=1}^d W_{ij} h_{j} \geqslant 0} \bigg| h,q \bigg).
\end{equation*}
Since the $(W_{ij})_{1 \leqslant i,j \leqslant d}$ are centered random variables, we conclude that
\begin{equation*}
\Esp \bigg( \Big\| \frac{\partial g(h, w)}{\partial h} q \Big\|^2  \Big| h, q \bigg) 
= \frac{1}{2} \Esp \bigg( \sum_{i=1}^d \Big(\sum_{j=1}^d W_{ij} q_{j} \Big)^2  \bigg| q \bigg)
= \frac{1}{2} \Esp\big(\|W q\|^2 | q\big) 
= \frac{\|q\|^2}{2}.
\end{equation*}

\subsection{Proof of Proposition \ref{prop:gradients}}

Letting $b = p_L / \|p_L\|$, as in Assumption $(A_3)$, and taking expectation in \eqref{eq:forward-diff-main}, we obtain
\begin{align}
\Esp \bigg( \frac{\|p_0\|^2}{\|p_L\|^2}  \bigg)
= \Esp (|b^\top q_L(z)|^2) &= \frac{1}{d} \Esp (\|q_L(z)\|^2) \label{eq:formula-expectation-gradients} \\
&\quad \mbox{(by $(A_3)$)}. \nonumber %
\end{align}
The rest of the proof is similar to the proof of Proposition~\ref{prop:init-forward-high-prob}. From~\eqref{eq:general-rel-q_k}, we have
\[
\|q_{k+1}(z)\|^2 = \|q_k(z)\|^2 + \alpha_L^2 \Big\| V_{k+1} \frac{\partial g(h_k, w_{k+1})}{\partial h} q_k(z) \Big\|^2 + 2 \alpha_L \big\langle q_k(z), V_{k+1} \frac{\partial g(h_k, w_{k+1})}{\partial h} q_k(z) \big\rangle.
\]
By independence of $V_{k+1}$ from $q_{k}(z)$ and $\frac{\partial g(h_k, w_{k+1})}{\partial h}$,
\begin{align*}
\Esp\Big( \Big\langle q_k(z), V_{k+1} \frac{\partial g(h_k, w_{k+1})}{\partial h} q_k(z) \Big\rangle \Big) = 0.
\end{align*}
Next,
\begin{align*}
\Esp \Big( \Big\| V_{k+1} \frac{\partial g(h_k, w_{k+1})}{\partial h} q_{k}(z) \Big\|^2 \Big) &= \Esp \bigg(\Esp \Big( \Big\| V_{k+1} \frac{\partial g(h_k, w_{k+1})}{\partial h} q_{k}(z) \Big\|^2 \Big| h_k, w_{k+1}, q_{k}(z) \Big) \bigg) \\
&= \Esp \left( \Big\| \frac{\partial g(h_k, w_{k+1})}{\partial h} q_{k}(z) \Big\|^2  \right) \\
&\quad \mbox{(by $(A_1)$)} \\
&= \Esp \left( \Esp \left( \Big\| \frac{\partial g(h_k, w_{k+1})}{\partial h} q_{k}(z) \Big\|^2  \Big| h_k, q_{k}(z) \right) \right).
\end{align*}
By Assumption $(A_3)$, we are led to
\[
\big(1 + \frac{1}{2} \alpha_L^2\big) \Esp(\|q_{k}(z)\|^2) \leqslant\Esp(\|q_{k+1}(z)\|^2) \leqslant(1 + \alpha_L^2) \Esp(\|q_{k}(z)\|^2),
\]
and thus, by induction, since $q_0(z) = z$ and $\Esp(\|z\|^2) = d$,
\[
d\big(1 + \frac{1}{2} \alpha_L^2\big)^k \leqslant\Esp(\|q_k(z)\|^2) \leqslant d(1 + 4\alpha_L^2)^k.
\]
In particular, for $k=L$,
\[
d\big(1 + \frac{1}{2} \alpha_L^2\big)^L  \leqslant\Esp(\|q_L(z)\|^2) \leqslant d(1 + \alpha_L^2)^L.
\]
Therefore, by \eqref{eq:formula-expectation-gradients},
\begin{equation*}
\big(1 + \frac{1}{2} \alpha_L^2\big)^L \leqslant \Esp \bigg( \frac{\|p_0\|^2}{\|p_L\|^2}  \bigg) \leqslant (1 + \alpha_L^2)^L.  
\end{equation*}
To finish the proof, observe that
\[
\frac{1}{\|p_L\|} (p_0 - p_L)^\top z = b^\top (q_L(z) - z).
\]
Using arguments similar to \eqref{eq:formula-expectation-gradients}, we may write
\[
\Esp \bigg( \frac{\|p_0-p_L\|^2}{\|p_L\|^2} \bigg)
= \frac{1}{d} \Esp \Big(\frac{\|q_L(z) - z\|^2}{\|z\|^2} \Big).
\]
Now, upon noting that $q_L(z) - z = q_L(z) - q_0(z) = \alpha_L \sum_{k=0}^{L-1} V_{k+1} \frac{\partial g(h_k, w_{k+1})}{\partial h} q_{k}(z)$,
\begin{align*}
\Esp(\|q_L(z) - z\|^2) &= \alpha_L^2 \sum_{k, k'=0}^{L-1} \Esp \left( q_{k}(z)^\top  \frac{\partial g(h_k, w_{k+1})^\top}{\partial h} V_{k+1}^\top V_{k'+1} \frac{\partial g(h_k', w_{k'+1})}{\partial h} q_{k'}(z) \right) \\
&= \alpha_L^2 \sum_{k=0}^{L-1} \Esp \left( \bigg\| V_{k+1} \frac{\partial g(h_k, w_{k+1})}{\partial h} q_{k}(z) \bigg\|^2 \right) \\
&\leqslant d\alpha_L^2 \sum_{k=0}^{L-1} (1 + \alpha_L^2)^{k}  \\
&= d \big((1 + \alpha_L^2)^L - 1 \big)
\leqslant d \big(\exp(L\alpha_L^2) - 1 \big) \leqslant 2 d L\alpha_L^2,
\end{align*}
for $L \alpha_L^2 \leqslant 1$. Note that the second equality is obtained by conditioning on every random variable except $V_{k'+1}$ for $k < k'$ (and $V_{k+1}$ for $k > k'$). Finally, by using Markov's inequality, we conclude that, for any $\varepsilon > 0$,
\[
\Prob \big(\|p_0 - p_L\|^2 \geqslant \varepsilon \|p_L\|^2 \big) \leqslant\frac{2 L\alpha_L^2}{\varepsilon}.
\]

\subsection{Proof of Proposition \ref{prop:gradients2}}

The proof of Proposition \ref{prop:gradients} reveals that
\[
\Esp \bigg( \frac{\|p_0-p_L\|^2}{\|p_L\|^2} \bigg)
\leqslant (1 + \alpha_L^2)^L - 1.
\]
Using similar arguments, one has
\begin{align*}
\Esp \bigg( \frac{\|p_0-p_L\|^2}{\|p_L\|^2} \bigg) = \frac{1}{d} \Esp \Big(\frac{\|q_L(z) - z\|^2}{\|z\|^2} \Big) \geqslant \alpha_L^2 \sum_{k=0}^{L-1} \big(1 + \frac{1}{2} \alpha_L^2\big)^{k} = \big(1 + \frac{1}{2} \alpha_L^2\big)^L - 1.
\end{align*}

\subsection{Proof of Corollary \ref{cor:gradients2}}

The first statement is an immediate consequence of Proposition \ref{prop:gradients}. The second one is a consequence of Proposition \ref{prop:gradients2} and the fact that, for $\beta < 1$,
\[
\Big(1 + \frac{1}{L^\beta}\Big)^L = \exp \Big( L \ln \Big( 1 + \frac{1}{L^\beta} \Big) \Big) \sim \exp \big( L^{1 - \beta} \big) \rightarrow \infty.
\]
Finally, $(iii)$ follows from Proposition \ref{prop:gradients2}.

\subsection{Proof of Proposition \ref{prop:resnet_convergence_sde_init}}

The proposition is a consequence of \citet[][Theorems 4.5.3 and 10.2.2]{kloeden1992numerical} for the SDE
\[    dH_t^\top = \sqrt{\frac{1}{d}} \sigma(H_t^\top)dB_t. \]
Letting $a(h, t)=0$ and $b(h,t)=\sqrt{\frac{1}{d}} \sigma(h)$, we need to check the following assumptions:
\begin{itemize}
    \item[$(H_1)$] The functions $a(\cdot, \cdot)$ and $b(\cdot, \cdot)$ are jointly measurable on $\R^d \times [0,1]$.
    \item[$(H_2)$] There exists a constant $C_1 > 0$ such that, for any $x,y \in \R^d$, $t \in [0,1]$,
    \begin{equation*}
        \| a(x,t) - a(y,t)\| + \|b(x,t) - b(y,t)\| \leqslant C_1 \|x-y\|.
    \end{equation*}
    \item[$(H_3)$] There exists a constant $C_2>0$ such that, for any $x \in \R^d$, $t \in [0,1]$,
    \begin{equation*}
        \|a(x,t)\| + \|b(x,t)\| \leqslant C_2 (1 + \|x\|).
    \end{equation*}
    \item[$(H_4)$] $\esp \big(\|H_0\|^2\big) < \infty$.
    \item[$(H_5)$] There exists a constant $C_3 > 0$ such that, for any $x \in \R^d$, $s,t \in [0,1]$,
    \begin{equation*}
        \|a(x,t) - a(x,s)\| + \|b(x,t) - b(x,s) \| \leqslant C_3(1 + \|x\|)|t-s|^{\nicefrac{1}{2}}.
    \end{equation*}
\end{itemize}
Assumptions $(H_1)$, $(H_4)$, and $(H_5)$ readily follow from the definitions. Assumption $(H_2)$ is true since $\sigma$ is Lipschitz continuous, and $(H_3)$ follows from
\[\|\sigma(x)\| \leqslant b \|x\| \leqslant\|x\| \leqslant1 + \|x\|. \]

\subsection{Proof of Proposition \ref{prop:neural-ode}}

Let $\psi: \R^d \times [0,1] \to \R^d$ be defined for any $h\in \R^d$, $t \in [0,1] $, by $\psi(h,t) =\mathscr{V}_t g(h, \mathscr{W}_t)$. With this notation, the ODE \eqref{eq:neural-ode} is equivalent to the initial value problem
\[dH_t = \psi(H_t, t) dt, \quad H_0 = Ax.\]
By Assumptions $(A_5)$ and $(A_6)$, $\psi$ is Lipschitz continuous in its first argument, in the sense that there exists $K > 0$ (which may depend on the realization of $\mathscr{V}$ and $\mathscr{W}$) such that, for all $h,h' \in \R^d, t \in [0, 1]$,
$$\|\psi(h,t) - \psi(h',t)\| \leqslant K \|h-h'\|.$$
In addition, it is continuous in its second one. 
Thus, according to the Picard-Lindel\"{o}f theorem (Theorem \ref{thm:picard-lindelof} in Appendix \ref{sec:picard-lindelof}), this is enough to show that the neural ODE \eqref{eq:neural-ode} has a unique solution on $[0,1]$. Note that the solution $H$ is continuous on $[0,1]$ and is therefore bounded by a constant $M>0$. 
 
In order to prove the approximation bound of Proposition \ref{prop:neural-ode}, we start by proving that both $\psi$ and $H$ are Lipschitz continuous in $t$. Under $(A_5)$ and $(A_6)$, this is clear for $\psi$ since $H$ is bounded. %
Moreover,
for any $[s,t] \subset [0,1]$, we have
\begin{align*}
    \|H_t - H_s\| =\Big\| \int_s^t \psi(H_u, u)du \Big\| &\leqslant  \int_s^t \|\psi(H_u, u)\|du \\
    		&\leqslant  (t-s) \sup_{ \substack{u \in [0,1]\\ h \in \R^d, \|h\| \leqslant M}}\|\psi(h, u) \|.
\end{align*}

Now, let $K_1$ and $K_2$ denote the Lipschitz constants of $\psi$ (in both arguments) %
and $H$ respectively, and, for any $0 \leqslant k \leqslant L$, let $t_k = \nicefrac{k}{L}$.
Then we have, for $k \geqslant 1$,
\begin{align*}
    &\|H_{t_k} - h_k  \|\\
    &\quad= \big\|H_{t_{k-1}} + \int_{t_{k-1}}^{t_k} \psi(H_u, u)du -h_{k-1} - \frac{1}{L} \psi(h_{k-1}, t_{k-1}) \big\| \\
    &\quad \leqslant \| H_{t_{k-1}} -h_{k-1}\| + \int_{t_{k-1}}^{t_k} \|\psi(H_u, u) - \psi(h_{k-1}, t_{k-1})\|du \\
    & \quad\leqslant \| H_{t_{k-1}} -h_{k-1}\| + K_1 \int_{t_{k-1}}^{t_k}\|H_u-h_{k-1}\|du + K_1 \int_{t_{k-1}}^{t_k}|u-t_{k-1}|du \\
    & \quad\leqslant \Big(1 + \frac{K_1}{L} \Big)\| H_{t_{k-1}} -h_{k-1}\| + K_1 \int_{t_{k-1}}^{t_k}\|H_u-H_{t_{k-1}}\|du + K_1 \int_{t_{k-1}}^{t_k}|u-t_{k-1}|du \\
    & \quad\leqslant \Big(1 + \frac{K_1}{L} \Big)\| H_{t_{k-1}} -h_{k-1}\| + (K_2 +1)K_1\int_{t_{k-1}}^{t_k}|u-t_{k-1}|du \\
    &\quad = \Big(1 + \frac{K_1}{L} \Big)\| H_{t_{k-1}} -h_{k-1}\| + \frac{(K_2 +1)K_1}{2L^2}.
\end{align*}
By recurrence, we obtain
\begin{align*}
     \|H_{t_k} - h_k  \|& \leqslant \sum_{j=0}^{k-1} \Big( 1 + \frac{K_1}{L}\Big)^{j} \times \frac{(K_2 +1)K_1}{2L^2} \leqslant L \Big( 1 + \frac{K_1}{L}\Big)^{L} \times \frac{(K_2 +1)K_1}{2L^2} \\
     &\leqslant e^{K_1}\frac{(K_2 +1)K_1}{2L},
\end{align*}
which concludes the proof.

\subsection{Proof of Proposition \ref{prop:stability-neural-ode}}

Starting from \eqref{eq:discrete-resnet} and using Assumption $(A_6)$, one easily obtains the existence of $C_1$ and $C_2$ (whose values depend on the realization of $\mathscr{V}$ and $\mathscr{W}$) such that
\[
\|h_{k+1}\| \leqslant (1 + C_1\alpha_L) \|h_k\| + C_2 \alpha_L.
\]
By recurrence,
\[
\|h_{k+1}\| \leqslant (1 + C_1\alpha_L)^k \Big( \|h_0\| + \frac{C_2}{C_1} \Big).
\]
Hence, using $\alpha_L \leqslant \nicefrac{1}{L}$,
\[
\|h_{k+1}\| \leqslant \exp(C_1) \Big( \|h_0\| + \frac{C_2}{C_1} \Big).
\]
Since $g$ is Lipschitz continuous on compact sets, it is bounded on every ball of $\R^d \times \R^p$. The result is then a consequence of the identity
\[
h_L - h_0 = \alpha_L \sum_{k=0}^{L-1} V_{k+1} g(h_k, w_{k+1}),
\]
since we showed that each term in the sum is bounded by some constant $C_3 > 0$, independent of $L$ and $k$. Hence we have that
\[
\|h_L - h_0\| \leqslant C_3 L \alpha_L = C_3 L^{1-\beta},
\]
yielding the results depending on the value of $\beta$.

\subsection{Proof of Proposition \ref{prop:explosion-neural-ode}}

In the linear case, \eqref{eq:discrete-resnet} can be written
\[
h_{k+1} = h_k + \alpha_L V_{k+1} h_k, \quad 0 \leqslant k \leqslant L-1.
\]
Take $y$ a unit-norm eigenvector of $\mathscr{V}_0^\top$ with associated eigenvalue $\lambda > 0$. Then
\begin{align*}
\langle h_{k+1}, y \rangle &= \langle h_k + \alpha_L V_{k+1} h_k, y \rangle \\
&= \langle h_k, y \rangle + \alpha_L \langle  h_k, V_{k+1}^\top y \rangle \\
&= \langle h_k, y \rangle + \lambda \alpha_L \langle h_k, y \rangle + \alpha_L  \langle h_k, (V_{k+1} - \mathscr{V}_0)^\top y \rangle.
\end{align*}
Since $\mathscr{V}$ is Lipschitz and $V_{k+1} = \mathscr{V}_{k+1/L}$, there exists $c$ such that $\|V_{k+1} - \mathscr{V}_0\| \leqslant c \frac{k+1}{L}$. Hence
\[
|\langle h_{k+1}, y \rangle| \geqslant (1 + \lambda \alpha_L) | \langle h_k, y \rangle | -c \alpha_L \frac{k+1}{L} \|h_k\|.
\]
Then, by recurrence,
\begin{align*}
|\langle h_L, y \rangle| &\geqslant (1 + \lambda \alpha_L)^{L} |\langle h_0, y \rangle| - c \frac{\alpha_L}{L}  \sum_{k=0}^{L-1} (k+1) (1 + \lambda \alpha_L)^{k} \|h_k\| \\
&\geqslant (1 + \lambda \alpha_L)^{L} |\langle h_0, y \rangle| - c \alpha_L (1 + \lambda \alpha_L)^L \max_k \|h_k\|.
\end{align*}
Let $M = \frac{|\langle h_0, y \rangle|}{2c \alpha_L}$, and suppose that $\|h_k\| \leqslant M$ for all $0 \leqslant k \leqslant L$. Then
\begin{align*}
\|h_L\| 
&\geqslant |\langle h_L, y \rangle| \\
&\quad \mbox{(by the Cauchy-Schwartz inequality)} \\
&\geqslant (1 + \lambda \alpha_L)^{L} \big( |\langle h_0, y \rangle| - c M \alpha_L \big).    
\end{align*}
Then, for $\lambda \alpha_L \leqslant 1$,
\[
\|h_L\| \geqslant \frac{1}{2} (1 + \lambda \alpha_L)^L |\langle h_0, y \rangle| \geqslant \frac{1}{2} \exp \big(\frac{\lambda L\alpha_L }{2}\big) |\langle h_0, y \rangle|.
\]
Thus, since $L \alpha_L = L^{1-\beta}$, we have that $\|h_L\| \rightarrow \infty$, which contradicts our assumption that $\|h_k\| \leqslant M$ for all $0 \leqslant k \leqslant L$. We deduce that, for all $L$ large enough,
\[
\max_k \|h_k\| > \frac{|\langle h_0, y \rangle|}{2c\alpha_L} \xrightarrow[]{L\rightarrow\infty} \infty.
\]
Furthermore,
\[
\max_k \frac{\|h_k-h_0\|}{\|h_0\|} > \frac{|\langle h_0, y \rangle|}{2c\|h_0\|\alpha_L} - 1 \xrightarrow[]{L\rightarrow\infty} \infty.
\]

\subsection{Proof of Lemma \ref{prop:bounds-moments}}   \label{apx:proof-prop:bounds-moments}

$(E_1)$ and $(E_2)$ are simple consequences of Assumptions $(A_1)$ and $(A_2)$.

To prove $(E_3)$, let $f(h_k, w_{k+1}) = V_{k+1} g(h_k, w_{k+1})$. Then 
\begin{align*}
 \Esp(Y_{k,2}Y_{k,1}|h_k) &= \frac{1}{\|h_k\|^4} \Esp\big(\|f(h_k, w_{k+1})\|^2 \langle h_k, f(h_k, w_{k+1}) \rangle \big| h_k \big) \\
 &= \Esp \Big( \sum_{i=1}^d \sum_{j=1}^d f(h_k, w_{k+1})_i^2 (h_k)_j f(h_k, w_{k+1})_j \Big| h_k \Big).    
\end{align*}
It is easy to verify that, under Assumption $(A_1)$, each term of the sum above has zero expectation. This shows $(E_3)$.

To establish $(E_4)$, we start by noting that
\begin{align*}
\Esp \Big(  \Big\langle \frac{h_k}{\|h_k\|}, \frac{f(h_k, w_{k+1})}{\|h_k\|} \Big\rangle^2 \Big| h_k \Big) &= \frac{1}{\|h_k\|^4} \Esp\big(h_k^\top f(h_k, w_{k+1}) f(h_k, w_{k+1})^\top h_k | h_k \big) \\
&= \frac{1}{\|h_k\|^4} h_k^\top \Esp \big(f(h_k, w_{k+1}) f(h_k, w_{k+1})^\top | h_k \big) h_k.
\end{align*}
Clearly, $\Esp(f(h_k, w_{k+1})_i f(h_k, w_{k+1})_j) = 0$ for $i \neq j$. Since, furthermore, the coordinates of $f(h_k, w_{k+1})$ are identically distributed conditionally on $h_k$, we obtain
\[
\Esp \big(f(h_k, w_{k+1}) f(h_k, w_{k+1})^\top | h_k \big) = \frac{1}{d} \Esp(\|f(h_k, w_{k+1})\|^2 | h_k) I_d.
\]
Thus,
\[
\Esp \Big( \Big\langle \frac{h_k}{\|h_k\|}, \frac{f(h_k, w_{k+1})}{\|h_k\|} \Big\rangle^2 \Big| h_k \Big) = \frac{1}{d\|h_k\|^4}  \Esp(\|f(h_k, w_{k+1})\|^2 | h_k) h_k^\top h_k \leqslant\frac{1}{d},
\]
by Assumptions $(A_1)$ and $(A_2)$.

To prove $(E_5)$, let $\varphi = \frac{\langle V_{k+1} g(h_k, w_{k+1}), h_k \rangle}{\|g(h_k, w_{k+1})\| \|h_k\|}$. Then, for any $t>0$,
\begin{align*}
\Prob (|Y_{k,2}| > t) 
&= \Prob \Big( |\varphi| > \frac{t\|h_k\|}{2 \alpha_L \|g(h_k, w_{k+1})\|}\Big) \\
&= \Esp \bigg( \Prob \Big( |\varphi| > \frac{t\|h_k\|}{2 \alpha_L \|g(h_k, w_{k+1})\|} \Big| h_k, w_{k+1} \Big) \bigg).
\end{align*}
So, by Lemma \ref{lemma:bound-deviations-linear} in Appendix \ref{apx:sub-gaussian-matrices},
\begin{align*}
\Prob (|Y_{k,2}| > t)
&\leqslant\Esp \bigg( 2 \exp \Big(- \frac{dt^2\|h_k\|^2}{16 \alpha_L^2s^2 \|g(h_k, w_{k+1})\|^2} \Big) \bigg) \\
&= \Esp \bigg( \Esp \bigg( 2 \exp \Big(- \frac{dt^2\|h_k\|^2}{16 \alpha_L^2s^2 \|g(h_k, w_{k+1})\|^2} \Big) \Big| h_k \bigg) \bigg) \\
&\leqslant\Esp \bigg( 2 \exp \Big(- \frac{dt^2\|h_k\|^2}{16 \alpha_L^2 s^2 \Esp(\|g(h_k, w_{k+1})\|^2 | h_k)} \Big) \bigg),
\end{align*}
by Jensen's inequality. Finally, using Assumption $(A_2)$, we deduce that
\begin{equation}    \label{eq:general-bound-large-deviation}
\Prob (|Y_{k,2}| > t) \leqslant\Esp \bigg( 2 \exp \Big(- \frac{dt^2}{16 \alpha_L^2 s^2} \Big) \bigg) = 2 \exp \Big(- \frac{dt^2}{16 \alpha_L^2 s^2} \Big).
\end{equation}
In particular, for all $q \geqslant 1$ \citep[see, e.g.,][]{Pauwels2020},
\[
\Esp(Y_{k,2}^{2q}) \leqslant q! \Big( \frac{32s^2\alpha_L^2}{d} \Big)^q.
\]
The result is obtained by taking $q=2$.

Finally, $(E_6)$ and $(E_7)$ are consequences of Lemma \ref{lemma:bound-deviations-quadratic} in Appendix \ref{apx:sub-gaussian-matrices}.

\section{Concentration of Sub-Gaussian Random Matrices} \label{apx:sub-gaussian-matrices}

In this appendix, we are interested in concentration of moments of sub-Gaussian matrices. We begin by two simple lemmas on second-order moments of random matrix-vector products.

\begin{lemma}   \label{lemma:technical-proof-1}
Let $W \in \R^{d \times d}$ be a matrix whose entries are centered i.i.d.~random variables, with finite variance, and let $\sigma$ be an activation function such that, for all $x \in \R, a |x| \leqslant|\sigma(x)| \leqslant b |x|$, $\nicefrac{1}{\sqrt{2}} \leqslant a < b \leqslant1$. Then, for any $x \in \R^d$,
\[
\frac{1}{2} \Esp\big(\|Wx\|^2\big) \leqslant\Esp\big(\|\sigma(Wx)\|^2\big) \leqslant \Esp\big(\|Wx\|^2\big)
\quad \textnormal{and} \quad \Esp\big(\|\ReLU(Wx)\|^2\big) = \frac{1}{2} \Esp\big(\|Wx\|^2\big).
\]
\end{lemma}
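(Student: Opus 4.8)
The plan is to reduce everything to a coordinate-wise computation. Writing $y = Wx \in \R^d$ with coordinates $y_i = \langle W_i, x\rangle$ (here $W_i$ denotes the $i$-th row of $W$), both $\sigma$ and $\ReLU$ act entrywise, so that $\|\sigma(Wx)\|^2 = \sum_{i=1}^d \sigma(y_i)^2$ and $\|\ReLU(Wx)\|^2 = \sum_{i=1}^d \ReLU(y_i)^2$. By linearity of expectation it therefore suffices to control $\Esp(\sigma(y_i)^2)$ and $\Esp(\ReLU(y_i)^2)$ against $\Esp(y_i^2)$ for each fixed $i$, and then sum over $i$.

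For the two-sided bound, I would square the pointwise hypothesis $a|t| \leqslant |\sigma(t)| \leqslant b|t|$ to get $a^2 t^2 \leqslant \sigma(t)^2 \leqslant b^2 t^2$ for all $t \in \R$, apply it at $t = y_i$, sum over $i$, and take expectations, which yields $a^2\,\Esp(\|Wx\|^2) \leqslant \Esp(\|\sigma(Wx)\|^2) \leqslant b^2\,\Esp(\|Wx\|^2)$. The assumptions $\nicefrac{1}{\sqrt 2} \leqslant a$ and $b \leqslant 1$ give $a^2 \geqslant \nicefrac{1}{2}$ and $b^2 \leqslant 1$, which immediately produce the claimed inequalities. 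This step is routine and uses nothing beyond the pointwise envelope on $\sigma$; the finite-variance hypothesis only serves to make all quantities well defined.

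The $\ReLU$ identity rests on a symmetry argument, which is the only delicate point. Since the entries $W_{ij}$ are symmetric and i.i.d., each row satisfies $W_i \stackrel{d}{=} -W_i$ (the joint law factorizes and each factor is symmetric), so $y_i = \langle W_i, x\rangle$ is a symmetric real random variable, i.e.\ $y_i \stackrel{d}{=} -y_i$. Because $\ReLU(y_i)^2 = y_i^2 \mathbf{1}_{y_i > 0}$, symmetry gives $\Esp(y_i^2 \mathbf{1}_{y_i>0}) = \Esp(y_i^2 \mathbf{1}_{y_i<0})$, while the event $\{y_i = 0\}$ contributes nothing to $\Esp(y_i^2\,\cdot\,)$; hence $\Esp(y_i^2) = 2\,\Esp(y_i^2 \mathbf{1}_{y_i>0}) = 2\,\Esp(\ReLU(y_i)^2)$. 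Summing over $i$ then yields $\Esp(\|\ReLU(Wx)\|^2) = \nicefrac{1}{2}\,\Esp(\|Wx\|^2)$. The main (mild) obstacle is establishing the symmetry of each $y_i$ and correctly disposing of the possible atom at zero; once this is in place, the computation involves only second moments, so finite variance is all that is required.
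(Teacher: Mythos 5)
Your proposal is correct and follows essentially the same route as the paper: the two-sided bound comes from squaring the pointwise envelope $a|t|\leqslant|\sigma(t)|\leqslant b|t|$, and the $\ReLU$ identity comes from the symmetry of each coordinate $\langle W_i,x\rangle$ inherited from the symmetric independent entries, so that $\Esp(y_i^2\mathbf{1}_{y_i>0})=\tfrac{1}{2}\Esp(y_i^2)$. Your extra care about the atom at zero is harmless (the paper uses the indicator of $\{y_i\geqslant 0\}$, but either convention works since the indicator is multiplied by $y_i^2$).
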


\begin{proof}
The first part is a consequence of the assumption on $\sigma$. To prove the equality, let $X_i = \sum_{j=1}^d W_{ij} x_j$. Then
\[
\Esp\big(\|\ReLU(Wx)\|^2\big) = \Esp \bigg( \sum_{i=1}^d \Big(\sum_{j=1}^d W_{ij} x_j \Big)^2 \mathbf{1}_{\sum_{j=1}^d W_{ij} x_j \geqslant 0}\bigg) = \Esp \Big( \sum_{i=1}^d X_i^2 \mathbf{1}_{X_i \geqslant 0} \Big).
\]
Since the $(W_{ij})_{1 \leqslant j \leqslant d}$ are centered and independent random variables, $X_i$ is also centered. Hence $\Esp(X_i^2 \mathbf{1}_{X_i \geqslant 0}) = 1/2\Esp(X_i^2)$, which concludes the proof.
\end{proof}

\begin{lemma}  \label{lemma:technical-proof-2} \label{lemma:variance-random-matrix}
Let $W \in \R^{d \times d}$ be a matrix whose entries are centered i.i.d.~random variables, with finite variance $s^2$. Then, for any $x \in \R^d$, $\Esp\big(\|Wx\|^2\big) = s^2d\|x\|^2$.
\end{lemma}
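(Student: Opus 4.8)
The plan is to compute $\Esp(\|Wx\|^2)$ directly by expanding the Euclidean norm coordinate-wise and using the i.i.d.\ centered structure of the entries to kill the off-diagonal cross terms. First I would write
\[
\|Wx\|^2 = \sum_{i=1}^d (Wx)_i^2 = \sum_{i=1}^d \Big( \sum_{j=1}^d W_{ij} x_j \Big)^2,
\]
and take expectations, exchanging the (finite) sum and the expectation by linearity.

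Next I would expand the inner square as $\big(\sum_j W_{ij}x_j\big)^2 = \sum_{j,k} x_j x_k\, W_{ij}W_{ik}$, so that
\[
\Esp\Big( \sum_j W_{ij}x_j \Big)^2 = \sum_{j,k} x_j x_k\, \Esp(W_{ij}W_{ik}).
\]
The key step is evaluating $\Esp(W_{ij}W_{ik})$. For $j\neq k$, independence together with the centering assumption gives $\Esp(W_{ij}W_{ik}) = \Esp(W_{ij})\Esp(W_{ik}) = 0$; for $j=k$ it is the common variance $s^2$. Hence only the diagonal terms survive, yielding $\Esp\big(\sum_j W_{ij}x_j\big)^2 = s^2 \sum_j x_j^2 = s^2\|x\|^2$ for each fixed row index $i$.

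Finally, summing the identical contribution over the $d$ rows gives $\Esp(\|Wx\|^2) = d\, s^2 \|x\|^2$, which is exactly the claim. There is no genuine obstacle here: the result is a routine second-moment computation, and the only point requiring the hypotheses is the vanishing of the cross terms, which needs both \emph{independence} (to factor the expectation) and \emph{centering} (so that each factor is zero); finiteness of the variance merely guarantees that all quantities involved are well defined.
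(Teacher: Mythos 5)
Your proof is correct and follows essentially the same route as the paper: expand $\|Wx\|^2$ coordinate-wise, use independence and centering to eliminate the off-diagonal terms $\Esp(W_{ij}W_{ik})$ for $j\neq k$, and sum the resulting $s^2\|x\|^2$ over the $d$ rows. Nothing is missing.
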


\begin{proof}
For any $1 \leqslant i \leqslant d$,
\begin{equation*}
|W x|^2_i 
= \Big(\sum_{j=1}^d W_{ij} x_j \Big)^2 
= \sum_{j,j'=1}^d W_{ij} W_{ij'} x_j x_{j'}.
\end{equation*}
Thus, by independence,
\begin{equation}    \label{eq:proof-technical-lemma}
\Esp\big(|W x|^2_i\big) 
= \Esp\Big( \sum_{j,j'=1}^d W_{ij} W_{ij'} x_j x_{j'}\Big) 
= \sum_{j=1}^d \Esp(W_{ij}^2) x_j^2 
= s^2 \|x\|^2.
\end{equation}
The result follows by summing over all $i \in \{1, \dots, d\}$.
\end{proof}

We now aim at deriving more involved results on concentration of linear and quadratic forms of sub-Gaussian matrices (Lemma \ref{lemma:bound-deviations-linear} and Lemma \ref{lemma:bound-deviations-quadratic}).
These two propositions are byproducts of the main result of  \citet{kontorovich2014concentration}, which  generalizes McDiarmid's inequality to sub-Gaussian variables. We start by a technical result regarding the sub-Gaussian diameter introduced by \citet{kontorovich2014concentration}, whose definition is recalled below.

\begin{definition} \label{def:subgaussian-diameter}
Let $X$ be a real-valued random variable, $X'$ an independent copy of $X$, and $\varepsilon$ a Rademacher random variable, independent of $X$ and $X'$. Then the sub-Gaussian diameter of $X$ is defined as the smallest $t$ such that $\varepsilon |X-X'|$ is $t^2$ sub-Gaussian.
\end{definition}

\begin{lemma}   \label{lemma:symmetrized-distance}
Let $X$ be an $s^2$ sub-Gaussian centered random variable. Then the sub-Gaussian diameter of $X$ is less than $\sqrt{2}s$.
\end{lemma}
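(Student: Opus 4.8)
The plan is to show directly that $\varepsilon|X-X'|$ is $2s^2$ sub-Gaussian, which by Definition~\ref{def:subgaussian-diameter} immediately yields that the sub-Gaussian diameter is at most $\sqrt{2}s$. Writing $Y = X - X'$, I would first record that $Y$ is symmetric: since $X$ and $X'$ are i.i.d., $Y = X-X'$ and $-Y = X'-X$ have the same law. This symmetry is the whole engine of the argument.

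The key computation is to control the moment generating function of $\varepsilon|Y|$. Conditioning on $Y$ and integrating out the independent Rademacher variable $\varepsilon$ gives, for any $\lambda \in \R$,
\[
\Esp\big(\exp(\lambda \varepsilon |Y|)\big) = \Esp\big(\cosh(\lambda |Y|)\big) = \Esp\big(\cosh(\lambda Y)\big),
\]
where the last equality uses that $\cosh$ is even. Splitting $\cosh(\lambda Y) = \tfrac12(e^{\lambda Y}+e^{-\lambda Y})$ and using the symmetry of $Y$ (so that $\Esp(e^{\lambda Y}) = \Esp(e^{-\lambda Y})$) collapses this to $\Esp(e^{\lambda Y})$. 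Thus the absolute value and the Rademacher sign disappear at the level of the MGF, and it remains only to bound $\Esp(\exp(\lambda(X-X')))$.

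For that last step I would exploit the independence of $X$ and $X'$ to factor $\Esp(e^{\lambda(X-X')}) = \Esp(e^{\lambda X})\,\Esp(e^{-\lambda X'})$ and apply the $s^2$ sub-Gaussian hypothesis to each factor (the second with $-\lambda$ in place of $\lambda$), giving the bound $\exp(\lambda^2 s^2/2)^2 = \exp(\lambda^2 s^2) = \exp\big(\lambda^2 (2s^2)/2\big)$. Hence $\varepsilon|Y|$ is $2s^2$ sub-Gaussian, and the sub-Gaussian diameter of $X$ is at most $\sqrt{2}s$. There is no genuine obstacle here; the one point to get right is the observation that the symmetry of $Y$ lets one replace $\cosh(\lambda Y)$ by $e^{\lambda Y}$ inside the expectation, which is precisely what renders the extraneous factor $\varepsilon|\cdot|$ harmless—rather than attempting to establish the stronger and unnecessary distributional identity $\varepsilon|Y| \stackrel{d}{=} Y$.
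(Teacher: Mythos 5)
Your proof is correct and follows essentially the same route as the paper: compute the moment generating function of $\varepsilon|X-X'|$, use the symmetry of $X-X'$ to remove the sign and absolute value (you via $\cosh$, the paper via splitting on $\{\varepsilon=\pm 1\}$), then factor by independence and apply the sub-Gaussian hypothesis twice to obtain the bound $\exp(\lambda^2 s^2)$, i.e., $2s^2$ sub-Gaussianity. Your final exponent $\exp\big(\lambda^2(2s^2)/2\big)$ is in fact the correct constant needed for the claimed diameter $\sqrt{2}s$, whereas the paper's displayed bound $\exp(2\lambda^2 s^2)$ is looser than what its own computation yields.
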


\begin{proof}
Let $\lambda \in \R$. Then, using the notation of Definition \ref{def:subgaussian-diameter}, one has 
\begin{align*}
 \Esp(\exp^{\lambda \varepsilon |X-X'|}) &= \Esp(\exp^{\lambda (X-X')}\mathbf{1}_{\varepsilon=1}) + \Esp(\exp^{\lambda \varepsilon (X'-X)}\mathbf{1}_{\varepsilon=-1}) \\
 &= \Esp(\exp^{\lambda (X-X')}) \\
 &= \Esp(\exp^{\lambda X})^2 \\
 &\leqslant\exp^{2\lambda^2s^2},
\end{align*}
where the last equality is a consequence of the symmetry of $X$.
\end{proof}

We are now ready to prove the two main results of this appendix.

\begin{lemma}[Bound on the deviation of linear forms] \label{lemma:bound-deviations-linear}
Let $V$ be a $\R^{d \times d}$ matrix whose entries are i.i.d~$\nicefrac{s^2}{d}$ sub-Gaussian random variables. Then, for any $x,y \in \R^d$, $x,y \neq 0$,
\[
\Prob\Big(\frac{\langle Vx, y \rangle}{\|x\|\|y\|} \geqslant t\Big) \leqslant2 \exp \Big( - \frac{dt^2}{4s^2} \Big).
\]
\end{lemma}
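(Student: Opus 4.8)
The plan is to regard the scalar $\langle Vx, y\rangle$ as a function of the $d^2$ independent entries $(V_{ij})_{1\leqslant i,j\leqslant d}$ and to invoke the sub-Gaussian version of McDiarmid's inequality due to \citet{kontorovich2014concentration}, exactly as announced at the start of this appendix. Writing
$$f\big((V_{ij})\big) = \langle Vx, y\rangle = \sum_{i,j=1}^d V_{ij}\, x_j y_i,$$
the map $f$ is affine in each coordinate, so if two configurations of the entries differ only in position $(i,j)$ then $f$ changes by exactly $|x_j y_i|\,|V_{ij}-V_{ij}'|$. Hence $f$ has coordinatewise Lipschitz constant $L_{ij}=|x_j y_i|$ with respect to the absolute-value metric on each entry, and the product structure $L_{ij}=|x_j|\,|y_i|$ will make the final sum factorize.

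Next I would assemble the two ingredients needed to apply \citet{kontorovich2014concentration}. First, since each entry is $\nicefrac{s^2}{d}$ sub-Gaussian, Lemma~\ref{lemma:symmetrized-distance} bounds its sub-Gaussian diameter by $\Delta_{ij}\leqslant \sqrt{2}\,\nicefrac{s}{\sqrt{d}}$. Second, the definition of sub-Gaussianity used in the paper forces $\Esp(V_{ij})=0$ (by Jensen, $\exp(\lambda\Esp(V_{ij}))\leqslant\exp(\nicefrac{\lambda^2 s^2}{2d})$ for all $\lambda$ implies $\Esp(V_{ij})=0$), so that $\Esp(f)=0$ and the deviation is measured around $0$. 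Kontorovich's theorem then gives that $f-\Esp(f)$ is $\tau^2$ sub-Gaussian with $\tau^2=\sum_{i,j}L_{ij}^2\Delta_{ij}^2$, and the key computation is
$$\tau^2 \leqslant \sum_{i,j=1}^d x_j^2 y_i^2\cdot\frac{2s^2}{d} = \frac{2s^2}{d}\Big(\sum_{j}x_j^2\Big)\Big(\sum_i y_i^2\Big) = \frac{2s^2}{d}\,\|x\|^2\|y\|^2.$$
Feeding this into the two-sided sub-Gaussian tail $\Prob(|f-\Esp(f)|\geqslant u)\leqslant 2\exp(-\nicefrac{u^2}{2\tau^2})$ at $u=t\|x\|\|y\|$, and bounding the one-sided probability $\Prob(f\geqslant u)$ by it, produces exactly $2\exp(-\nicefrac{dt^2}{4s^2})$.

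The only genuine difficulty is the constant bookkeeping: one must invoke \citet{kontorovich2014concentration} with the normalization that yields $\tau^2=\sum L_{ij}^2\Delta_{ij}^2$ and check that Lemma~\ref{lemma:symmetrized-distance} is applied at the scale $\nicefrac{s}{\sqrt{d}}$; the factor $2$ in the statement is simply the two-sided slack, harmless for the one-sided claim. I would also note in passing that a more elementary argument sharpens the constant: as $\langle Vx, y\rangle=\sum_{i,j}V_{ij}x_j y_i$ is a sum of independent centered sub-Gaussian variables, its sub-Gaussian parameter is additive and equals $\nicefrac{s^2\|x\|^2\|y\|^2}{d}$, giving the stronger one-sided bound $\exp(-\nicefrac{dt^2}{2s^2})$, which trivially implies the stated inequality. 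I nevertheless present the route through \citet{kontorovich2014concentration} to keep the method uniform with the quadratic-form estimate of Lemma~\ref{lemma:bound-deviations-quadratic}, where the heavier machinery is genuinely required.
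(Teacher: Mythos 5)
Your proposal is correct and follows essentially the same route as the paper: both apply the sub-Gaussian McDiarmid inequality of \citet{kontorovich2014concentration} to the sum $\sum_{i,j} V_{ij} x_j y_i$, using Lemma~\ref{lemma:symmetrized-distance} to control the sub-Gaussian diameter of each entry, and arrive at the same variance proxy $\nicefrac{2s^2\|x\|^2\|y\|^2}{d}$ (the paper absorbs the weights $x_j y_i$ into the coordinates and uses a $1$-Lipschitz sum, whereas you keep them as coordinatewise Lipschitz constants --- an equivalent bookkeeping). Your side remark that a direct sum-of-independent-sub-Gaussians argument sharpens the constant to $\exp(-\nicefrac{dt^2}{2s^2})$ is also accurate, though not needed.
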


\begin{proof}
For any $1 \leqslant i, j \leqslant d$, set $X_{ij} = \frac{x_i V_{ij} y_j}{\|x\|\|y\|}$. Let $\mathcal{X} = \R^{d^2}$ endowed with the $\ell_1$ norm, let $X$ be the vector in $\mathcal{X}$ whose $(id+j)$-th coordinate is $X_{ij}$, and let the function $\varphi$ be defined by
$$\varphi: \mathcal{X} \ni Y \longmapsto \sum_{i=1}^{d^2} Y_{i}.$$
By the triangle inequality, $\varphi$ is a Lipschitz continuous function, with Lipschitz constant equal to $1$. Observe also that
$X_{ij}$ is a $\nicefrac{x_i^2s^2y_j^2}{d\|x\|^2\|y\|^2}$ sub-Gaussian. Thus, according to Lemma~\ref{lemma:symmetrized-distance}, the sub-Gaussian diameter of $X_{ij}$ is less than
$\nicefrac{\sqrt{2}x_isy_j}{\sqrt{d}\|x\|\|y\|}$. By \citet[][Theorem 1]{kontorovich2014concentration}, for any $t > 0$, one has
\[
\Prob\left(\varphi(X) \geqslant t\right) \leqslant 2 \exp \Bigg( - \frac{t^2}{2 \sum_{i,j=1}^d \frac{2s^2x_i^2y_j^2}{d\|x\|^2\|y\|^2}} \Bigg),
\]
that is
\[
\Prob\left(\frac{\langle Vx, y \rangle}{\|x\|\|y\|} \geqslant t\right) \leqslant 2 \exp \Big( - \frac{dt^2}{4s^2} \Big).
\]
\end{proof}

\begin{lemma}[Bound of moments of quadratic forms] \label{lemma:bound-deviations-quadratic}
Let $V$ be a $\R^{d \times d}$ matrix whose entries are i.i.d~$\nicefrac{s^2}{d}$ sub-Gaussian random variables, with variance $\nicefrac{1}{d}$. Then, for any $x \in \R^d$, $x \neq 0$,
\[
\Esp\left(\frac{\|Vx\|^{4}}{\|x\|^{4}}\right) \leqslant 1 + \frac{128s^4}{d} \quad \text{ and } \quad  \Esp\left(\frac{\|Vx\|^{8}}{\|x\|^{8}}\right) \leqslant 1 + \frac{384s^4}{d} + \frac{3072s^6}{d^2}.
\]
\end{lemma}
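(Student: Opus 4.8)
The plan is to diagonalise the quadratic form row-by-row and then reduce everything to the even moments of a single scalar sub-Gaussian variable. By homogeneity both ratios $\|Vx\|^4/\|x\|^4$ and $\|Vx\|^8/\|x\|^8$ are invariant under $x \mapsto \lambda x$, so I assume without loss of generality that $\|x\| = 1$. Writing $R_i = (Vx)_i = \sum_{j} V_{ij}x_j$ for the $i$-th coordinate, the rows of $V$ being i.i.d.\ makes $R_1,\dots,R_d$ i.i.d.; each is a linear combination of the $\nicefrac{s^2}{d}$ sub-Gaussian entries of one row, hence is centered and $\nicefrac{s^2}{d}$ sub-Gaussian (since $\sum_j x_j^2 = 1$), with variance exactly $\Esp(R_i^2) = \nicefrac{1}{d}$ by Lemma~\ref{lemma:variance-random-matrix}. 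With this notation $\|Vx\|^2 = \sum_{i=1}^d R_i^2$, so the two targets become $\Esp\big((\sum_i R_i^2)^2\big)$ and $\Esp\big((\sum_i R_i^2)^4\big)$.

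Next I would record even-moment bounds for a single $R_i$. Applying Lemma~\ref{lemma:bound-deviations-linear} with $y = e_i$ gives the tail $\Prob(|R_i| \geq t) \leq 2\exp(-\nicefrac{dt^2}{4s^2})$, and the tail-to-moment identity $\Esp(R_i^{2q}) = \int_0^\infty 2q\,t^{2q-1}\Prob(|R_i| \geq t)\,dt$ then yields bounds of the form $\Esp(R_i^{2q}) \leq c_q\, \nicefrac{s^{2q}}{d^{q}}$, exactly as was done for $(E_5)$ in the proof of Lemma~\ref{prop:bounds-moments}. This provides explicit constants for $\Esp(R_i^4)$, $\Esp(R_i^6)$ and $\Esp(R_i^8)$, alongside the exact value $\Esp(R_i^2) = \nicefrac{1}{d}$, which I will use in the off-diagonal terms.

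The fourth moment is short: expanding $\Esp\big((\sum_i R_i^2)^2\big) = \sum_i \Esp(R_i^4) + \sum_{i \neq i'}\Esp(R_i^2)\Esp(R_{i'}^2)$ by independence and substituting the above gives $d\,\Esp(R_1^4) + \nicefrac{d(d-1)}{d^2} \leq 1 + \nicefrac{128 s^4}{d}$, the first claim. For the eighth moment I follow the same scheme but must group terms: I expand $\Esp\big((\sum_i R_i^2)^4\big) = \sum_{i_1,i_2,i_3,i_4}\Esp(R_{i_1}^2 R_{i_2}^2 R_{i_3}^2 R_{i_4}^2)$ and classify the $d^4$ summands by the coincidence pattern of $(i_1,i_2,i_3,i_4)$, i.e.\ by the set partition it induces on four elements. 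There are five patterns---all four distinct, one pair, two pairs, one triple, all four equal---carrying $d(d-1)(d-2)(d-3)$, $6\,d(d-1)(d-2)$, $3\,d(d-1)$, $4\,d(d-1)$ and $d$ ordered multi-indices respectively. Independence factorises each block into products of $\Esp(R^2),\Esp(R^4),\Esp(R^6),\Esp(R^8)$, and substituting the single-index moment bounds collapses the sum to a bound of the stated form, with the all-distinct block producing the leading constant since $d(d-1)(d-2)(d-3)\,d^{-4} \leq 1$.

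The main obstacle is precisely this last assembly. Each of the four non-generic blocks has a count of strictly lower order in $d$, but is multiplied by higher single-index moments that carry extra powers of $\nicefrac{s^2}{d}$, and one must keep the combinatorial multiplicities aligned with the moment constants so that the corrections sum to the announced coefficients $384$ and $3072$. The delicate point is controlling the contributions of the multiple-coincidence patterns (notably the two-pair block, whose factor $\Esp(R^4)^2$ produces a term of order $\nicefrac{s^8}{d^2}$): these higher-order-in-$s$ pieces must be shown to be dominated by, and absorbed into, the $\nicefrac{s^4}{d}$ and $\nicefrac{s^6}{d^2}$ correction terms in the relevant regime ($s \geq 1$ fixed, $d$ large), which is where the bookkeeping requires genuine care; everything else is routine.
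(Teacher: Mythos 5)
Your skeleton is the same as the paper's: reduce to the row variables $R_i=\langle V_i,x\rangle/\|x\|$, get a sub-Gaussian tail for each (the paper does this via Lemma \ref{lemma:bound-deviations-linear}, exactly as you propose), convert tails to even moments with the exact value $\Esp(R_i^2)=\nicefrac{1}{d}$ reserved for the off-diagonal blocks, and expand the power of $\sum_i R_i^2$ using independence of the rows. Your treatment of the fourth moment coincides with the paper's and is correct.

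The gap is in the final assembly for the eighth moment, and it is not mere bookkeeping: with the moment bounds available from the tail ($\Esp(R_i^{2q})\leqslant q!(8s^2/d)^q$, as in \eqref{eq:bound-higher-order-moments}), your five-class expansion of $\bigl(\sum_i R_i^2\bigr)^4$ does \emph{not} collapse to the stated constants. The one-pair class alone contributes $6\,d(d-1)(d-2)\,\Esp(R^4)\,\Esp(R^2)^2\leqslant 6d^3\cdot\frac{128s^4}{d^2}\cdot\frac{1}{d^2}=\frac{768s^4}{d}$, already twice the announced $\nicefrac{384s^4}{d}$; worse, the two-pair class gives $3d(d-1)\Esp(R^4)^2=O(s^8/d^2)$ and the all-equal class gives $d\,\Esp(R^8)=O(s^8/d^3)$, and an $s^8$ term cannot be absorbed into $\nicefrac{3072s^6}{d^2}$ uniformly over $s\geqslant 1$. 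So the step ``substituting the single-index moment bounds collapses the sum to a bound of the stated form'' would fail. The reason your honest computation disagrees with the target is that the paper's own proof expands $\bigl(\sum_i\langle V_i,x\rangle^2\bigr)^3$, which is $\|Vx\|^6$ rather than $\|Vx\|^8$; the constants $384$ and $3072$ are those of the sixth moment (three partition classes), not the eighth. Your route is the correct one for the genuine eighth moment, but it proves a bound of the form $1+c_1s^4/d+c_2s^8/d^2$ with larger constants; that weaker statement is all that is actually needed downstream (Proposition \ref{prop:standard-resnet-verifies-assumptions} and $(E_6)$ only require the existence of some constant), so you should either prove that corrected bound or flag the mismatch rather than chase the stated coefficients.
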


\begin{proof}
The proof is similar to the one of Lemma \ref{lemma:bound-deviations-linear}, with $X_{ij} = \frac{V_{ij} x_j}{\|x\|}$, $\mathcal{X} = \R^d$, and 
$$\varphi_i: \mathcal{X} \ni X \mapsto \sum_{j=1}^d X_{ij}.$$
Each function $\varphi_i$ is a Lipschitz continuous function, with Lipschitz constant equal to $1$. 
Observe now that the random variable $X_{ij}$ is $\nicefrac{x_j^2s^2}{d\|x\|^2}$ sub-Gaussian. Thus, according to Lemma \ref{lemma:symmetrized-distance}, the sub-Gaussian diameter of $X_{ij}$ is less than
$\nicefrac{\sqrt{2}x_js}{\sqrt{d}\|x\|}$. Therefore, according to \citet[][Theorem 1]{kontorovich2014concentration}, for any $t > 0$,
\[
\Prob\left(\varphi_i(X) \geqslant t\right) \leqslant 2 \exp \Bigg( - \frac{t^2}{2 \sum_{j=1}^d \frac{2s^2x_j^2}{d\|x\|^2}} \Bigg),
\]
that is
\begin{equation*}  
\Prob \bigg( \frac{|\langle V_i, x \rangle|}{\|x\|} \geqslant t \bigg) \leqslant 2 \exp \bigg(- \frac{dt^2}{4s^2} \bigg).  
\end{equation*}
Hence \citep[see, e.g.,][]{Pauwels2020},
\begin{equation}    \label{eq:bound-higher-order-moments}
\Esp\bigg(\bigg(\frac{\langle V_i, x \rangle}{\|x\|}\bigg)^{2q}\bigg) \leqslant q! \bigg(\frac{8s^2}{d}\bigg)^q.    
\end{equation}
From identity \eqref{eq:proof-technical-lemma} in the proof of technical Lemma \ref{lemma:variance-random-matrix}, we obtain that, for $q=1$, 
\begin{equation}    \label{eq:equality-norm-two}
\Esp\bigg(\bigg(\frac{\langle V_i, x \rangle}{\|x\|}\bigg)^{2}\bigg) = \frac{1}{d},    
\end{equation}
which is an improvement by a factor $8s^2$ over the previous upper bound. To conclude, it remains to conclude $\|Vx\|^4$ and $\|Vx\|^8$ with the $\langle V_i, x \rangle$. To do so, observe that
\[
\|Vx\|^4 = \bigg( \sum_{i=1}^d \langle V_i, x \rangle^2 \bigg)^2 = \sum_{\substack{i,j=1 \\ i\neq j}}^d \langle V_i, x \rangle^2 \langle V_j, x \rangle^2 + \sum_{i=1}^d \langle V_i, x \rangle^4.
\]
Hence, by independence of the $(V_i)_{1 \leqslant i \leqslant d}$,
\begin{align*}
\Esp \bigg(\frac{\|Vx\|^4}{\|x\|^4}\bigg) 
&= \sum_{\substack{i,j=1 \\ i\neq j}}^d \Esp\bigg(\frac{\langle V_i, x \rangle^2}{\|x\|^2}\bigg) \Esp\bigg(\frac{\langle V_j, x \rangle^2}{\|x\|^2}\bigg) + \sum_{i=1}^d \Esp\bigg(\frac{\langle V_i, x \rangle^4}{\|x\|^4}\bigg) \\
&= d(d-1) \frac{1}{d^2} + d \frac{2 (8s^2)^2}{d^2} \leqslant 1 + \frac{128s^4}{d} \\
&\quad \mbox{(by \eqref{eq:bound-higher-order-moments} and \eqref{eq:equality-norm-two})}
\end{align*}
Similarly,
\[
\|Vx\|^8 = \bigg( \sum_{i=1}^d \langle V_i, x \rangle^2 \bigg)^3 = \sum_{\substack{i,j,k=1 \\ i\neq j \neq k}}^d \langle V_i, x \rangle^2 \langle V_j, x \rangle^2 \langle V_j, x \rangle^2 + \sum_{\substack{i,j=1 \\ i\neq j}}^d \langle V_i, x \rangle^2 \langle V_j, x \rangle^4 + \sum_{i=1}^d \langle V_i, x \rangle^8.
\]
Hence,
\begin{align*}
\Esp \bigg(\frac{\|Vx\|^8}{\|x\|^8}\bigg) 
&= \sum_{\substack{i,j,k=1 \\ i\neq j \neq k}}^d \Esp\bigg(\frac{\langle V_i, x \rangle^2}{\|x\|^2}\bigg) \Esp\bigg(\frac{\langle V_j, x \rangle^2}{\|x\|^2}\bigg) \Esp\bigg(\frac{\langle V_k, x \rangle^2}{\|x\|^2}\bigg) \\
& \qquad + \sum_{\substack{i,j=1 \\ i\neq j}}^d \Esp\bigg(\frac{\langle V_i, x \rangle^4}{\|x\|^4}\bigg) \Esp\bigg(\frac{\langle V_j, x \rangle^2}{\|x\|^2}\bigg) + \sum_{i=1}^d \Esp\bigg(\frac{\langle V_i, x \rangle^8}{\|x\|^8}\bigg) \\
&= d(d-1)(d-2) \frac{1}{d^2} + 3d(d-1) \frac{2 (8s^2)^2}{d^3} + d \frac{6 (8s^2)^3}{d^3} \\
&\leqslant 1 + \frac{384s^4}{d} + \frac{3072s^6}{d^2}.
\end{align*}
\end{proof}

\section{A Version of the Picard-Lindel\"{o}f Theorem}
\label{sec:picard-lindelof}

\begin{theorem}
\label{thm:picard-lindelof}
Assume that $f$ is Lipschitz continuous in its first argument and continuous in its second one. Then, for any $z \in \R^d$, the initial value problem
\begin{equation}\label{eq:initial_value_problem}
    dH_t = f(H_t, t)dt, \quad H_0 = z,
\end{equation}
admits a unique solution $H:[0,1] \to \R^d$.
\end{theorem}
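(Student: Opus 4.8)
The plan is to recast the initial value problem \eqref{eq:initial_value_problem} as a fixed-point equation and invoke the Banach fixed-point theorem. First I would observe that, since $f$ is continuous in its second argument and Lipschitz (hence continuous) in its first, a function $H \in C([0,1], \R^d)$ solves \eqref{eq:initial_value_problem} if and only if it satisfies the integral equation
\[
H_t = z + \int_0^t f(H_s, s)\, ds, \quad t \in [0,1].
\]
The forward direction follows by integrating the differential form; the converse holds because the right-hand side is continuously differentiable in $t$ with derivative $f(H_t, t)$, by the fundamental theorem of calculus applied to the continuous integrand $s \mapsto f(H_s, s)$.

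Next I would introduce the Picard operator $T: C([0,1], \R^d) \to C([0,1], \R^d)$ given by
\[
(T\phi)_t = z + \int_0^t f(\phi_s, s)\, ds.
\]
A short check shows that $T\phi$ is continuous (indeed Lipschitz) in $t$, so $T$ maps the Banach space $C([0,1], \R^d)$, equipped with the supremum norm $\|\cdot\|_\infty$, into itself. Crucially, the fixed points of $T$ are precisely the solutions of the integral equation, so it suffices to prove that $T$ has a unique fixed point.

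The main obstacle is that $T$ need not be a contraction for the plain supremum norm, since the Lipschitz constant $K$ of $f$ in its first argument may exceed $1$ on the interval $[0,1]$. To circumvent this, I would bound the iterates of $T$: an induction on $n$, using the Lipschitz property under the integral sign, yields
\[
\big\| (T^n \phi)_t - (T^n \psi)_t \big\| \leqslant \frac{(Kt)^n}{n!}\, \|\phi - \psi\|_\infty, \quad t \in [0,1],
\]
and therefore $\|T^n \phi - T^n \psi\|_\infty \leqslant \frac{K^n}{n!}\, \|\phi - \psi\|_\infty$. Since $K^n/n! \to 0$ as $n \to \infty$, the iterate $T^n$ is a strict contraction for $n$ large enough.

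By completeness of $C([0,1], \R^d)$ and the generalized Banach fixed-point theorem (a self-map some iterate of which is a contraction on a complete metric space has a unique fixed point), $T$ admits a unique fixed point $H$, which is the unique solution of \eqref{eq:initial_value_problem} on $[0,1]$. An equivalent route replaces $\|\cdot\|_\infty$ by the Bielecki weighted norm $\|\phi\|_\lambda = \sup_{t \in [0,1]} e^{-\lambda t}\|\phi_t\|$ with $\lambda > K$, under which $T$ itself becomes a contraction; I expect the iterate estimate above to be the cleanest to write out, the factorial decay being exactly what tames the large Lipschitz constant.
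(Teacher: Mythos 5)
Your proof is correct, and it takes a genuinely different (though equally standard) route from the paper's. The paper also reduces to the integral equation and applies the Banach fixed-point theorem, but it obtains a contraction by restricting the Picard operator to subintervals $[s,t]$ of length at most $\nicefrac{1}{(2K_f)}$, where the Lipschitz bound $K_f(t-s) < 1$ makes $\Psi$ a strict contraction, and then concatenates the resulting local solutions to cover $[0,1]$. You instead work globally on all of $[0,1]$ and tame the possibly large Lipschitz constant $K$ through the iterate estimate $\|T^n\phi - T^n\psi\|_\infty \leqslant \frac{K^n}{n!}\|\phi-\psi\|_\infty$, invoking the generalized fixed-point theorem for maps with a contractive iterate (or, equivalently, the Bielecki weighted norm). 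The trade-off: your argument avoids the gluing step entirely --- in the paper one must still check that the concatenated function solves the ODE across the junction points and that uniqueness propagates from one subinterval to the next, which is stated somewhat tersely --- at the cost of needing the slightly stronger ``some iterate is a contraction'' version of Banach's theorem (whose proof from the plain version is a two-line observation: the unique fixed point of $T^n$ is fixed by $T$ since $T^n(Tx^*) = T(T^nx^*) = Tx^*$). Your induction for the iterate bound and the equivalence between the ODE and the integral equation are both sound, so either write-up would serve.
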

\begin{proof}
    Let $ \mathscr{C}([s,t], \R^{d})$ be the set of continuous functions from $[s,t]$ to $\R^{d}$. For any $[s,t] \subset [0,1]$, $\zeta \in \R^{d}$, let  $\Psi$ be the function
    \begin{align*}
        \Psi: \mathscr{C}([s,t], \R^{d}) &\to \mathscr{C}([s,t], \R^{d}) \\
        Y &\mapsto \big(v \mapsto \zeta + \int_{s}^v f(Y_u, u) du \big).
    \end{align*}
For any $Y, Y' \in \mathscr{C}([s,t], \R^{d})$, $v \in [s,t]$, one has, denoting by $K_f$ the Lipschitz constant of $f$ in its first argument,
\begin{align*}
    \| \Psi(Y)_v - \Psi(Y')_v\| &\leqslant \int_{s}^v \big\| \big(f(Y_u, u) - f(Y'_u, u) \big)du \big\| \\
    &\leqslant \int_{s}^v K_{f} \|Y_u - Y'_u\|du \\
    & \leqslant K_{f} \int_{s}^v \|Y - Y'\|_{\infty}du \\
    & \leqslant K_{f} \| Y-Y'\|_\infty (t-s).
\end{align*}
This yields
\[\| \Psi(Y) - \Psi(Y')\|_{\infty} \leqslant K_f (t-s) \| Y-Y'\|_\infty, \]
which means that the function $\Psi$ is Lipschitz continuous on $\mathscr{C}([s,t], \R^{d})$ endowed with the supremum norm, with Lipschitz constant $K_{f}(t-s)$. So, on any interval $[s,t]$ of length smaller than $\delta = \nicefrac{1}{2} K_f$, the function $\Psi$ is a contraction. Thus, by the Banach fixed-point theorem, for any initial value $\zeta$, $\Psi$ has a unique fixed point. Hence, there exists a unique solution to \eqref{eq:initial_value_problem} on any interval of length $\delta$ with any initial condition. To obtain a solution on $[0,1]$ it is sufficient to concatenate these solutions.
\end{proof}

\section{Detailed Experimental Setting and Additional Plots}
\label{apx:experimental-setting}

Our code is available at \url{https://github.com/PierreMarion23/scaling-resnets}.

To obtain Figures \ref{fig:scaling_init_norm} to \ref{fig:scaling_init_gradient}, we initialize ResNets from \texttt{res-3} with the hyperparameters of Table \ref{tab:hyperparams-1}.
\begin{table}[ht]
    \centering
    \begin{tabular}{cc}
    \toprule
    {\bf Name} & {\bf Value} \\
    \midrule
    $d$ & $40$ \\
    $n_\textnormal{in}$ & $64$ \\
    $n_\textnormal{out}$ & $1$ \\
    $L$ & $10$ to $1000$ \\
    $\beta$ & $0.25, 0.5, 1$ \\
    weight distribution & $\mathcal{U}(-\sqrt{3/d}, \sqrt{3/d})$ \\
    data distribution & standard Gaussian \\
    \bottomrule
    \end{tabular}
    \caption{Hyperparameters of Figures \ref{fig:scaling_init_norm} to \ref{fig:scaling_init_gradient}}
    \label{tab:hyperparams-1}
\end{table}

\begin{figure}[!p]
    \centering
    \includegraphics[width=\textwidth]{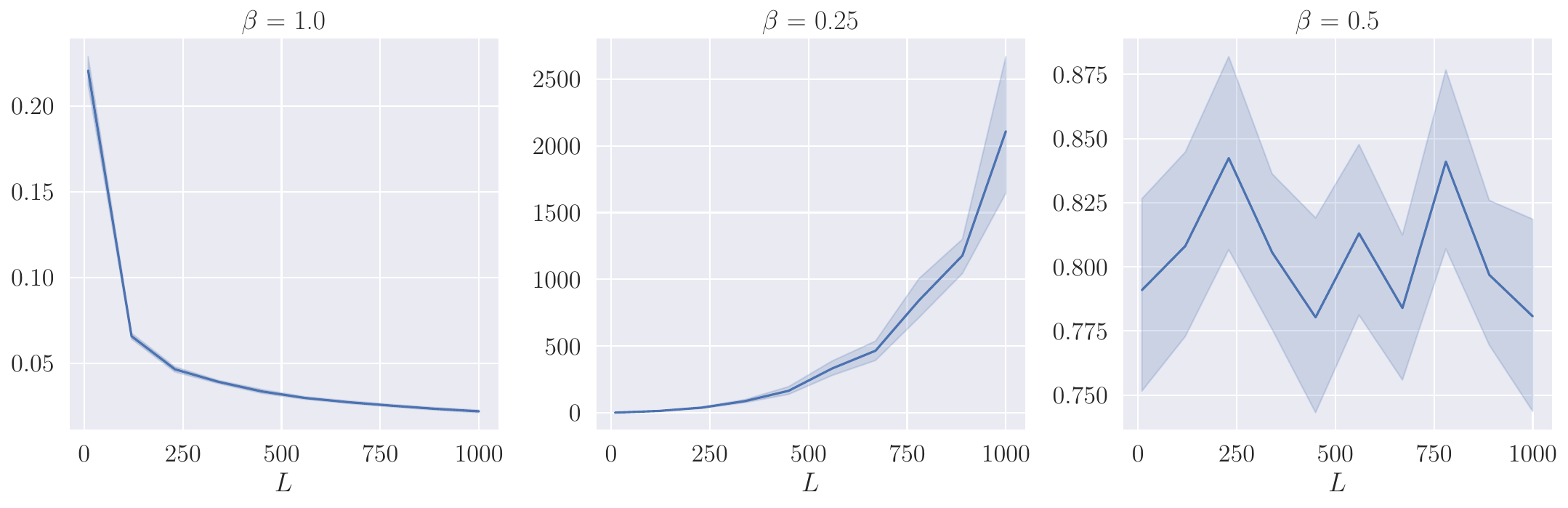}
    \caption{Evolution of $\|h_L - h_0\|/\|h_0\|$ as a function of $L$ for different values of~$\beta$ and for the model $h_{k+1} = h_k + \alpha_L \sigma(W_k h_k)$. Hyperparameters are as in Figure \ref{fig:scaling_init_norm}.}
    \label{fig:scaling_init_norm_inner_only}
\end{figure}

\begin{figure}[!p]
    \centering
    \begin{subfigure}[b]{0.49\textwidth}    
        \centering
        \includegraphics[width=\textwidth]{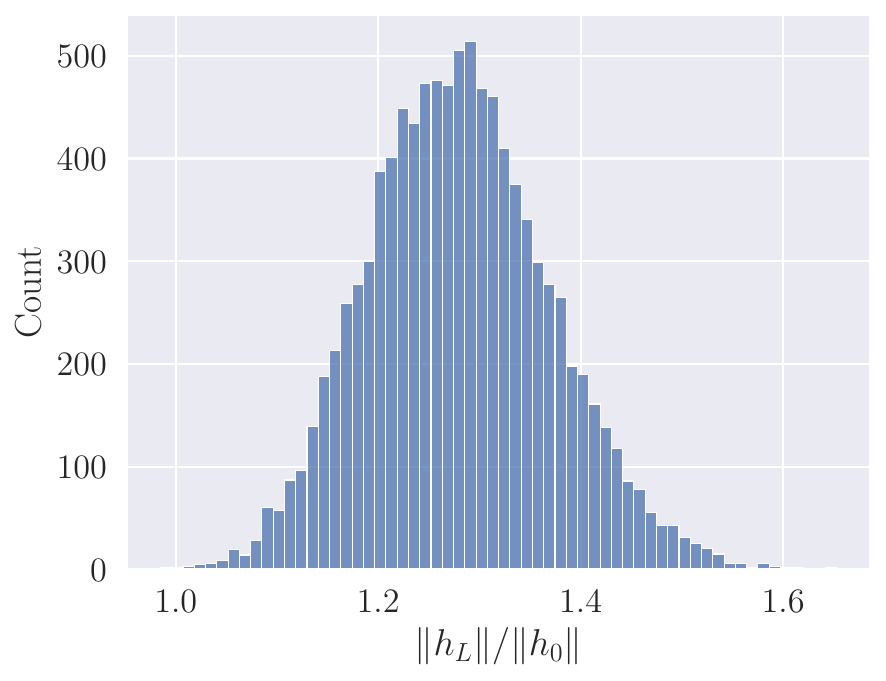}
        \caption{Distribution of $\|h_L\|/\|h_0\|$}
        \label{fig:distribution-forward_inner_only}
    \end{subfigure}
    \hfill
    \begin{subfigure}[b]{0.49\textwidth}
        \centering
        \includegraphics[width=\textwidth]{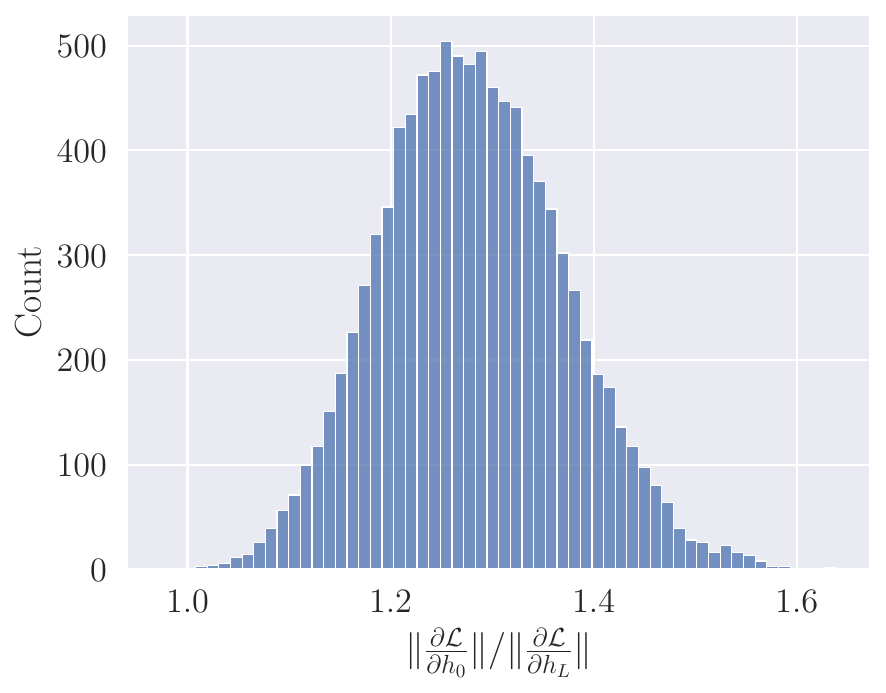}
        \caption{Distribution of $\|\frac{\partial \mathscr{L}}{\partial h_0}\|/ \|\frac{\partial \mathscr{L}}{\partial h_L}\|$}
        \label{fig:distribution-gradients_inner_only}
    \end{subfigure}
   \caption{Empirical distributions of the norms for the model $h_{k+1} = h_k + \alpha_L \sigma(W_k h_k)$. Hyperparameters are as in Figure~\ref{fig:distributions}.}
\end{figure}

\begin{figure}[!p]
    \centering
    \includegraphics[width=\textwidth]{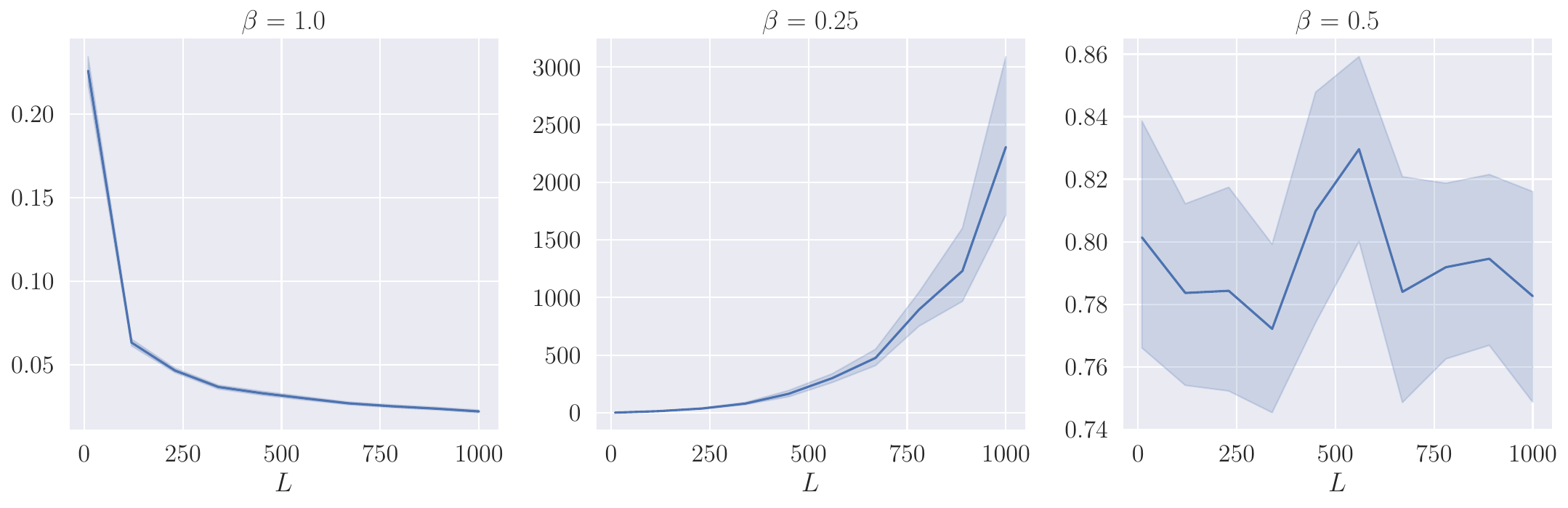}
    \caption{Evolution of $\|p_0 - p_L\|/\|p_L\|$ as a function of $L$ for different values of~$\beta$ and for the model $h_{k+1} = h_k + \alpha_L \sigma(W_k h_k)$. Hyperparameters are as in Figure~\ref{fig:scaling_init_gradient}.}
    \label{fig:scaling_init_gradient_inner_only}
\end{figure}

\begin{figure}[!p]
    \centering
    \includegraphics[width=\textwidth]{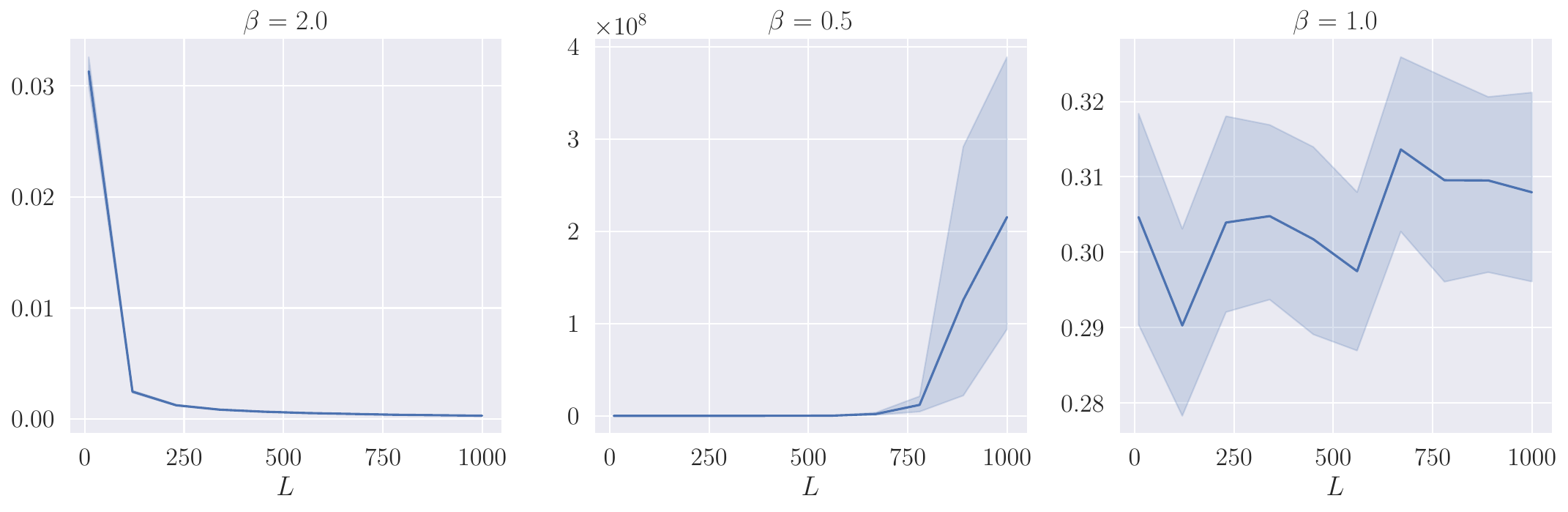}
    \caption{Evolution of $\|h_L-h_0\|/\|h_0\|$ as a function of $L$ for different values of~$\beta$ and a smooth initialization of the model $h_{k+1} = h_k + \alpha_L \sigma(W_k h_k)$. Hyperparameters are as in Figure~\ref{fig:smooth_scaling_init_norm}.}
    \label{fig:smooth_scaling_init_norm_inner_only}
\end{figure}

\begin{figure}[!p]
    \centering
    \includegraphics[width=\textwidth]{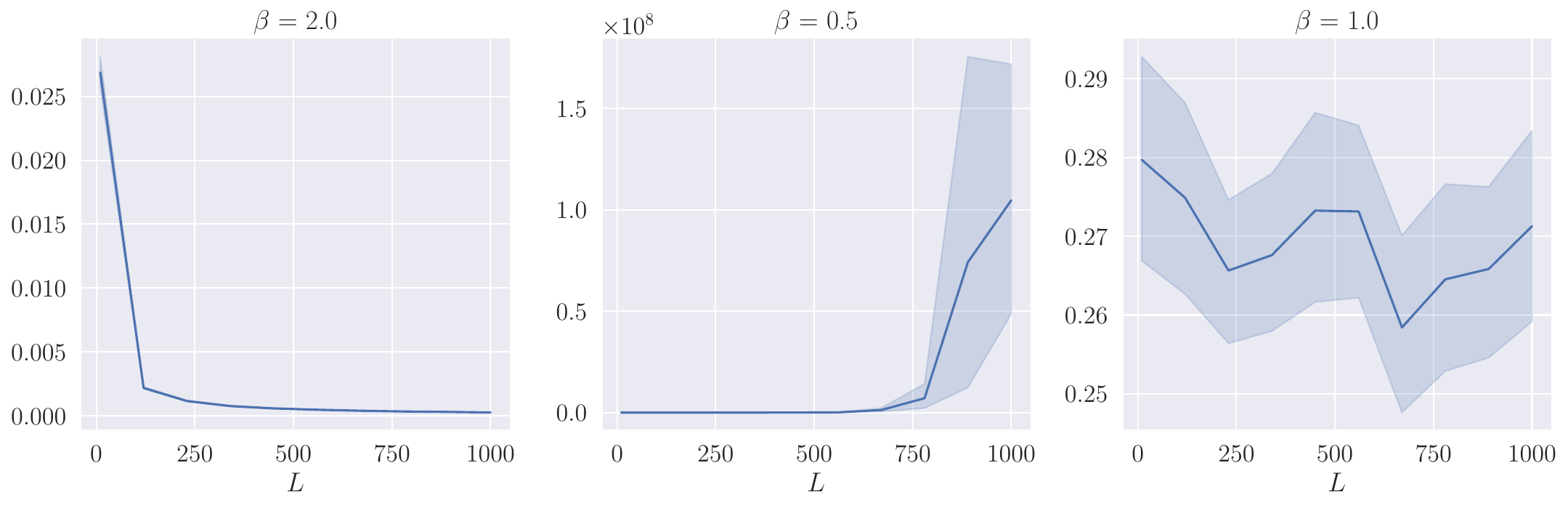}
    \caption{Evolution of $\|p_0-p_L\|/\|p_L\|$ as a function of $L$ for different values of~$\beta$ and a smooth initialization of the model $h_{k+1} = h_k + \alpha_L \sigma(W_k h_k)$. Hyperparameters are the same as in Figure \ref{fig:smooth_scaling_init_gradient}.}
    \label{fig:smooth_scaling_init_gradient_inner_only}
\end{figure}

Each experiment is repeated $50$ times, with independent data and weight sampling. 

For Figures \ref{fig:smooth_scaling_init_norm} and \ref{fig:smooth_scaling_init_gradient}, we take the same hyperparameters except for $\beta$, which now takes values in $\{0.5, 1, 2\}$, and for the weight distribution. The weights are now initialized as discretizations of a Gaussian process. More precisely, each entry of $\mathscr{V}$ and $\mathscr{W}$ is an independent Gaussian process with zero mean and an RBF kernel of variance $10^{-2}$.

We also perform the same experiments with the model $h_{k+1} = h_k + \alpha_L \sigma(W_k h_k)$, and report the result in Figures \ref{fig:scaling_init_norm_inner_only}--\ref{fig:smooth_scaling_init_gradient_inner_only}. Although this formulation is not covered by our theoretical results (see discussion at the end of Section \ref{subsec:model}), we observe qualitatively similar results as Figures \ref{fig:scaling_init_norm}--\ref{fig:smooth_scaling_init_gradient}.

To obtain Figure \ref{fig:heatmap}, we take the hyperparameters of Table \ref{tab:hyperparams-2}.

\begin{table}[ht]
\centering
\begin{tabular}{cc}
    \toprule
    {\bf Name} & {\bf Value} \\
    \midrule
    $d$ & $40$ \\
    $n_\textnormal{in}$ & $64$ \\
    $n_\textnormal{out}$ & $1$ \\
    $L$ & $1000$ \\
    $\beta$ & $0.2$ to $1.3$ \\
    weight distribution & fractional Brownian motion \\
     & with Hurst index from $0.05$ to $0.97$ \\
    data distribution & standard Gaussian \\
    \bottomrule
\end{tabular}
    \caption{Hyperparameters of Figure \ref{fig:heatmap}}
    \label{tab:hyperparams-2}
\end{table}

More precisely, for each $1 \leqslant i,j \leqslant d$, we let $(V_{k+1, i, j})_{0 \leqslant k \leqslant L-1}$ be the increments of a fractional Brownian motion (fBm), where the various fBm involved are independent. The procedure is the same for $w$.

In Figure \ref{fig:heatmap-trained}, we use \texttt{res-1}, with the hyperparameters of Table \ref{tab:hyperparams-3}. Each residual layer also includes a bias term, initialized to zero, and trained with the same learning rate as the other weights. We train on MNIST\footnote{\url{http://yann.lecun.com/exdb/mnist}} and CIFAR-10\footnote{\url{https://www.cs.toronto.edu/~kriz/cifar.html}} using the Adam optimizer \citep{kingmaAdamMethodStochastic2017} for $10$ epochs. The learning rate is divided by $10$ after $5$ epochs. The best performance on the learning rate grid is reported in the figure. 
\begin{table}[ht]
    \centering
    \begin{tabular}{cc}
    \toprule
    {\bf Name} & {\bf Value} \\
    \midrule
    $d$ & $30$ \\
    $L$ & $1000$ \\
    $\beta$ & $0.2$ to $1.3$ \\
    weight distribution & fractional Brownian motion \\
     & with Hurst index from $0.05$ to $0.97$ \\
    learning rate grid & $10^{-4}, 10^{-3}, 10^{-2}, 10^{-1}, 1$ \\
    \bottomrule
    \end{tabular}
    \caption{Hyperparameters of Figure \ref{fig:heatmap-trained}}
    \label{tab:hyperparams-3}
\end{table}

\begin{figure}[!p]
     \centering
     \begin{subfigure}[b]{0.32\textwidth}
         \centering
         \includegraphics[width=\textwidth]{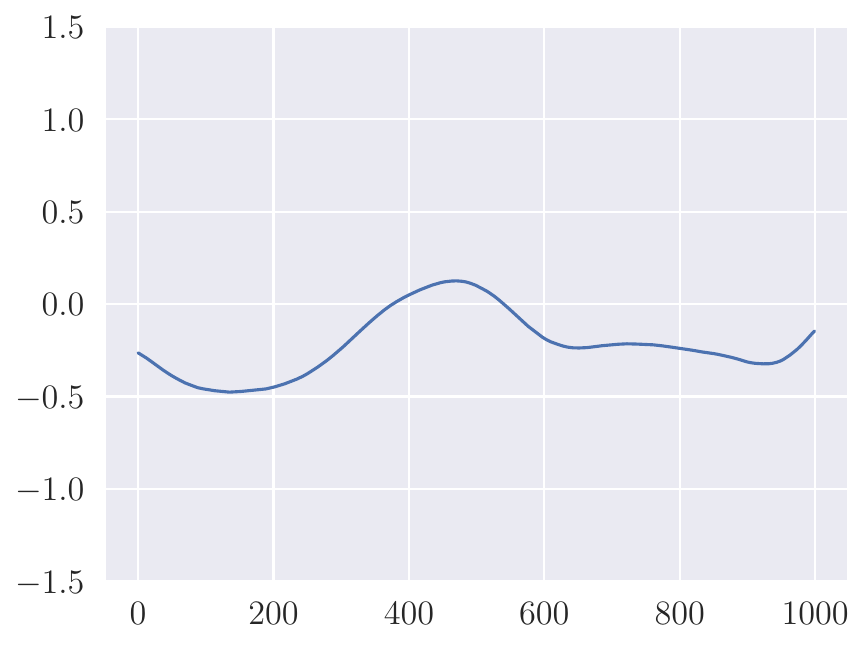}
         \caption{$\beta=1$, smooth initialization}
     \end{subfigure}
     \hfill
     \begin{subfigure}[b]{0.32\textwidth}
         \centering
         \includegraphics[width=\textwidth]{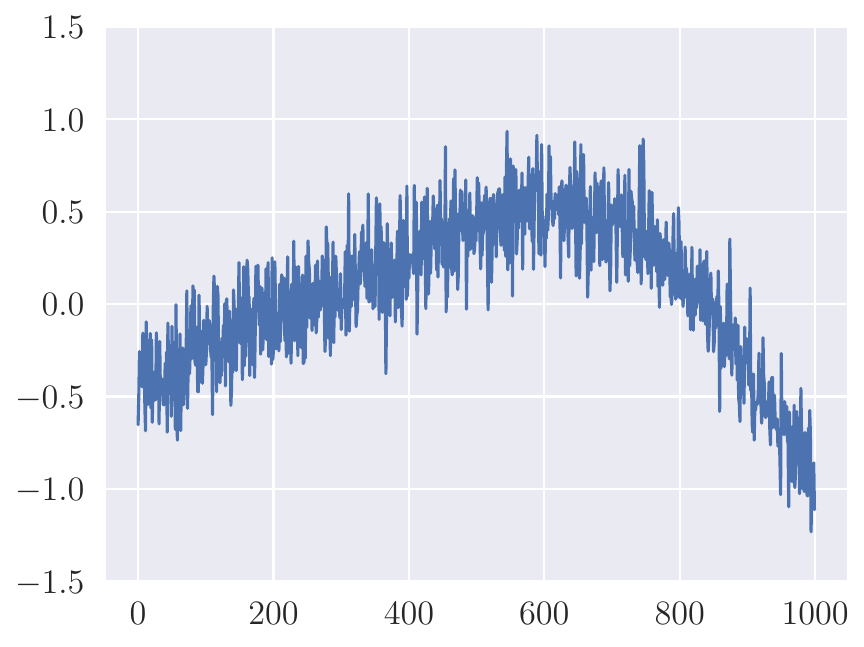}
         \caption{$\beta=1$, i.i.d.~initialization}
     \end{subfigure}
     \hfill
     \begin{subfigure}[b]{0.32\textwidth}
         \centering
         \includegraphics[width=\textwidth]{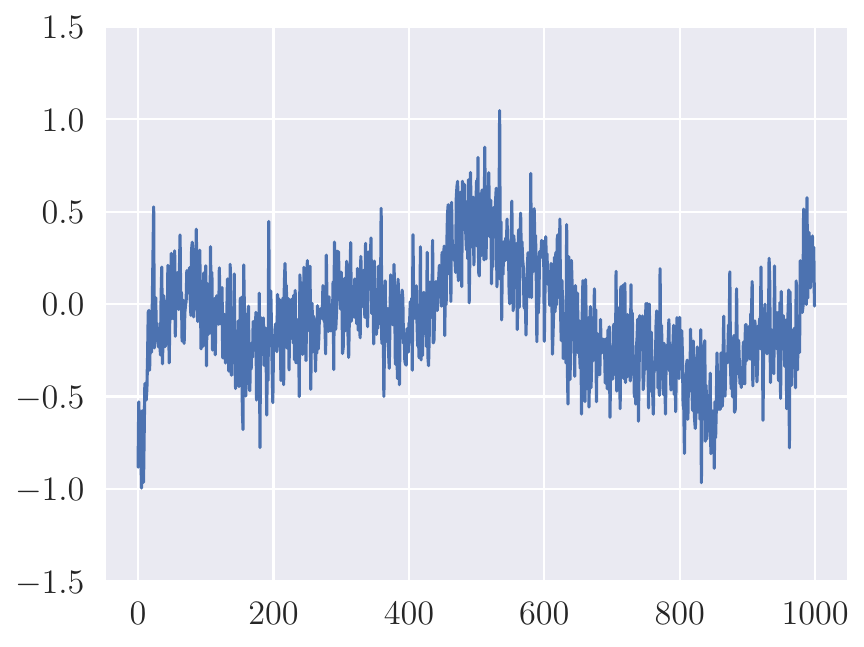}
         \caption{$\beta=\nicefrac{1}{2}$, i.i.d.~initialization}
     \end{subfigure}
     \caption{Plot of a given coordinate of $w_k$, after training, as a function of the layer index $k$ ranging from $1$ to the depth $L=1000$ for three different choices of $\beta$ and initializations. A bias term is trained in each residual layer.}
     \label{fig:ex-weights-after-training-with-bias}
\end{figure}

Figure \ref{fig:ex-weights-after-training} is obtained by plotting a random coordinate of $w_k$, after training on MNIST. While there is no bias term in the model trained for this figure, we show below the result of the same experiment when additionally training a zero-initialized bias term in each residual layer. We observe that adding a bias term does not qualitatively change the results.

\bibliography{sample}

\end{document}